\documentclass{article}
\PassOptionsToPackage{table}{xcolor}
\usepackage{iclr2027_conference,times}

\usepackage[hypertexnames=false]{hyperref}
\usepackage{xurl}
\usepackage{graphicx}
\usepackage{amsmath,amssymb,amsthm}
\usepackage{algorithm}
\usepackage{algpseudocode}
\usepackage{mathtools}
\usepackage{float}
\usepackage{enumitem}
\usepackage{booktabs}
\usepackage{multirow}
\usepackage{tabularx}
\usepackage{wrapfig}

\usepackage[normalem]{ulem}
\newcommand{\tightuline}[1]{{\setlength{\ULdepth}{1.1pt}\uline{#1}}}

\newcommand{\R}{\mathbb{R}}

\newcommand{\Np}{\mathbb{N}^+}
\newcommand{\Z}{\mathbb{Z}}
\newcommand{\clip}{\operatorname{clip}}
\newcommand{\round}{\operatorname{round}}
\newcommand{\ctx}{\mathrm{ctx}}

\newcommand{\poly}{\operatorname{poly}}

\newcommand{\enc}{\operatorname{enc}}
\newcommand{\dec}{\operatorname{dec}}

\newcommand{\VanillaFM}{\textsc{VanillaFM}}
\newcommand{\SelfCondFM}{\textsc{SelfCondFM}}
\newcommand{\RKFM}{\textsc{RKFM}}

\newtheorem{theorem}{Theorem}
\newtheorem{corollary}{Corollary}
\newtheorem{lemma}{Lemma}
\newtheorem{proposition}{Proposition}
\theoremstyle{definition}

\newcommand{\Sref}[1]{\S\ref{#1}}
\newcommand{\Tref}[1]{Table~\ref{#1}}

\setlist[itemize]{leftmargin=*}
\setlist[enumerate]{leftmargin=*}
\setlist[itemize]{leftmargin=*, nosep}
\setlist[enumerate]{leftmargin=*, nosep}

\title{Self-Conditioning Enables Deep Sequential Reasoning in Flow Matching}
\author{\textbf{Zeyang Zhang}$^{1,2}$\thanks{Equal contribution.} \quad
  \textbf{Linxiang Gao}$^{2}$\footnotemark[1] \quad
  \textbf{Meiqi Gu}$^{2}$\footnotemark[1] \quad
  \textbf{Xingyan Chen}$^{2}$ \quad
  \textbf{Minrui Luo}$^{4,2}$ \\\textbf{Kaifeng Lyu}$^{2}$\thanks{Corresponding authors.} \quad
  \textbf{Tianxing He}$^{2,3,1}$\footnotemark[2] \\
  $^1$Shanghai Qi Zhi Institute \\
  $^2$Institute for Interdisciplinary Information Sciences, Tsinghua University \\
  $^3$Xiongan AI Institute \quad $^4$Georgia Institute of Technology \\
  \texttt{\{zhangzey24,glx24,gumq24,chenxy24\}@mails.tsinghua.edu.cn} \\
  \texttt{mluo89@gatech.edu}\quad \texttt{\{klyu,hetianxing\}@mail.tsinghua.edu.cn}
}
\date{\today}

\iclrfinalcopy

\begin{document}

\maketitle

\lhead{Preprint}

\begin{abstract}
Recent work has explored visual reasoning through image generation, where models solve visual-reasoning problems by directly generating solution images. Intuitively, iterative samplers such as flow matching are well suited to this task, since each sampling step may resemble one reasoning step. However, in this work, from a circuit complexity perspective, we show that this intuition is misleading. We prove that as long as the model outputs satisfy certain mild regularity conditions, the trajectory of the vanilla flow-matching sampler can be simulated by a constant number of sequential rounds of parallel model evaluations. Consequently, additional sampling steps cannot increase the expressive power of a constant-depth backbone model under the standard complexity-theoretic model of transformers. 
We then reveal that \emph{self-conditioning}, which conditions the model on the previous step's estimate of the final image, plays a previously unrecognized role in removing this bottleneck.
Controlled experiments on diverse visual-reasoning tasks show that flow matching with self-conditioning achieves higher accuracy and greater reasoning depth than vanilla flow matching, and its performance improves with more sampling steps. Theoretically, we prove that with self-conditioning, a constant-depth backbone can achieve expressive power equivalent to that of space-bounded Turing machines. 
Our study suggests a design principle for future visual reasoning research: carry more information between sampling steps, so that additional steps translate into deeper reasoning.\footnote{Our code is available in the \href{https://github.com/imzzy1201/self-conditioning-deep-reasoning/}{GitHub repository}.}
\end{abstract}

\section{Introduction}

Recent advances in visual generative models have sparked growing interest in
\emph{generative visual reasoning}, where models plan and reason directly in image space~\citep{su2025thinking,hu2024visual,tong2026thinking}. This paradigm
is natural for spatial planning, combinatorial optimization, and structured rendering~\citep{he2025diffthinker,dai2026endocot}.
Intuitively, flow matching~\citep{lipman2022flow,liu2022flow} is well suited to this setting: it turns noise into an image through $K$ evaluations of a velocity model, providing $K$ sequential rounds for reasoning to unfold. \citet{he2025diffthinker} put this intuition into practice, solving visual reasoning tasks via flow-matching image generation.

However, in this work we show that this apparent reasoning depth is an illusion.
Through \emph{circuit complexity} analysis, we find that the expressive power of vanilla flow matching ($\VanillaFM$) does not grow with the number of sampling steps $K$.
The intuition lies in how flow matching generates images: the sampling process discretizes an ordinary differential equation (ODE), making only a small incremental update of scale $O(1/K)$ to its state at each step.
Because the full trajectory is governed by this continuous ODE, we show that it can be simulated by an alternative ODE solver in a constant number of steps, regardless of how large $K$ is.
Consequently, running $\VanillaFM$ with a constant-depth backbone leaves its expressive power collapsed to constant reasoning depth (\Sref{sec:vanillabarrier}).

To overcome this barrier, we uncover a previously unrecognized role of \emph{self-conditioning}~\citep{chen2022analog}.
In flow matching with self-conditioning ($\SelfCondFM$), the model still simulates the ODE, but its input is augmented with a prediction of its own sampling trajectory's endpoint, i.e., an estimate of the final image
(\Sref{sec:scmechanism}).
This endpoint prediction acts as an internal scratchpad on which the model records and refines intermediate partial solutions.
Unlike the continuous state, which changes only by $O(1/K)$, this scratchpad can change abruptly between steps.
Therefore, the resulting trajectory can no longer be simulated by an ODE solver in a constant number of steps.
This prevents the collapse of expressive power, enabling deep sequential reasoning~(\Sref{sec:scpower}).

\begin{figure*}[t]
    \centering
    \includegraphics[width=\textwidth]{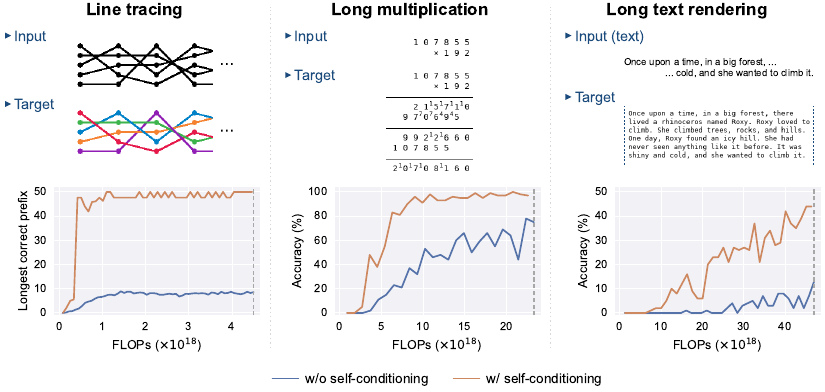}
    \vspace{-0.25in}
    \caption{Controlled experiments on three synthetic tasks: line tracing, long multiplication, and long text rendering (\Sref{sec:theory_aligned_tasks}). The top row shows an example input and target output for each task, and the bottom row plots performance against training FLOPs. Across all three tasks, self-conditioning improves the performance of flow matching by large margins.}
    \label{fig:theory_aligned_tasks}
    \vspace{-0.1in}
\end{figure*}

We establish this advantage of self-conditioning both theoretically and empirically.
Theoretically, we prove that self-conditioning can lift the expressive power of flow matching from constant-depth circuits to space-bounded Turing machines (\Sref{sec:expressivity}).
Empirically, controlled experiments on diverse synthetic tasks demonstrate that self-conditioning substantially improves the performance of flow matching (Figure~\ref{fig:theory_aligned_tasks}), and its advantage also extends to broader visual-reasoning tasks (\Sref{sec:experiments}).
Our contributions are as follows:

\begin{itemize}[leftmargin=*]
    \item \textbf{Self-conditioning as an internal scratchpad.} We reveal a previously unrecognized role of self-conditioning in generative reasoning: rather than serving merely as an empirical regularizer, its endpoint prediction provides an internal scratchpad that carries rich information, enabling deep sequential computation across sampling steps.
    \item \textbf{Circuit complexity separation.} We formalize this mechanism through circuit complexity analysis. We prove that as long as the model outputs satisfy certain mild regularity conditions, $\VanillaFM$ with an $\mathsf{AC}^0$ or $\mathsf{TC}^0$ backbone collapses to constantly many sequential model evaluations, deciding only languages in the backbone's circuit complexity class regardless of the number of sampling steps (\Sref{sec:vanillabarrier}). In contrast, $\SelfCondFM$ enables an $\mathsf{AC}^0$ model to simulate space-bounded Turing machines, deciding $\mathsf{P/poly}$ with polynomial steps and $\mathsf{PSPACE/poly}$ with at most exponential steps (\Sref{sec:scpower}). These results extend to more general settings (\Sref{sec:theoryextensions}).
    \item \textbf{Empirical validation.} We support these theoretical results through controlled experiments on three synthetic tasks with known circuit complexity: $\mathsf{NC}^1$-complete line tracing, and $\mathsf{TC}^0$-complete long multiplication and long text rendering (\Sref{sec:theory_aligned_tasks}). Under matched training FLOPs, $\SelfCondFM$ outperforms $\VanillaFM$ by large margins (Figure~\ref{fig:theory_aligned_tasks}). Moreover, in agreement with our theory, $\SelfCondFM$ keeps improving with more sampling steps, whereas $\VanillaFM$ saturates early (Figure~\ref{fig:nfe_scaling}). Self-conditioning also brings consistent improvements on five broader visual-reasoning tasks from~\citet{he2025diffthinker} (\Sref{sec:diffthinker_tasks}).
\end{itemize}

\section{Preliminaries}
\label{sec:preliminaries}

We first review flow matching and circuit complexity (\Sref{sec:background}), then formalize the modeling conventions and regularity conditions underlying our theoretical analysis (\Sref{sec:fmsetup}).

\subsection{Background}
\label{sec:background}

\paragraph{Flow matching.}
\label{sec:flowmatching}

Flow matching~\citep{lipman2022flow,liu2022flow} is a generative modeling framework. Given a data distribution $p_{\text{data}}$ and a prior $p_{\text{prior}}$, we define the probability path $p_t$ as the distribution of $\tilde x_t:=tx_{\text{data}}+(1-t)\epsilon$ for independent random vectors $x_{\text{data}}\sim p_{\text{data}}$ and $\epsilon\sim p_{\text{prior}}$. The velocity field that generates this path is given by $v(x,t):=\mathbb{E}_{(x_{\text{data}},\epsilon)\sim p(\cdot\mid \tilde x_t=x)}[x_{\text{data}}-\epsilon]$. Flow matching aims to find a parameterized field $u_\theta(x,t)$ to approximate $v(x,t)$, so that the solution of the ODE $\frac{\mathrm{d}x}{\mathrm{d}t}=u_\theta(x,t)$ with $x|_{t=0}\sim p_{\text{prior}}$ follows the marginals $p_t$ for $t\in[0,1]$.

Flow-matching models are trained by minimizing the conditional flow-matching loss $\mathcal{J}_{\text{CFM}}:=\mathbb{E}_{t\sim\mathcal{U}[0,1],\epsilon\sim p_{\text{prior}},x_{\text{data}}\sim p_{\text{data}}}\|u_\theta(\tilde x_t,t)-(x_{\text{data}}-\epsilon)\|_2^2$. \citet{lipman2022flow} prove that minimizing $\mathcal{J}_{\text{CFM}}$ is equivalent to minimizing $\mathbb{E}_{t,\epsilon,x_{\text{data}}}\|u_\theta(\tilde x_t,t)-v(\tilde x_t,t)\|_2^2$, so the trained model regresses onto the target field $v$. At inference time, a sample is generated by solving the ODE with a numerical scheme such as Euler's method~\citep{hairer1993solving,liu2022flow,esser2024scaling}.

For conditional generation, such as text-to-image and image editing, the model is conditioned on context $\ctx$, with $p_{\text{data}}$ now a joint distribution over pairs $(\ctx,x_{\text{data}})$. This amounts to learning a velocity field $u_\theta(x,t;\ctx)$ minimizing $\mathcal{J}_{\text{cond}}:=\mathbb{E}_{t,\epsilon,(\ctx,x_{\text{data}})}\|u_\theta(\tilde x_t,t;\ctx)-(x_{\text{data}}-\epsilon)\|_2^2$.

\paragraph{Circuit complexity.}
\label{sec:circuitcomplexity}
We measure the expressive power of a model by the complexity of the languages it can decide. A \emph{language} is a subset $\mathcal{L}\subseteq\{0,1\}^*$, and to \emph{decide} \(\mathcal L\) is to determine whether a given input \(w\in\{0,1\}^m\) belongs to \(\mathcal L\). We abstract the model as a \emph{circuit family} $(\mathcal{C}_m)_{m\ge1}$, where each $\mathcal{C}_m$ is a Boolean circuit that handles inputs of length $m$~\citep{arora2009computational}. The \emph{size} of a circuit is the number of gates and its \emph{depth} is the length of the longest input-to-output path. Depth measures sequential computation, whereas size measures total work.

For $k\ge0$, $\mathsf{NC}^k$, $\mathsf{AC}^k$, and $\mathsf{TC}^k$ denote the non-uniform classes of languages decided by polynomial-size, depth-$O(\log^k m)$ circuit families with bounded-fan-in Boolean, unbounded-fan-in Boolean, and unbounded-fan-in Boolean plus MAJORITY gates, respectively. We call circuits of unbounded-fan-in AND, OR, NOT, and MAJORITY gates \emph{majority circuits}. The class $\mathsf{NC}:=\bigcup_{k\ge0}\mathsf{NC}^k$ comprises the efficiently parallelizable problems. The \emph{advice classes} $\mathsf{P/poly}$ and $\mathsf{PSPACE/poly}$ consist of languages decided by polynomial-time and polynomial-space Turing machines, respectively, with polynomial-length advice depending only on the input length. Standard relations among these classes, such as $\mathsf{NC}\subseteq\mathsf{P/poly}\subseteq\mathsf{PSPACE/poly}$, are collected in Appendix~\ref{app:proofconventions}.

These classes bound the power of common neural network architectures: a constant-depth log-precision transformer without chain-of-thought (CoT) lies in $\mathsf{TC}^0$~\citep{merrill2022saturated,merrill2023parallelism,chiang2024transformers}, and variants such as unique hard-attention transformers~\citep{hao2022formal} and constant-precision transformers~\citep{li2024chain} lie in $\mathsf{AC}^0$ instead. Surpassing these bounds requires more layers or, as with CoT, additional sequential calls~\citep{li2024chain}. This motivates studying whether flow matching, which sequentially invokes such a network many times, extends what the model can decide.

\subsection{Theoretical Setup and Notation}
\label{sec:fmsetup}

To formally analyze the expressive power of flow matching, we establish our theoretical setup below, adapting the flow-matching framework into a concrete computational model.

\paragraph{Modeling flow matching.} We consider flow matching under finite-precision arithmetic with $b$ fractional bits. The representable numbers form $F_b:=\{2^{-b}k:k\in\Z\}$. All arithmetic operations are evaluated over the reals and then rounded coordinate-wise via $\round_b(x):=2^{-b}\lfloor 2^b x+1/2\rfloor$.

A \emph{model} $u$ is a map acting on binary encodings of arbitrary length. It receives an encoded triple $(x,t,\ctx)\in F_b^n\times F_b\times F_b^*$, where $n$, $b$, and the length of $\ctx$ are non-negative integers read from the encoding, and outputs the velocity $u(x,t;\ctx)\in F_b^n$. On an input problem $w\in\{0,1\}^m$, vanilla flow matching $\VanillaFM(w;u,K)$ sets $\ctx=w$ and runs the Euler method for $K$ steps, starting from $\tilde{x}_0\sim p_{\text{prior}}$, where $p_{\text{prior}}$ is an arbitrary prior distribution whose support we additionally restrict to $[-1,1]^n$ (Algorithm~\ref{alg:vanillafm}).

\paragraph{Model regularity and acceptance.} A model $u$ is \emph{$B$-bounded} if $\|u(x,t;\ctx)\|_\infty\le B$ for all $x\in F_b^n$, $t\in[0,1]\cap F_b$, and $\ctx\in F_b^*$, and \emph{$L$-Lipschitz continuous} if $\|u(x,t;\ctx)-u(x',t;\ctx')\|_\infty\le L\max\{\|x-x'\|_\infty,\|\ctx-\ctx'\|_\infty\}$ for all $x,x'\in F_b^n$, $t\in[0,1]\cap F_b$, and equal-length $\ctx,\ctx'\in F_b^*$. The Lipschitz condition is joint in $(x,\ctx)$ and uniform over $t$. For simplicity we measure Lipschitzness in the $\ell_\infty$ norm, but the analysis carries over to any $\ell_p$ norm; see \Sref{sec:theoryextensions}.

Given an inference algorithm $\mathcal{A}$ and a constant $c>0$ independent of $m$, we say that \emph{$u$ accepts a language $\mathcal{L}$ with margin $c$ under $\mathcal{A}$} if, for every $m$ and $w\in\{0,1\}^m$, the output $x_K=\mathcal{A}(w;u,K)$ satisfies, with probability $1$ over the randomness in $\mathcal{A}$, $(x_K)_j\ge c$ for all coordinates $j$ when $w\in\mathcal{L}$ and $(x_K)_j\le-c$ for all $j$ when $w\notin\mathcal{L}$. Acceptance with a margin represents a robust decision, in which every output coordinate is separated from the threshold $0$ by at least $c$. For asymptotic analysis, the precision $b$, state dimension $n$, and step count $K$ may grow with the problem size $m$.

\section{Self-Conditioning Increases Expressive Power}
\label{sec:expressivity}

In this section, we first show that, under the mild regularity conditions below, vanilla flow-matching inference is confined to constant-depth circuit classes (\Sref{sec:vanillabarrier}), which restricts the problems it can solve. We then consider self-conditioning (\Sref{sec:scmechanism}), a simple mechanism that allows the same models to attain substantially greater expressive power (\Sref{sec:scpower}).

\subsection{Vanilla Flow Matching Is Confined to Constant Depth}
\label{sec:vanillabarrier}

Although $\VanillaFM$ makes $K$ sequential model evaluations, as long as the model outputs satisfy certain mild regularity conditions, \textbf{this sequential depth is an illusion}:
the same decision can be reached in $O(L)$ sequential rounds of parallel evaluations of the model, regardless of how large $K$ is. When $L$ is a constant, these are constantly many rounds, so the decided language stays within the model's own complexity class. Below we view $K$ as a function of the problem size $m$, and state the theorem under the setup of~\Sref{sec:fmsetup}.

\begin{theorem}[Upper bounds on the expressiveness of vanilla flow matching]\label{thm:vanillaupperbound}
Let $k\ge0$ be a constant integer, $\mathsf{D}\in\{\mathsf{NC}^{k+1},\mathsf{AC}^k,\mathsf{TC}^k\}$ be a circuit complexity class, $u$ be a model that can be computed by circuits in $\mathsf{D}$, and $\mathcal{L}$ be any language that can be accepted by $u$ with margin $1$ under $\VanillaFM$.
Assume that for some constants $B>0$ and $L\ge1$ independent of $m$, the model $u$ is $B$-bounded and $L$-Lipschitz continuous. 
Then as long as the state dimension satisfies $n=\poly(m)$, and the number of fractional bits satisfies
\begin{equation}\label{eq:vanillaprecision}
    \bigl(1+\Omega(1)\bigr)\log_2 K(m) \le b(m) \le O(\log m),
\end{equation}
we have $\mathcal{L}\in\mathsf{D}$.
\end{theorem}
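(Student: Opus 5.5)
Our plan is to replace the $K$-step Euler trajectory by a coarse solver with $O(L)$ sequential rounds of parallel model calls, and to use the margin to absorb the discrepancy. Since the decision only depends on the sign pattern of $x_K$ and the margin is $1$, it suffices to produce, inside $\mathsf{D}$, an approximation $\hat x$ with $\|\hat x-x_K\|_\infty<1$. Because the prior is arbitrary on $[-1,1]^n$ and acceptance must hold with probability $1$, we may fix any single starting point $\tilde x_0=x^\ast$ in the support and hardwire it as non-uniform advice. This is legitimate because $b=O(\log m)$ and $n=\poly(m)$. After fixing $x^\ast$ the computation is deterministic: $x_{i+1}=\round_b(x_i+\frac1K u(x_i,t_i;w))$.

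\textbf{Step 1: a parallel-in-time (Picard/Parareal-style) iteration.} We view the trajectory $(x_0,\dots,x_K)$ as the fixed point of the map $\Phi$ defined by
\[
\Phi(y)_i=\round_b\bigl(y_{i-1}+\tfrac1K u(y_{i-1},t_{i-1};w)\bigr),\qquad \Phi(y)_0=x^\ast.
\]
One round of $\Phi$ makes $K$ independent evaluations of $u$ in parallel, which stays in $\mathsf{D}$ since $K\le 2^{b}=\poly(m)$. Plain Picard needs $K$ rounds, so we instead split $[0,1]$ into $M=\Theta(L)$ windows of length $1/M$ and work window by window. Inside a window, the Lipschitz constant times the window length is at most $1/2$, so a Picard-type iteration on the integrated form contracts. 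Concretely, write $y_i=y_{s}+\sum_{j<i}\frac1K u(y_j,t_j;w)$ plus rounding terms. Then the error $e^{(r)}$ obeys $\|e^{(r+1)}\|_\infty\le L\cdot\frac1M\|e^{(r)}\|_\infty+\delta$, where $\delta$ is the accumulated rounding error. The initial guess $y_i\equiv y_s$ has error at most $B/M$. Hence a constant number $R$ of rounds per window brings the error to $O(2^{-R}B/M)+O(\delta)$. The prefix sums over a window are iterated additions of $O(\log m)$-bit numbers; these lie in $\mathsf{TC}^0$, and in $\mathsf{AC}^0$ too if we treat addition of polynomially many $O(\log m)$-bit numbers carefully. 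Any value $x_i$ is bounded by $1+B$, so all numbers have $O(\log m)$ bits.

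\textbf{Step 2: error propagation across windows.} Errors made at the start of window $s$ are amplified by at most $e^{L/M}$ through the window, using the discrete Grönwall bound. Since $L\ge1$ and $M=\Theta(L)$, the total amplification is $e^{L}$, a constant. The total number of rounds is $M R=O(L)$, so the final circuit is a constant-depth composition of $\mathsf{D}$-circuits (for $\mathsf{NC}^{k+1}$, depth $O(L\log^{k+1}m)$), and remains in $\mathsf{D}$.

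\textbf{Main obstacle: rounding.} The true trajectory $x_K$ is defined with rounding at every step. Our simulator rounds differently, so we cannot hope to reproduce $x_K$ exactly; we compare both to the exact real Euler trajectory instead. Each step incurs at most $2^{-b-1}$ rounding error, so over $K$ steps the total drift is at most $K2^{-b}e^{L}$. The hypothesis $b\ge(1+\Omega(1))\log_2K$ gives $K2^{-b}\le K^{-\Omega(1)}\to0$, so this drift is $o(1)$. The delicate part is that $u$ is only defined and Lipschitz on the grid $F_b^n$, so every comparison must be between grid points; our simulator must therefore also round its iterates to $F_b$. Combining $e^{L}\cdot(o(1)+2^{-R}B)<1$ for suitable constant $R$ and large $m$, every coordinate of $\hat x$ has the same sign as $x_K$, so thresholding at $0$ decides $\mathcal L$ in $\mathsf{D}$. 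Finitely many small $m$ are hardwired.
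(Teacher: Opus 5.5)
Your overall plan matches the paper's. You replace the $K$ sequential Euler steps by constantly many rounds of $K$ parallel model calls that solve the integrated (Picard) form of the trajectory, and you let the margin absorb a constant final error. Your windowed sup-norm contraction differs from the paper's single global iteration with factorial error decay (Lemma~\ref{lem:picarderror}). Your own final inequality $\mathrm e^{L}2^{-R}B<1$ forces $R=\Theta(L)$, so $MR$ is $\Theta(L^2)$ rather than $O(L)$. This is harmless for constant $L$, but the global analysis is what gives $O(L)$ rounds and hence the growing-$L$ extensions.

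\emph{The genuine gap is the case $\mathsf{D}=\mathsf{AC}^0$.} Your claim that the window prefix sums are in $\mathsf{AC}^0$ ``if treated carefully'' is false. For polynomially large $K$, which the hypothesis allows, each window still has $K/M=m^{\Omega(1)}$ summands in $F_b$. The lowest-order bit of their exact sum is the parity of their lowest-order bits, so exact iterated addition computes PARITY on $m^{\Omega(1)}$ bits, which polynomial-size $\mathsf{AC}^0$ cannot do. The missing idea is that exact sums are unnecessary, because only constant accuracy is needed. The paper quantizes each velocity coordinate to constantly many levels, so each (quantized) normalized prefix sum becomes an affine function of a count of ones. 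It then uses Ajtai's approximate counting in $\mathsf{AC}^0$ to get every normalized prefix sum within an arbitrary constant $\varepsilon$, rounded back into $F_b$ so that $u$ can be applied to the next iterate (Lemma~\ref{lem:ac0prefix}). The error recurrence then gains an additive $\varepsilon$ per round. This is absorbed by taking $\varepsilon$ a small enough constant depending on $L$ and $B$; the paper uses $\varepsilon=\frac18\mathrm e^{-L}$. For $\mathsf{TC}^k$, $\mathsf{NC}^{k+1}$, and $\mathsf{AC}^k$ with $k\ge1$, your exact sums are fine, since iterated addition lies in $\mathsf{TC}^0\subseteq\mathsf{NC}^1\subseteq\mathsf{AC}^1$.

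There is also a smaller gap in your rounding step. The bound $K2^{-b}\le K^{-\Omega(1)}\to0$ needs $K(m)\to\infty$. But \eqref{eq:vanillaprecision} allows $K(m)$ to stay bounded for infinitely many $m$ (e.g.\ $K=4$, $b=3$), where $K\delta$ is a constant that need not be small compared with $\mathrm e^{-L}$. Hardwiring finitely many $m$ does not cover this. The paper splits into cases: when $K$ is below a constant threshold it simply unrolls the $K$ Euler steps, using constantly many copies of $u$ plus table-bound arithmetic on $O(\log m)$-bit scalars; otherwise $K\delta$ is as small as needed.

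Comparing both runs with ``the exact real Euler trajectory'' is also not well defined, since, as you note, $u$ is only given on $F_b^n$. Instead, compare your grid iterates directly with the sampler's grid trajectory. That trajectory satisfies the integrated equation up to an accumulated rounding residual of size at most $K\delta/2$, plus a term of at most $BK\delta$ from the mismatch between the actual steps $h_i=t_{i+1}-t_i$ and $1/K$. Note that the sampler uses $h_i$, not $1/K$ as in your update formula. Finally, hardwire a rounded initial state $x_0$ that occurs with positive probability, rather than an arbitrary support point of the prior, since acceptance is only guaranteed with probability one. With these repairs your argument goes through.
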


See Appendix~\ref{app:pfvanilla} for the proof. The case $k=0$ recovers the canonical constant-depth classes:  under vanilla flow matching, log-precision softmax-attention transformers in $\mathsf{TC}^0$~\citep{merrill2022saturated,merrill2023parallelism,chiang2024transformers} remain in $\mathsf{TC}^0$, and the unique hard-attention and constant-precision variants in $\mathsf{AC}^0$~\citep{hao2022formal,li2024chain} remain in $\mathsf{AC}^0$. Thus, ODE iteration alone adds no expressive power in this setting.

The core technique behind the proof is a parallel Picard iteration~\citep{hairer1993solving}. The ODE under which $\VanillaFM$ samples can amplify an initial error by a factor of $\mathrm e^L$. A naive simulation that reruns the Euler method with fewer steps to reproduce the full $K$-step trajectory therefore takes $\mathrm e^{O(L)}$ steps to reach constant accuracy. We instead iteratively update all $K$ positions in parallel using the previous iterate, making the propagated error decay at a factorial rate across rounds up to an additive floor, so that $O(L)$ rounds suffice. This yields the following lemma.

\begin{lemma}\label{lem:picard}
    Under the same setup as Theorem~\ref{thm:vanillaupperbound}, assume that $u$ is $B$-bounded for a constant $B>0$ and $L(m)$-Lipschitz continuous with $L(m)\ge1$. Then there exist a constant $c_0=c_0(B)>0$, independent of $m$, and an inference algorithm that runs for $O(L(m))$ iterations, making at most $K(m)$ parallel calls to $u$ per iteration, such that as long as the number of fractional bits satisfies
\begin{equation}\label{eq:picardprecision}
    b(m)\ge \log_2 K(m)+(\log_2\mathrm e)L(m)+c_0\qquad\text{for every }m,
\end{equation}
$u$ accepts $\mathcal{L}$ with margin $1/2$ under that algorithm.
\end{lemma}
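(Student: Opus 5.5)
We will simulate the exact $K$-step Euler trajectory of $\VanillaFM$ by a discrete Picard iteration, under the same finite-precision arithmetic. Write the Euler trajectory as $x_{k+1}=\round_b\bigl(x_k+h\,u(x_k,t_k;w)\bigr)$ with $h=1/K$ and $t_k=k/K$. We will compare it with the discrete integral equation $x_k = x_0 + h\sum_{i<k} u(x_i,t_i;w)$ plus accumulated rounding. The algorithm keeps a whole trajectory iterate $x^{(r)}=(x^{(r)}_0,\dots,x^{(r)}_K)$, initialised as $x^{(0)}_k=x_0$ for all $k$. In round $r+1$ it makes $K$ parallel calls $g^{(r)}_i=u(x^{(r)}_i,t_i;w)$ and sets $x^{(r+1)}_k=\round_b\bigl(x_0+h\sum_{i<k}g^{(r)}_i\bigr)$, using prefix sums, which cost no model calls. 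After $R=O(L)$ rounds it outputs $x^{(R)}_K$.

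\textbf{Error recursion.} First we bound the Euler trajectory's deviation from the exact discrete sum, which is needed because the Euler trajectory is rounded at every step. Let $\tilde x_k = x_0+h\sum_{i<k}u(x_i,t_i;w)$, where the $x_i$ are the true Euler iterates. Then $\|x_k-\tilde x_k\|_\infty\le k\,2^{-b-1}\le K2^{-b}$, since each rounding error is at most $2^{-b-1}$.

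Now set $e^{(r)}_k=\|x^{(r)}_k-x_k\|_\infty$. By $L$-Lipschitzness in $x$,
\[
e^{(r+1)}_k \le hL\sum_{i<k}e^{(r)}_i + \delta,\qquad \delta := 2K2^{-b}.
\]
Here $\delta$ absorbs both the Euler rounding and the rounding in the iterate. The initial error is $e^{(0)}_k\le 2Bk h\le 2B$, since all states move at speed at most $B$. Unrolling the Volterra-type sum with the standard induction $\sum_{i<k}(ih)^r h\le (kh)^{r+1}/(r+1)$ gives
\[
e^{(r)}_k\le 2B\frac{(L t_k)^r}{r!}+\delta\sum_{j<r}\frac{(Lt_k)^j}{j!}\le 2B\frac{L^r}{r!}+\delta\,\mathrm e^{L}.
\]

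\textbf{Choosing the parameters.} With $R=\lceil \mathrm e^2 L\rceil+O(\log B)=O(L)$ rounds, Stirling gives $L^R/R!\le (\mathrm eL/R)^R\le \mathrm e^{-R}$, so the first term is at most $1/8$. For the second term we need $\delta\mathrm e^L=2K2^{-b}\mathrm e^L\le 1/8$. This holds exactly when $b\ge\log_2K+(\log_2\mathrm e)L+4$, which is condition \eqref{eq:picardprecision} with $c_0\ge4$; we may take $c_0$ larger to absorb constants depending on $B$. Then $\|x^{(R)}_K-x_K\|_\infty\le 1/4<1/2$. Since every coordinate of $x_K$ is at least $1$ or at most $-1$ according to membership, every coordinate of $x^{(R)}_K$ is at least $1/2$ or at most $-1/2$ respectively, for every draw of $x_0$.

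\textbf{Main obstacle.} The main difficulty is handling finite precision rigorously. The prefix sums must be computed exactly in $F_b$, or rounded once, with their error charged to $\delta$. Each Euler step rounds, so the reference trajectory is not an exact discretized integral, and this is why the $K2^{-b}$ term appears. We must also check that the iterates stay in $F_b^n$, so that $u$ is only ever queried on valid inputs where Lipschitzness and boundedness apply. If exact prefix sums are problematic, we will round each partial sum once and bound the total rounding by $2^{-b-1}$ per output, which is still within $\delta$.
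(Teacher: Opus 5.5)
Your approach is the paper's own (Algorithm~\ref{alg:parallelpicard}). It is a parallel Picard iteration started from $y^{(0)}_i\equiv x_0$, with uniform weights $1/K$ and one rounding per prefix sum. The Volterra-type recurrence is unrolled by the Riemann-sum induction, Stirling gives $O(L)$ rounds, and an additive floor amplified by $\mathrm e^{L}$ is made small by the precision hypothesis.

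The step that does not go through as written is your model of the time grid. Algorithm~\ref{alg:vanillafm} uses $t_i=\round_b(i/K)$ and step sizes $h_i=t_{i+1}-t_i$, which in general differ from $i/K$ and $1/K$. This causes two problems:
\begin{enumerate}
\item Your iteration must query $u$ at exactly these rounded $t_i$. The points $i/K$ are generally not in $F_b$, so they are not valid inputs. More importantly, the Lipschitz hypothesis only compares inputs sharing the same $t$ and says nothing about how $u$ varies in $t$. So $u(x^{(r)}_i,i/K;w)-u(x_i,t_i;w)$ is not controlled by $Le^{(r)}_i$.
\item Once the times agree, your uniform weights $1/K$ no longer match the sampler's $h_j$. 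This adds a term $\sum_{j<k}|1/K-h_j|\,\|v_j\|_\infty\le BK2^{-b}$ in every round, using $|h_j-1/K|\le2^{-b}$. Your $\delta=2K2^{-b}$ omits it.
\end{enumerate}

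The repair is routine. Take $\delta:=(B+1)K2^{-b}$, which covers the Euler rounding, the iterate rounding, and the weight mismatch. Under~\eqref{eq:picardprecision} you then get $\delta\mathrm e^{L}\le(B+1)2^{-c_0}\le\frac18$ once $c_0\ge\log_2\bigl(8(B+1)\bigr)$. This mismatch term is precisely why the lemma's constant is $c_0=c_0(B)$. Your remark about absorbing $B$-dependent constants anticipates this, but your derivation never produces such a term.

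Likewise, the initial bound is really $e^{(0)}_k\le Bt_k+k2^{-b-1}\le B+K2^{-b}$. This is at most $2B$ only after taking $c_0\ge\log_2(1/B)$.

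Otherwise your argument matches the paper's. Your constant initial bound $2B$ is cruder than the paper's $Bs_i+\alpha$, which buys an extra factor $1/(r+1)$ per level. As a result you need about $\mathrm e^2L+\ln(16B)$ rounds instead of the paper's $\lceil\mathrm eL+4B\rceil$; both are $O(L)$ for constant $B$.
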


See Appendix~\ref{app:pfpicard} for the proof. For constant $L$, these are constantly many rounds, and the arithmetic accompanying the model calls is shallow enough to be absorbed into the model's own circuit class, yielding Theorem~\ref{thm:vanillaupperbound}. The $O(L)$ bound also lets us extend the analysis to a growing $L$~(\Sref{sec:theoryextensions}).

\begin{figure}[t]
\begin{minipage}[t]{0.49\textwidth}
\begin{algorithm}[H]
\caption{Vanilla Flow Matching}
\label{alg:vanillafm}
\begin{algorithmic}[1]
\Function{VanillaFM}{$w;u,K$}
    \State $t_i\leftarrow\round_b(i/K)$ for $i=0,\ldots,K$
    \State Sample $\tilde{x}_0\sim p_{\text{prior}}$; $x_0\leftarrow\round_b(\tilde{x}_0)$
    \State
    \For{$i=0,\ldots,K-1$}
        \State $v_i\leftarrow u(x_i,t_i;w)$
        \State $x_{i+1}\leftarrow x_i+(t_{i+1}-t_i)\cdot v_i$
        \State
    \EndFor
    \State \Return $x_K$
\EndFunction
\end{algorithmic}
\end{algorithm}
\end{minipage}
\hfill
\begin{minipage}[t]{0.49\textwidth}
\begin{algorithm}[H]
\caption{Self-Conditioning Flow Matching}
\label{alg:selfcondfm}
\begin{algorithmic}[1]
\Function{SelfCondFM}{$w;u,K$}
    \State $t_i\leftarrow\round_b(i/K)$ for $i=0,\ldots,K$
    \State Sample $\tilde{x}_0\sim p_{\text{prior}}$; $x_0\leftarrow\round_b(\tilde{x}_0)$
    \State $z_0\leftarrow\mathbf0^n$
    \For{$i=0,\ldots,K-1$}
        \State $v_i\leftarrow u(x_i,t_i;\operatorname{concat}(w,z_i))$
        \State $x_{i+1}\leftarrow x_i+(t_{i+1}-t_i)\cdot v_i$
        \State $z_{i+1}\leftarrow x_i+(1-t_i)\cdot v_i$
    \EndFor
    \State \Return $x_K$
\EndFunction
\end{algorithmic}
\end{algorithm}
\end{minipage}
\end{figure}

\subsection{Self-Conditioning}
\label{sec:scmechanism}

\begin{wrapfigure}{t}{0.465\textwidth}
    \vspace{-0.8em}
    \centering
    \includegraphics[width=0.85\linewidth]{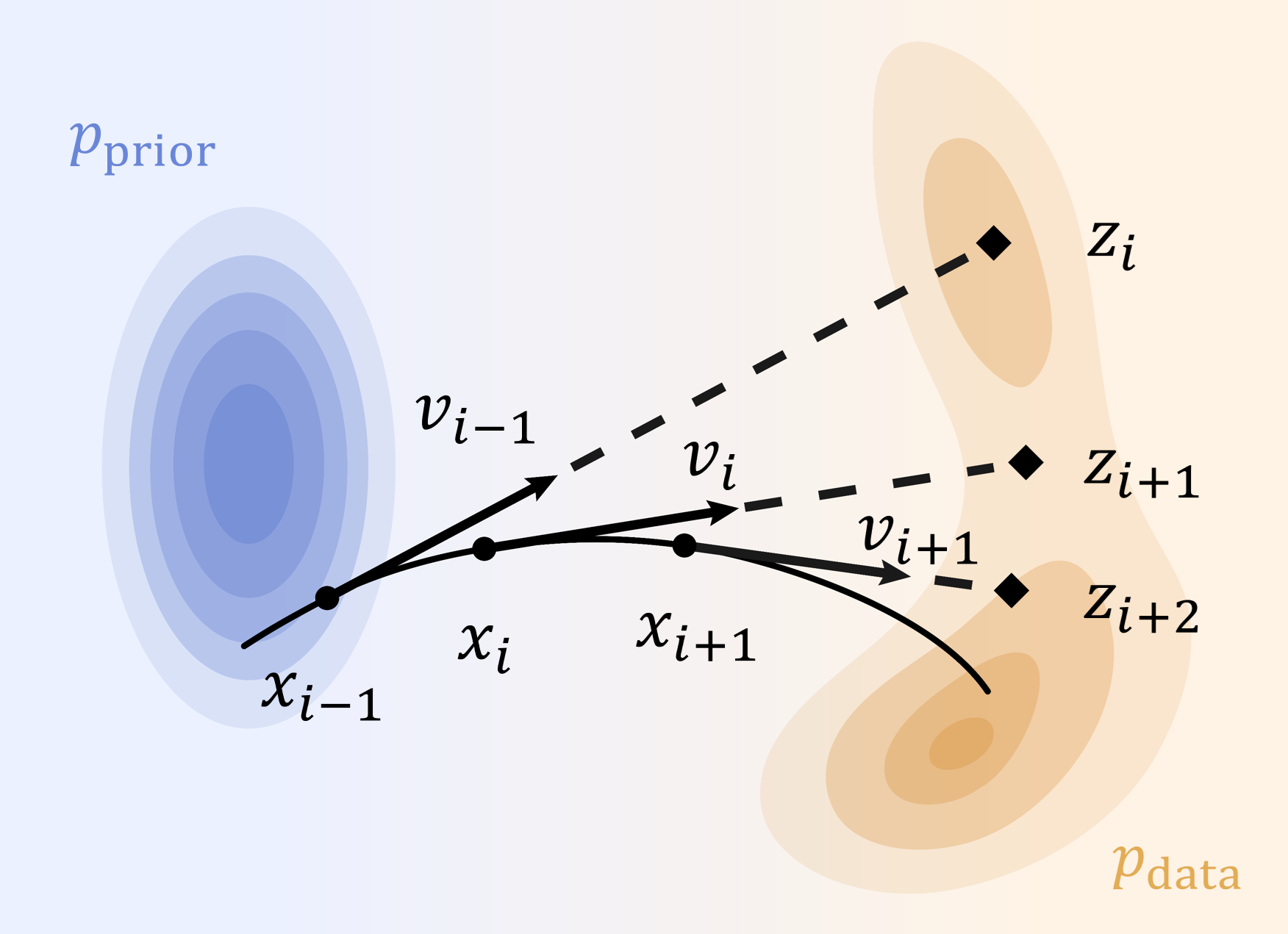}
    \caption{Overview of self-conditioning. Each prediction $z_{i+1}$ is obtained by extrapolating the state $x_i$ along its velocity $v_i$ to $t=1$.}
    \label{fig:overview}
    \vspace{-0.8em}
\end{wrapfigure}

The expressivity bottleneck in Theorem~\ref{thm:vanillaupperbound} stems from how information flows along the continuous ODE: consecutive steps in $\VanillaFM$ transfer information solely through the state $x_i$, which changes only by a small increment scaled by a step size of $O(1/K)$: $x_{i+1}=x_i+(t_{i+1}-t_i)v_i$. This small ODE update is what allows the Picard iteration in Lemma~\ref{lem:picard} to compress all $K$ sampling steps into $O(L)$ iterations. 
One potential way to overcome this bottleneck is to make a larger update per step, for example, by introducing an extra input component that can change abruptly between sampling steps.

We find that a simple mechanism achieves this: \textbf{\emph{self-conditioning}}~\citep{chen2022analog}, originally proposed for discrete diffusion models and later adapted to other settings~\citep{strudel2022self,stark2023harmonic,gupta2024photorealistic}. As stated in Algorithm~\ref{alg:selfcondfm}, self-conditioning feeds the model an additional endpoint prediction $z_i\in F_b^n$, which is a linear approximation of $x_{\text{data}}$, obtained by extrapolating along the previous velocity: $z_i:=x_{i-1}+(1-t_{i-1})v_{i-1}$ for $i\ge 1$, with $z_0=\mathbf0^n$.

For training, we use a two-pass heuristic inspired by prior work~\citep{chen2022analog}. Given $(w,x_{\text{data}})\sim p_{\text{data}}$, $t\sim\mathcal{U}[0,1]$, $\epsilon\sim p_{\text{prior}}$, and $\tilde x_t=tx_{\text{data}}+(1-t)\epsilon$, let
\[
    v^{(1)}:=u_\theta\bigl(\tilde x_t,t;\operatorname{concat}(w,\mathbf{0}^n)\bigr),
    \qquad
    v^{(2)}:=u_\theta\bigl(\tilde x_t,t;\operatorname{concat}(w,\operatorname{sg}[\tilde x_t+(1-t)v^{(1)}])\bigr),
\]
where $\operatorname{sg}[\cdot]$ stops the gradient. The first pass imitates the initial sampling step, at which $z_0=\mathbf0^n$, and the second imitates later steps. Both passes are weighted equally via the following objective:
\[
    \mathcal{J}_{\text{SC}}:=\tfrac{1}{2}\,\mathbb{E}_{t,\epsilon,(w,x_{\text{data}})}\bigl[\|v^{(1)}-(x_{\text{data}}-\epsilon)\|_2^2+\|v^{(2)}-(x_{\text{data}}-\epsilon)\|_2^2\bigr].
\]
Consequently, by training on predictions that closely recover $z_i$, the model learns to leverage the actual $z_i$ generated in the preceding sampling step, requiring only one extra forward pass during training. Figure~\ref{fig:overview} provides an overview of the mechanism.

\paragraph{Self-conditioning in practice.} Self-conditioning is particularly natural for visual-space reasoning: the fed-back prediction $z_i$ approximates the clean sample $x_{\text{data}}$ and is therefore near the target data manifold in image space. Modern image generation models already accept images as conditions~\citep{batifol2025flux,wu2025qwen}, so it is reasonable to expect knowledge from image-conditioned pretraining to transfer to self-conditioning, even though that pretraining involves no self-conditioning. The initialization $z_0=\mathbf0^n$ is a placeholder for cleaner theoretical analysis. In actual implementation, for better efficiency, it is omitted both in the first training pass $v^{(1)}=u_\theta(\tilde x_t,t;w)$ and at the first sampling step, without affecting empirical performance.

\subsection{Self-Conditioning Reaches \texorpdfstring{$\mathsf{PSPACE/poly}$}{PSPACE/poly}}
\label{sec:scpower}

We now formally show that self-conditioning expands the expressiveness of flow matching. Since the extra input $z_{i+1}$ is a fresh prediction rather than an $O(1/K)$ increment on $x_i$, it is unconstrained by the step size and can change abruptly between sampling steps. Thus, the convergence behind Lemma~\ref{lem:picard} is lost, and the constant-depth barrier disappears altogether: \textbf{constant-depth models with self-conditioning can decide $\mathsf{P/poly}$ in polynomially many steps and $\mathsf{PSPACE/poly}$ in at most exponentially many steps}.

\begin{theorem}[Lower bounds on the expressiveness of flow matching with self-conditioning]\label{thm:selfcondlowerbound}
    For any language $\mathcal{L}$ in $\mathsf{PSPACE/poly}$,
    there exists a model $u$ that accepts $\mathcal{L}$ with margin $1$ under $\SelfCondFM$ and satisfies the following conditions: (a) $u$ can be computed by $\mathsf{AC}^0$ circuits; (b) $u$ is $4$-bounded and $8$-Lipschitz continuous; (c) $b=O(\log m)$, $n=\poly(m)$, $K\le2^{\poly(m)}$.
If moreover $\mathcal{L}\in\mathsf{P/poly}$, then the same result holds with~\eqref{eq:vanillaprecision} satisfied.
\end{theorem}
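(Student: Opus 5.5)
The plan is to use the self-conditioning input $z_i$ as a tape on which a space-bounded Turing machine is simulated one transition per sampling step, while the ODE state $x_i$ only has to stay bounded and, at the end, report the answer. Fix $\mathcal{L}\in\mathsf{PSPACE/poly}$, decided by a machine $M$ in space $s(m)=\poly(m)$ and time $T(m)\le 2^{\poly(m)}$ with advice $\alpha_m$. I would modify $M$ so that once it halts it stays in its halting configuration forever, so extra steps are harmless.

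A configuration is encoded as a vector in $\{\pm1\}^n$ with $n=\poly(m)$: one block per tape cell holding the symbol, a head-marker bit and the state bits. With this encoding the successor configuration is a local function, since each block of $\mathrm{next}(C)$ depends only on the three neighbouring blocks of $C$ and on $w,\alpha_m$. The advice is hard-wired, which is allowed because the circuits are non-uniform. I would let $K=2T'$ with $T'\ge T(m)$, and split the run at $t=1/2$.

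\textbf{Computation phase} ($t_i<1/2$). The model decodes $z_i$, computes the target $y:=\mathrm{enc}(\mathrm{next}(\mathrm{dec}(z_i)))$, and outputs
\[
v=\frac{y-x_i}{1-t_i}.
\]
At the first step $z_0=\mathbf0^n$ is read as "output the initial configuration on $w,\alpha_m$". Then $z_{i+1}=x_i+(1-t_i)v=y$ up to rounding, so $z$ advances by exactly one transition per step. This holds regardless of the step size $t_{i+1}-t_i$, which may even be $0$ when $b<\log_2 K$. This is precisely why exponentially many steps with $b=O(\log m)$ are still useful.

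Meanwhile $x_{i+1}$ is a convex combination of $x_i$ and $y$, so $x_i\in[-1,1]^n$ throughout. Since $1-t_i\ge1/2$, we get $\|v\|_\infty\le 2\cdot 2=4$.

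\textbf{Answer phase} ($t_i\ge1/2$). The decoded configuration is halting by then. The model reads the accept bit $a\in\{\pm1\}$ from $z_i$, keeps $z$ fixed at that configuration, and outputs $v=(a\mathbf 1-x_i)/(1-t_i)$. The Euler step then moves $x$ along the straight line to $a\mathbf 1$. In exact arithmetic $x_i-a\mathbf1$ scales exactly with $1-t_i$, so $v$ stays constant and bounded by $4$, and $x_K=a\mathbf1$.

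\textbf{Regularity.} For condition (a), every coordinate of $u$ depends on $O(\log m)$ bits of $(x_i,t_i)$ together with a constant number of decoded blocks of $z_i$. Local TM transitions are $\mathsf{AC}^0$, and any function of $O(\log m)$ input bits, including the division by $1-t_i$, has a polynomial-size DNF. Comparing $t_i$ with $1/2$ is also trivial. So $u$ is computed by $\mathsf{AC}^0$ circuits.

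For condition (b), decoding must be Lipschitz rather than a sign threshold. I would decode each coordinate by $\clip(2z,-1,1)$ and implement the transition logic with $\min/\max$ (AND/OR) and negation, all of which are $1$-Lipschitz. This makes $y$ a $2$-Lipschitz function of $z$. Then $v=(y-x)/(1-t)$ with $1-t\ge1/2$ is $2(2+1)\le 8$-Lipschitz jointly in $(x,\ctx)$. The $\pm1$ margin of the encoding absorbs rounding errors of size $O(2^{-b})$ in $z$.

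For the $\mathsf{P/poly}$ case, $T$ and hence $K$ are polynomial. Choosing $b=C\log_2 m$ with $C$ large makes $b\ge(1+\Omega(1))\log_2K$ and $b=O(\log m)$ simultaneously, so~\eqref{eq:vanillaprecision} holds.

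\textbf{Main obstacle.} The step I expect to be hardest is controlling rounding in the answer phase, so that $x_K$ has every coordinate $\ge1$ (respectively $\le-1$), i.e.\ margin exactly $1$. The difficulty is that per-step rounding errors could accumulate over up to $2^{\poly(m)}$ steps.

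I would first exploit the self-correcting form of the update:
\[
x_{i+1}-a\mathbf1=\Bigl(1-\tfrac{\Delta t_i}{1-t_i}\Bigr)(x_i-a\mathbf1)+\text{rounding}.
\]
Here the contraction factor is at most $1$, and steps with $\Delta t_i=0$ change $x$ not at all. The number of effective steps is therefore at most $2^b=\poly(m)$, and at the last step ($t_{K-1}$ with $1-t_{K-1}=\Delta t$) the update lands on $a\mathbf1$ exactly up to one rounding.

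To absorb the final rounding error, I would either aim $x$ at a slightly overshooting target $a(1+2^{-b+2})\mathbf1$ (still within the $4$-bound), or clip $v$ so that the last step rounds onto $a\mathbf1$. I would verify that this leaves both the $4$-boundedness and the $8$-Lipschitz constant intact.
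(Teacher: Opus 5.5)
There is a genuine gap in your answer phase. It sits exactly in the step you deferred: ``I would verify that this leaves both the $4$-boundedness and the $8$-Lipschitz constant intact.''

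Condition~(b) is a property of the model on its whole domain. The bounds must hold for all $x\in F_b^n$, all equal-length conditions, and every $t\in[0,1]\cap F_b$, not only along the realized run. Your field $v=(a\mathbf 1-x)/(1-t)$ has Lipschitz constant $1/(1-t)$ in $x$, and about $2/(1-t)$ in $z$ through the decoded accept bit $a$. At $t=1-2^{-b}$ both are of order $2^b$, and at $x=-a\mathbf 1$ the velocity has magnitude $2^{b+1}$. At $t=1$ the formula is undefined. Yet the sampler does evaluate $u$ there: $t_{K-1}=\round_b((K-1)/K)=1$ whenever $K\ge2^{b+1}$, which is precisely your $\mathsf{PSPACE/poly}$ regime.

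Clipping $v$ to $[-4,4]$ fixes boundedness but not Lipschitzness, and no repair within an exact-landing design can. Two states $x,x'$ that both land on $a\mathbf 1$ in a final Euler step of length $1-t$ must receive velocities differing by $\|x-x'\|_\infty/(1-t)$.

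This is not cosmetic. The point of the theorem, especially the $\mathsf{P/poly}$ clause, is that $u$ meets the hypotheses of Theorem~\ref{thm:vanillaupperbound} with a constant $L$. By Proposition~\ref{prop:depthmatch}, a Lipschitz constant growing with $m$ already lets $\VanillaFM$ simulate deep majority circuits. A milder form of the same issue affects your computation phase: $\|v\|_\infty\le4$ there holds only for $x\in[-1,1]^n$, so you would have to clip $x$ inside the model.

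The missing idea is that $z_{i+1}$ need not equal the next encoding exactly. It only has to fall into a band that decodes to it, so the velocity can ignore $x$ and $t$ altogether. The paper's construction works as follows.
\begin{itemize}
\item The model outputs $\pm4$ in each coordinate according to the next configuration bit. It reads $z_i$ through wide, well-separated bands $\{0\}$, $[2,5]$, $[-5,-2]$.
\item This map is extended to all inputs by a clipped McShane extension, which is $4$-bounded and $8$-Lipschitz by construction.
\item With $K=16H$ the simulation runs at $t_i\le1/8$. There $\|x_i\|_\infty\le1+4t_i$, so $z_{i+1}=x_i\pm4(1-t_i)$ lands in the correct band.
\item The machine is arranged to overwrite its whole tape with $\mathsf{acc}$ or $\mathsf{rej}$ before halting, encoded as $\mathbf 1^\kappa$ and $\mathbf 0^\kappa$. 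So from step $H$ on the velocity is the constant $\pm4\mathbf 1$.
\item For constant $v$ one has $x_{i+1}+(1-t_{i+1})v=x_i+(1-t_i)v$, so the prediction freezes by itself. Then $x_K=x_H\pm4(1-t_H)\mathbf 1$ has every coordinate of magnitude at least $2$.
\end{itemize}
Integer velocities also make every Euler and extrapolation step exact in $F_b$. So the rounding accumulation you identified as the main obstacle never arises.

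The rest of your plan agrees with the paper and goes through once the velocity is redesigned this way. That includes the tape stored in $z$, the local transition rule with hardwired advice, $z_0=\mathbf 0^n$ triggering initialization, and polynomial $K$ with $b=O(\log m)$, $b\ge(1+\Omega(1))\log_2K$ in the $\mathsf{P/poly}$ case.
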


The proof makes the sampler simulate a Turing machine with advice. The prediction $z_i$ holds the machine's tape and state, and each model call advances the machine by one step, so $K$ calls run the machine for $\Omega(K)$ steps. See Appendix~\ref{app:pfselfcond} for the construction. 

In the $\mathsf{P/poly}$ case, all the regularity hypotheses of Theorem~\ref{thm:vanillaupperbound} are retained, so the two theorems concern the same model class and the gap comes from the inference algorithm alone. For $\mathsf{D}\in\{\mathsf{AC}^0,\mathsf{TC}^0\}$, the two samplers therefore \textbf{separate} across the barrier $\mathsf{P/poly}\setminus\mathsf{D}$: given any language $\mathcal{L} \in \mathsf{P/poly}\setminus\mathsf{D}$, Theorem~\ref{thm:selfcondlowerbound} supplies an $\mathsf{AC}^0$ model accepting it under $\SelfCondFM$, whereas Theorem~\ref{thm:vanillaupperbound} leaves no $\mathsf{D}$ model satisfying its hypotheses capable of accepting it under $\VanillaFM$. For $\mathsf{D}=\mathsf{AC}^0$, the separation always holds since $\mathsf{AC}^0 \subsetneq \mathsf{TC}^0$
~\citep{furst1984parity}; for $\mathsf{D}=\mathsf{TC}^0$, it holds under the conjecture $\mathsf{TC}^0\subsetneq\mathsf{NC}^1$~\citep{vollmer1999introduction}, which implies that $\mathsf{TC}^0 \subsetneq \mathsf{P/poly}$.

\subsection{Extensions}
\label{sec:theoryextensions}

Beyond the main results above, the analysis extends to more general settings. Lemma~\ref{lem:picard} directly implies an $\mathsf{NC}$ upper bound for $\VanillaFM$ when the Lipschitz constant $L$ grows as $L(m)=\poly(\log m)$, or $\mathsf{TC}^k$ upper bounds when further $L=O(\log^k m)$ for any constant integer $k\ge0$ (Appendix~\ref{app:polylogl}). Restricting $u$ to be Lipschitz continuous under $\ell_p$ norms rather than $\ell_\infty$ norms preserves both Theorem~\ref{thm:vanillaupperbound} and Theorem~\ref{thm:selfcondlowerbound} (Appendix~\ref{app:lplipschitz}). Replacing the Euler method with any fixed explicit Runge--Kutta method~\citep{hairer1993solving} whose nodes lie in $[0,1]$, such as Heun's method~\citep{karras2022elucidating}, leaves the asymptotic results for $\VanillaFM$ unchanged (Appendix~\ref{app:rk}).

On the other hand, the $O(L(m))$ bound of Lemma~\ref{lem:picard} is tight with respect to circuit depth: any polynomial-size majority circuit of depth $d$ can be simulated under $\VanillaFM$ in $O(d)$ sequential rounds by an $O(d)$-Lipschitz continuous model $u$ computed by $\mathsf{TC}^0$ circuits. The bound of Lemma~\ref{lem:picard} therefore cannot be improved to $o(L(m))$ rounds of the same form unless the depth hierarchy for polynomial-size majority circuits collapses (Appendix~\ref{app:depthmatch}).

\section{Experiments}
\label{sec:experiments}

The analysis in \Sref{sec:expressivity} establishes an expressivity separation under the stated assumptions: \(\SelfCondFM\) can decide languages beyond the reach of \(\VanillaFM\). Whether models trained on finite data realize this expressive advantage remains an empirical question. Therefore, we conduct controlled experiments to empirically validate the expressivity separation
(\Sref{sec:theory_aligned_tasks}). We further extend the evaluation to the five structured visual-reasoning tasks
used by \citet{he2025diffthinker} through controlled comparisons and larger-scale experiments (\Sref{sec:diffthinker_tasks}). Finally, we provide supporting analyses that assess the applicability of our theory to trained models and characterize the inference behavior of \(\VanillaFM\) and \(\SelfCondFM\) (\Sref{sec:theory_aligned_analysis}).

\subsection{Experimental Setup}
\label{sec:exp_protocol}

Within each task, \(\VanillaFM\) and \(\SelfCondFM\) share the same backbone architecture, training dataset, optimizer, and noise schedule. The only difference is that \(\SelfCondFM\) passes the endpoint prediction from one backbone model evaluation to the next as an additional condition (\Sref{sec:scmechanism}). During training, \(\SelfCondFM\) makes two forward passes per example instead of one, costing extra compute, so we match the two methods on cumulative training FLOPs. Appendix~\ref{app:architecture} provides the complete configurations and FLOPs accounting.

Since both \(\VanillaFM\) and \(\SelfCondFM\) make one backbone call per sampling step, we compare them by default at the same number of function evaluations (NFE), which directly measures sequential computation and equals the step count \(K\) in Algorithms~\ref{alg:vanillafm} and~\ref{alg:selfcondfm}. Unless otherwise stated, we use 50 NFEs; the NFE-scaling study in \Sref{sec:theory_aligned_tasks} varies the number of evaluations.

\subsection{Empirical Validation of the Expressivity Separation}
\label{sec:theory_aligned_tasks}

To empirically validate the expressivity separation, we first compare the two methods on three tasks with natural step-by-step solutions: \emph{line tracing}, \emph{long multiplication}, and \emph{long text rendering} (Figure~\ref{fig:theory_aligned_tasks}). Each task, formally defined and analyzed in
Appendix~\ref{app:task_complexity}, has circuit complexity matching the
separation barrier identified in \Sref{sec:vanillabarrier} for either
\(\mathsf{AC}^0\) or \(\mathsf{TC}^0\) backbones (the latter case assumes
\(\mathsf{TC}^0\neq\mathsf{NC}^1\)). Consequently, we expect these tasks to lie beyond the expressive power of vanilla flow matching with the corresponding backbone.

\paragraph{Tasks.} We denote the problem scale for each task by \(m\). Line tracing (\(\mathsf{NC}^1\)-complete) requires coloring five long polylines that go through $m$ permutation stages; long multiplication (\(\mathsf{TC}^0\)-complete) entails completing a step-by-step long multiplication worksheet to compute the product of two integers, each with at most $m$ digits; and long text rendering (\(\mathsf{TC}^0\)-complete) requires rendering some paragraphs containing a total of $m$ words onto a fixed-width canvas. We compare the methods on finite-scale synthetic datasets. For line tracing, we report the mean longest correct prefix (LCP), which is the average number of initial stages through which all five polylines remain correctly colored; for the other tasks, we report instance-level accuracy.
See Appendix~\ref{app:tasks_setup} for more details.

\paragraph{Backbone architectures.} All three tasks use \texttt{FLUX.2}-based architectures with their pretrained VAE kept frozen for image encoding and decoding~\citep{blackforestlabs2026flux2klein}. For line tracing, we train a model from scratch; for the other two tasks, we fine-tune \texttt{FLUX.2-klein-base-4B} to leverage its prior knowledge of character recognition and generation.

\paragraph{Results.}
Figure~\ref{fig:theory_aligned_tasks} plots performance against training FLOPs. On line tracing, both methods are evaluated at 100 NFEs. \(\SelfCondFM\) reaches the maximum LCP of 50 at \(1.10\times10^{18}\) training FLOPs, while \(\VanillaFM\) plateaus near 8. On long multiplication, \(\SelfCondFM\) exceeds 90\% accuracy within half the budget, a level \(\VanillaFM\) never reaches over the evaluated compute range. On long text rendering, \(\SelfCondFM\)'s accuracy rises with training compute, while \(\VanillaFM\)'s stays near zero for most of training. Self-conditioning thus \textbf{converges faster and performs better on all three tasks}, consistent with our theoretical separation. See Appendix~\ref{app:multi-seed-diagnostic} for multi-seed evaluation and Appendix~\ref{app:diagnostic_examples} for generated examples.

\paragraph{NFE scaling.} We further examine how the two methods use additional inference steps. Our theory in~\Sref{sec:expressivity} suggests
a contrasting behavior: under its assumptions, \(\VanillaFM\) is confined to
constant reasoning depth, whereas \(\SelfCondFM\) can realize sequential
reasoning depth that grows with the number of sampling steps. We test whether
this predicted difference emerges on the line tracing task by varying the NFE.

\begin{wrapfigure}{r}{0.48\textwidth}
    \centering
    \includegraphics[width=0.9\linewidth]{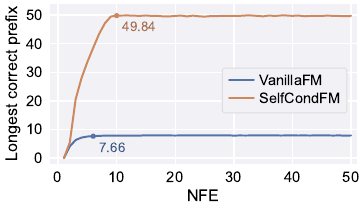}
    \caption{\(\SelfCondFM\)'s performance scales strongly with NFE, while that of \(\VanillaFM\) saturates early.}
    \label{fig:nfe_scaling}
    \vspace{-3em}
\end{wrapfigure}

We select two checkpoints with matched training FLOPs, and evaluate both at different NFEs.
Figure~\ref{fig:nfe_scaling} shows that both methods begin with a mean LCP of 0 at \(\text{NFE}=1\), but diverge as NFE increases. \(\SelfCondFM\) reaches 49.84 at \(\text{NFE}=10\), whereas \(\VanillaFM\) reaches 7.66 at \(\text{NFE}=6\) and remains below 8.0 through \(\text{NFE}=50\).
The extra steps thus yield deeper reasoning for \(\SelfCondFM\), confirming that self-conditioning lets the model make better use of more sampling steps. Full details and inference-FLOP-aligned comparisons are provided in Appendix~\ref{app:nfe_scaling}.

\subsection{Evaluation on Structured Visual-Reasoning Tasks}
\label{sec:diffthinker_tasks}

\begin{table*}[t]
    \centering
    \renewcommand{\arraystretch}{1.02}
    \caption{Controlled experiments on the DiffThinker tasks (accuracy, \%). Under matched training FLOPs and NFE, \(\SelfCondFM\) outperforms \(\VanillaFM\) on every task.}
    \label{tab:diffthinker}
    \footnotesize
    \begin{tabular*}{\textwidth}{@{\extracolsep{\fill}}l*{6}{r@{\hspace{1.5pt}}l}@{}}
        \toprule
        \textbf{Method}
        & \multicolumn{2}{l}{\textbf{VSP-Super}}
        & \multicolumn{2}{l}{\textbf{Maze}}
        & \multicolumn{2}{l}{\textbf{TSP}}
        & \multicolumn{2}{l}{\textbf{Sudoku}}
        & \multicolumn{2}{l}{\textbf{Jigsaw}}
        & \multicolumn{2}{l}{\textbf{Avg.}} \\
        \midrule
        \(\VanillaFM\)
        & 89.0 & & 64.3 & & 51.0 & & 60.5 & & 64.7 & & 65.9 & \\
        \(\SelfCondFM\)
        & \textbf{92.0} & {\scriptsize(+3.0)}
        & \textbf{70.7} & {\scriptsize(+6.4)}
        & \textbf{66.7} & {\scriptsize(+15.7)}
        & \textbf{80.0} & {\scriptsize(+19.5)}
        & \textbf{74.3} & {\scriptsize(+9.6)}
        & \textbf{76.7} & {\scriptsize(+10.8)} \\
        \bottomrule
    \end{tabular*}
\end{table*}

\begin{table*}[t]
    \centering
    \setlength{\tabcolsep}{3pt}
    \renewcommand{\arraystretch}{1.06}
    \caption{Large-scale comparison on the DiffThinker tasks (accuracy, \%). Each column header is a difficulty level, following the evaluation of \citet{dai2026endocot}, from which the baseline numbers are taken. The best results are in bold and the second best are underlined. For unified training, each method uses a single adapter trained jointly on data from all four tasks.}
    \label{tab:diffthinker_large}
    \footnotesize
    \begin{tabularx}{\textwidth}{l*{13}{>{\centering\arraybackslash}X}}
        \toprule
        \multirow{2}{*}{\textbf{Method}}
        & \multicolumn{2}{c}{\textbf{VSP-Super}} & \multicolumn{3}{c}{\textbf{Maze}}
        & \multicolumn{3}{c}{\textbf{TSP}} & \multicolumn{4}{c}{\textbf{Sudoku}}
        & \multirow{2}{*}{\textbf{Avg.}} \\
        \cmidrule{2-3} \cmidrule(lr){4-6} \cmidrule(lr){7-9} \cmidrule(lr){10-13}
        {} & \multicolumn{1}{c}{16} & \multicolumn{1}{c}{32}
        & \multicolumn{1}{c}{8} & \multicolumn{1}{c}{16} & \multicolumn{1}{c}{32}
        & \multicolumn{1}{c}{12} & \multicolumn{1}{c}{15} & \multicolumn{1}{c}{18}
        & \multicolumn{1}{c}{45} & \multicolumn{1}{c}{40} & \multicolumn{1}{c}{35} & \multicolumn{1}{c}{30} & {} \\
        \midrule
        \rowcolor{gray!12}
        \multicolumn{14}{l}{\emph{Task-specific training}} \\
        Qwen3-VL-8B (SFT)
        & 61 & 8 & \tightuline{53} & \tightuline{37} & 0 & 59 & 60 & 43
        & 30 & 17 & 2 & 0 & 30.8 \\
        Qwen3-VL-8B (GRPO)
        & 0 & 0 & 0 & 0 & 0 & 0 & 0 & 0
        & 0 & 0 & 0 & 0 & 0.0 \\
        DiffThinker
        & \tightuline{99} & 80 & \textbf{100} & \textbf{100} & 65 & 76 & 72 & 59
        & \tightuline{97} & \tightuline{94} & 55 & 13 & 75.8 \\
        EndoCoT
        & \tightuline{99} & \tightuline{85} & \textbf{100} & \textbf{100} & \tightuline{90}
        & \tightuline{77} & \textbf{77} & \textbf{73}
        & \textbf{100} & \textbf{100} & \tightuline{95} & \tightuline{64} & \tightuline{88.3} \\
        \textbf{Ours}
        & \textbf{100} & \textbf{86} & \textbf{100} & \textbf{100} & \textbf{95}
        & \textbf{79} & \tightuline{74} & \tightuline{72}
        & \textbf{100} & \textbf{100} & \textbf{98} & \textbf{76} & \textbf{90.0} \\
        \midrule
        \rowcolor{gray!12}
        \multicolumn{14}{l}{\emph{Unified training}} \\
        DiffThinker
        & \tightuline{97} & \tightuline{84} & \tightuline{98} & \tightuline{99} & \tightuline{66}
        & \tightuline{64} & 49 & 34 & \tightuline{94} & 45 & 26 & 37 & 66.1 \\
        EndoCoT
        & \textbf{100} & 80 & 97 & 98 & 52
        & \tightuline{64} & \tightuline{55} & \tightuline{46}
        & \textbf{100} & \tightuline{88} & \tightuline{80} & \tightuline{59} & \tightuline{76.6} \\
        \textbf{Ours}
        & \textbf{100} & \textbf{86} & \textbf{100} & \textbf{100} & \textbf{96}
        & \textbf{78} & \textbf{75} & \textbf{65}
        & \textbf{100} & \textbf{100} & \textbf{97} & \textbf{85} & \textbf{90.2} \\
        \bottomrule
    \end{tabularx}
\end{table*}

We next investigate whether the gains observed in \Sref{sec:theory_aligned_tasks} extend to the five structured visual-reasoning tasks of \citet{he2025diffthinker}: large-scale visual spatial planning (VSP-Super), Maze, the traveling salesperson problem (TSP), Sudoku, and Jigsaw.
See Appendix~\ref{app:diffthinker_setup} for detailed setup and illustrations.

\paragraph{Controlled experiments.} Following the setup
in \Sref{sec:exp_protocol}, we train \(\VanillaFM\) and \(\SelfCondFM\) as \texttt{FLUX.2-klein-base-4B} LoRA adapters on the same data for each task, and evaluate them at matched training FLOPs and NFE.

As reported in \Tref{tab:diffthinker}, \(\SelfCondFM\) improves over \(\VanillaFM\) on every task, raising the average accuracy from \(65.9\%\) to \(76.7\%\). Larger gains appear on
TSP (+15.7\%) and Sudoku (+19.5\%), while notable improvements are also observed on the other tasks. Thus self-conditioning extends its advantage from our validation tasks to the task suite of \citet{he2025diffthinker}. Appendix~\ref{app:structured_examples} shows some generated examples.

\paragraph{Large-scale comparisons.} To evaluate the method at a larger scale, we fine-tune \(\SelfCondFM\) as a LoRA adapter on the 20B-parameter \texttt{Qwen-Image-Edit-2511} backbone~\citep{qwenteam2025qwenimageedit2511} 
following the setup of \citet{dai2026endocot}, and compare it with the published baselines. Jigsaw is reported separately in Appendix~\ref{app:jigsaw_benchmark}, because EndoCoT~\citep{dai2026endocot} does not evaluate it.

As shown in \Tref{tab:diffthinker_large}, task-specific \(\SelfCondFM\) adapters achieve \(90.0\%\) average accuracy, outperforming the purely visual-generative DiffThinker baseline (\(75.8\%\)). EndoCoT (\(88.3\%\)) additionally guides image generation using latent CoT in a multimodal language model, so we include it as a reference rather than a directly comparable baseline. A single adapter trained jointly on the four tasks reaches \(90.2\%\), matching the per-task adapters and exceeding the unified DiffThinker (\(66.1\%\)) and EndoCoT (\(76.6\%\)) results by large margins.

\subsection{Additional Empirical Analyses}
\label{sec:theory_aligned_analysis}

Finally, we provide some empirical analyses to illustrate how the trained models behave during inference (detailed in Appendix~\ref{app:secondary_analysis}):
\begin{itemize}[leftmargin=*,nosep]
    \item \textbf{Lipschitzness.} We estimate the Lipschitz constant of a \(\VanillaFM\) model trained on mixed-length line tracing instances. The estimate levels off at a roughly constant value of 400 for \(m\ge30\), consistent with the bounded-Lipschitz regime assumed in \Sref{sec:vanillabarrier} (Appendix~\ref{app:lipschitzness}).
    \item \textbf{Attention allocation.} The endpoint prediction \(z_i\) in \(\SelfCondFM\) receives a nontrivial share of attention throughout the generation, with a larger share early on when \(x_i\) is still noisy. This indicates that the model indeed uses $z_i$ to guide its generation (Appendix~\ref{app:attention_allocation}).
    \item \textbf{Parallel and revisable reasoning.} Reasoning with flow matching has some intrinsic advantages, and \(\SelfCondFM\) preserves them well. The model can form multiple local results in parallel within one trajectory and correct a wrong mid-trajectory generation in place 
    (Appendix~\ref{app:trajectory_analysis}).
\end{itemize}

\section{Related Work}

\paragraph{Generative visual reasoning.}
Multimodal large language models (MLLMs) typically treat images as static inputs and externalize reasoning through textual CoT.
The emerging \emph{thinking with images} paradigm instead turns vision into an intermediate workspace that can be repeatedly inspected or modified~\citep{su2025thinking}. Early approaches manipulate images through tool calls~\citep{su2025openthinkimg,su2026pixel,lai2026mini}, while other approaches internalize visual intermediates within the model, in the form of generated images, latent visual states, or learned visual representations~\citep{xu2026visual,yang2026machine,zhang2025latent,qin2025chain,wang2026monet,gu2026thinkmorph,li2026latentvisual,cheng2026omnir1}. These intermediates give the model a richer workspace for reasoning than text alone.

Video generation offers another medium for reasoning, representing evolving intermediate states as a sequence of frames~\citep{wiedemer2025video,tong2026thinking}. Recent works further suggest that video diffusion models~\citep{ho2022video,xing2024survey} reason across denoising steps: they explore candidate solutions, correct earlier mistakes, and commit early to high-level plans~\citep{wang2026demystifying,newman2026early}.

Recent work moves reasoning into image generation itself: DiffThinker uses direct image-to-image generation~\citep{he2025diffthinker}, while EndoCoT
bridges latent states in an MLLM's CoT to image generation~\citep{dai2026endocot}. However, the computational structure of generative visual reasoning remains underexplored.
\citet{liu2025perfect} establishes a $\mathsf{TC}^0$ upper bound for optimal
denoising diffusion probabilistic models, but requires the model to match the score function exactly, an assumption that rarely holds for practical trained models.

\paragraph{Self-conditioning.}
Self-conditioning feeds a model's endpoint prediction back into a
subsequent denoising step. It was introduced by \citet{chen2022analog} for diffusion over discrete data,
and has since been adapted to continuous diffusion language modeling, image and video generation, and molecular flow
matching~\citep{strudel2022self,jabri2023scalable,gupta2024photorealistic,stark2023harmonic}.
Prior work primarily studies generation quality and efficiency empirically; we instead treat self-conditioning as maintaining a computational state and analyze its expressive power.

\paragraph{Computational Expressiveness.}
The expressive power of transformers~\citep{vaswani2017attention} has been studied extensively through circuit complexity, formal languages, and logic~\citep{strobl2024formal}. Depending on the attention mechanism and numerical precision, constant-depth transformers are upper-bounded by \(\mathsf{AC}^0\) or \(\mathsf{TC}^0\)~\citep{hao2022formal,merrill2022saturated,merrill2023parallelism,chiang2024transformers}, and such circuit bounds extend to modern components such as rotary positional embeddings~\citep{chen2025circuit}. \citet{chiang2023tighter} further tighten both the upper and lower bounds through a logical characterization.

Repeated model calls can lift computation beyond these single-evaluation bounds. With chain-of-thought~\citep{wei2022chain}, a constant-depth transformer writes its intermediate results into the autoregressive context and can thereby solve inherently serial problems~\citep{li2024chain,feng2023towards,merrill2024expressive}. \citet{amiri2025lower} complement this with lower bounds on the number of chain-of-thought steps that hard-attention transformers need for various algorithmic problems. Beyond autoregressive generation, \citet{feng2025theoretical} establish theoretical bounds for masked diffusion language models, proving that while constant sampling steps suffice for distribution matching, sequential reasoning requires sampling steps that scale with sequence length.

\section{Conclusion and Discussion}
\label{sec:conclusion}

This work reveals a previously unrecognized role of self-conditioning in generative reasoning.
We show that vanilla flow matching cannot deepen sequential reasoning through additional sampling steps. Self-conditioning breaks this bottleneck by passing endpoint predictions across steps, which lets additional sampling steps translate into reasoning depth. Experiments across diverse visual-reasoning tasks confirm this separation: self-conditioning achieves better performance and consistently benefits from more sampling steps.

These theoretical guarantees, however, rely on certain regularity assumptions. In particular, the upper bound in Theorem~\ref{thm:vanillaupperbound} requires a fairly strong Lipschitz restriction. Moreover, the Lipschitz estimates in \Sref{sec:theory_aligned_analysis} are relatively large and provide empirical rather than certified bounds (Appendix~\ref{app:lipschitzness}). Since the dependence on the Lipschitz constant is tight up to constant factors (\Sref{sec:theoryextensions}), future work may explore alternative forms of regularity conditions.

Although we focus on image-space visual reasoning, our theoretical analysis does not depend on the data modality. It therefore applies wherever flow matching serves as the sampling backbone, including video generation~\citep{kong2024hunyuanvideo,ifriqi2026flowception}, continuous diffusion language modeling~\citep{chen2026langflow}, and visuomotor policy learning~\citep{black2024pi}. Whether self-conditioning delivers similar empirical gains
remains an open question. Nevertheless, this perspective may inform future sampler design in settings where iterative generation is expected to reason.

\section*{AI Disclosure}

For this work, we used generative AI tools to (i) generate synthetic datasets for the tasks in \Sref{sec:theory_aligned_tasks}; (ii) provide insights into reducing the number of sequential rounds from $\mathrm e^{O(L)}$ to $O(L)$ by replacing the Euler method with Picard iterations in \Sref{sec:vanillabarrier}; (iii) assist in writing the proofs; and (iv) implement the training codebase. We did not use generative AI tools to develop theoretical models or conceptual frameworks, formulate mathematical claims, design or provide feedback on research methodology or experiments, support qualitative or thematic data analysis, or interpret results. The tasks of proposing or refining hypotheses, translating text, and cleaning or reformatting datasets were not applicable to this work. Additionally, we used generative AI tools to create and refine figures, to assist in drafting and editing the paper at the paragraph level to improve readability, to identify relevant literature, and to format references.

We have reviewed all AI-assisted work. All AI-assisted proof writing was reviewed by at least two authors other than the original writers. All AI-assisted language editing was reviewed by at least one author other than those who originally drafted the corresponding text. The core experimental codebase and synthetic datasets were reviewed by at least one author. All figures, relevant literature, and the final reference formatting were verified by at least one author. We take full responsibility for the final content of this work, including all text, claims, and artifacts produced with the aid of generative AI.

\section*{Reproducibility Statement}

For the theoretical results, \Sref{sec:fmsetup} specifies the theoretical setup, and \Sref{sec:expressivity} states the assumptions of each theorem. Appendices~\ref{app:proofs} and~\ref{app:extensions} give complete proofs of the main theorems and their extensions, and Appendix~\ref{app:task_complexity} formally defines the three synthetic tasks in~\Sref{sec:theory_aligned_tasks} and proves their circuit-complexity classifications. For the experiments, Appendix~\ref{app:architecture} describes the backbones, the implementation of self-conditioning, and the training-FLOP accounting. Appendix~\ref{app:tasks_setup} details the data generation, training hyperparameters, and evaluation protocol for each synthetic task, Appendix~\ref{app:multi-seed-diagnostic} reports results over multiple seeds, and Appendix~\ref{app:nfe_scaling} elaborates the selection of checkpoints and the evaluation for our NFE-scaling study. Appendix~\ref{app:diffthinker_setup} describes the data preparation, evaluators, and training configurations for the structured visual-reasoning tasks in~\Sref{sec:diffthinker_tasks}, which follow \citet{dai2026endocot} and \citet{he2025diffthinker}.

\bibliography{input/references}
\bibliographystyle{iclr2027_conference}

\clearpage
\appendix
\tableofcontents
\clearpage
\section{Proofs of Main Expressivity Theorems}
\label{app:proofs}

This appendix proves the two main theoretical results from~\Sref{sec:expressivity}: Theorem~\ref{thm:vanillaupperbound} and Theorem~\ref{thm:selfcondlowerbound}. Building on the theoretical setup in~\Sref{sec:fmsetup}, Appendix~\ref{app:proofconventions} first fixes the notation and the encoding in which circuits read and write elements of $F_b$, and collects the preliminary facts. Appendix~\ref{app:vanillaub} then proves Theorem~\ref{thm:vanillaupperbound} with the help of Lemma~\ref{lem:picard}, and Appendix~\ref{app:scondlb} proves Theorem~\ref{thm:selfcondlowerbound}. The extension results of~\Sref{sec:theoryextensions} are proved separately in Appendix~\ref{app:extensions}.

\subsection{Conventions and Preliminary Facts}
\label{app:proofconventions}

Throughout this appendix, $m$ denotes the problem size and $\mathsf{D}\in\{\mathsf{NC}^{k+1},\mathsf{AC}^k,\mathsf{TC}^k\}$ denotes a circuit complexity class for a constant integer $k\ge0$; we abbreviate $b=b(m)$, $n=n(m)$, $K=K(m)$, and $\delta:=2^{-b}$. All circuit families are non-uniform.

\paragraph{Relations among complexity classes.} It is known that $\mathsf{NC}^k\subseteq\mathsf{AC}^k\subseteq\mathsf{TC}^k\subseteq\mathsf{NC}^{k+1}$ for all $k\ge0$~\citep{arora2009computational}. Consequently, $\mathsf{NC}=\bigcup_k\mathsf{AC}^k=\bigcup_k\mathsf{TC}^k$. Furthermore, $\mathsf{AC}^0\subsetneq\mathsf{TC}^0$ because $\mathrm{PARITY}\in\mathsf{TC}^0\setminus\mathsf{AC}^0$~\citep{furst1984parity,hastad1986almost}, whereas $\mathsf{TC}^0\subsetneq\mathsf{NC}^1$ is widely conjectured but open~\citep{vollmer1999introduction}. By definition, $\mathsf{P}\subseteq\mathsf{P/poly}$ and $\mathsf{PSPACE}\subseteq\mathsf{PSPACE/poly}$. Moreover, the class $\mathsf{P/poly}$ is exactly the class of languages decidable by polynomial-size families of Boolean circuits~\citep{arora2009computational}, so $\mathsf{NC}\subseteq\mathsf{P/poly}\subseteq\mathsf{PSPACE/poly}$. Both inclusions are widely conjectured to be strict~\citep{arora2009computational}.

\paragraph{Encoding of scalars.} Fix a binary encoding of the scalars of $F_b$ in which a scalar of magnitude at most $2^\ell$ occupies $O(b+\ell)$ bits. Every scalar in the runs of this appendix has magnitude at most $\poly(m)$, i.e., $\ell=O(\log m)$, and all circuit constructions have $b=\poly(m)$. Their encodings thus have $\poly(m)$ bits. When $b$ is additionally constrained to $O(\log m)$, the encodings have $O(\log m)$ bits.

\paragraph{Circuits for models.} We say that a model $u$ is computed by $\mathsf{D}$ circuits if, under the encoding above, its restriction to inputs of each total encoded length is computed by a $\mathsf{D}$ circuit of size polynomial in that length. The Boolean input $w$ needs no convention of its own, since $\{0,1\}\subset F_b$.

\paragraph{Finite-precision arithmetic.} The grid $F_b=\{k2^{-b}:k\in\Z\}$ is closed under addition, under subtraction, and under multiplication by an integer, so these operations need no rounding. Iterated sums are independent of the association order. For every $x\in \R$ we have $|\round_b(x)-x|\le\delta/2$, and $\round_b$ is nondecreasing and fixes $F_b$ (i.e., $\round_b(x)=x$ for all $x\in F_b$).

\paragraph{Time grid.} Both samplers (Algorithms~\ref{alg:vanillafm} and~\ref{alg:selfcondfm}) use discrete time steps $t_i=\round_b(s_i)$ with $s_i:=i/K$; write $h_i:=t_{i+1}-t_i$. Since $\round_b$ is nondecreasing and fixes $0$ and $1$, we have
\[
    t_0=0,\qquad t_K=1,\qquad h_i\ge0,\qquad \sum_{i=0}^{K-1}h_i=1,
\]
and $|t_i-s_i|\le\delta/2$ gives $|h_i-1/K|\le\delta$.

\paragraph{Hardwiring the initialization.} In both samplers the initial state $x_0=\round_b(\tilde x_0)$ ranges over the finite set $(F_b\cap[-1,1])^n$, since $p_{\text{prior}}$ is supported on $[-1,1]^n$, and the rest of the run is deterministic given $x_0$. If a model accepts $\mathcal{L}$ with margin $c$ under a fixed sampler, the acceptance criterion holds for every $x_0$ occurring with positive probability. Since $p_{\text{prior}}$ does not depend on $w$, fixing one such value $\hat x_0=\hat x_0(m)$ for each $m$ turns the sampler into a deterministic algorithm with the same guarantee. We do this whenever a sampler is regarded as a circuit family.

\paragraph{The table bound.} A function $f$ of few bits is shallow, however complicated it is: writing each output bit of an arbitrary $f:\{0,1\}^j\to\{0,1\}^{j'}$ as a disjunctive normal form over the $j$ input literals gives an unbounded-fan-in circuit of depth two and size $O(j'2^j)$, hence of size $\poly(m)$ once $j=O(\log m)$ and $j'=\poly(m)$. We call this the \emph{table bound}; it exploits no structure of $f$, only how many bits $f$ reads and writes. In the proofs below, we frequently invoke it for maps on constantly many scalars, each of $O(\log m)$ bits.

\subsection{Expressiveness Upper Bound for \texorpdfstring{$\VanillaFM$}{VanillaFM}}
\label{app:vanillaub}

The $K$ Euler steps of $\VanillaFM$ are sequential, so a circuit that follows them has depth growing with $K$. We replace them by a Picard iteration~\citep{hairer1993solving} that updates all $K$ positions at once, and whose propagated error decays factorially across iteration rounds, up to an additive error floor (Appendix~\ref{app:recurrence}). Therefore, $O(L)$ rounds already approximate the sampler's output to within a constant error; this proves Lemma~\ref{lem:picard} in Appendix~\ref{app:pfpicard}. Aside from the $K$ parallel model calls, each round computes normalized prefix sums of the velocities, which Appendix~\ref{app:prefixsums} approximates in $\mathsf{AC}^0$ up to a constant error, small enough for acceptance with margin (see \Sref{sec:fmsetup}). Appendix~\ref{app:pfvanilla} substitutes these approximate sums into the Picard iteration to prove Theorem~\ref{thm:vanillaupperbound}.

\subsubsection{The Error Recurrence}
\label{app:recurrence}

Appendices~\ref{app:pfpicard} and~\ref{app:pfvanilla} both compare an approximate trajectory with the exact one through the same error recurrence, so we bound it here once and for all. The error accumulates over multiple steps. At each step, the error at each position is bounded by a fixed linear transformation of the sum of the errors at all preceding positions in the previous step, plus an additive error. Consequently, the initial error decays factorially across steps, while the additive errors accumulate to a magnitude of order $\mathrm{e}^L$.

\begin{lemma}\label{lem:picarderror}
Let $K\in\Np$ be a positive integer, $B>0$, $L\ge1$, and $\alpha,\beta\ge0$ be real numbers, and let $C:=\lceil\mathrm eL+4B\rceil$. Recall that $s_i:=i/K$ (Appendix~\ref{app:proofconventions}). Let $(e_i^{(r)})_{0\le i\le K,\,0\le r\le C}$ be nonnegative reals such that, for all $i$ and all $r<C$,
\begin{align}
    e_i^{(0)}&\le Bs_i+\alpha,\label{eq:picardbase}\\
    e_i^{(r+1)}&\le\frac{L}{K}\sum_{j=0}^{i-1}e_j^{(r)}+\beta.\label{eq:picardrec}
\end{align}
Then for all $0\le i\le K$ and $0\le r\le C$,
\begin{equation}\label{eq:picardinduction}
    e_i^{(r)}\le\frac{BL^rs_i^{r+1}}{(r+1)!}+\frac{\alpha L^rs_i^r}{r!}+\beta\sum_{q=0}^{r-1}\frac{L^qs_i^q}{q!},
\end{equation}
where $0^0:=1$ and the sum is read as empty when $r=0$. In particular, $e_K^{(C)}\le\frac14+\alpha+\beta\mathrm e^L$.
\end{lemma}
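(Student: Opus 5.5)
The plan is induction on $r$, proving \eqref{eq:picardinduction} simultaneously for all $i$ at each round, and then specializing to $i=K$, $r=C$. The base case $r=0$ is exactly \eqref{eq:picardbase}: the first term is $Bs_i$, the second is $\alpha$ (using $0^0=1$), and the $\beta$-sum is empty.

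For the inductive step, assume \eqref{eq:picardinduction} holds at round $r$ for all $j$, and substitute it into \eqref{eq:picardrec}. Each term then requires a bound on a Riemann sum of the form $\frac{1}{K}\sum_{j=0}^{i-1}s_j^{p}$. Since $x\mapsto x^p$ is nondecreasing on $[0,1]$ and $s_j=j/K$, a left-endpoint sum is dominated by the integral:
\[
    \frac1K\sum_{j=0}^{i-1}s_j^{p}\le\int_0^{s_i}x^p\,\mathrm dx=\frac{s_i^{p+1}}{p+1}.
\]
This holds for all $p\ge0$. For $p=0$ the sum equals $i/K=s_i$, so the bound is exact.

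Apply this bound term by term.
\begin{itemize}
\item The $B$ term gives $\frac{L}{K}\sum_j \frac{BL^r s_j^{r+1}}{(r+1)!}\le \frac{BL^{r+1}s_i^{r+2}}{(r+2)!}$.
\item The $\alpha$ term gives $\frac{\alpha L^{r+1}s_i^{r+1}}{(r+1)!}$.
\item Each $\beta$ term $\frac{\beta L^q s_j^q}{q!}$ becomes $\frac{\beta L^{q+1}s_i^{q+1}}{(q+1)!}$.
\end{itemize}
Reindexing $q\mapsto q+1$ turns the $\beta$ contributions into the sum over $q=1,\dots,r$. Adding the fresh $\beta$ from \eqref{eq:picardrec} supplies the $q=0$ term. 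This reproduces \eqref{eq:picardinduction} at round $r+1$.

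For the final claim, set $s_K=1$ and $r=C$, which gives
\[
    e_K^{(C)}\le\frac{BL^C}{(C+1)!}+\frac{\alpha L^C}{C!}+\beta\sum_{q<C}\frac{L^q}{q!}.
\]
The $\beta$ sum is at most $\beta\mathrm e^L$.

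The main obstacle is showing that the factorial terms are small, using only $C\ge \mathrm eL+4B$. By Stirling's bound $C!\ge (C/\mathrm e)^C$ we get $L^C/C!\le(\mathrm eL/C)^C\le1$, so the $\alpha$ term is at most $\alpha$.

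For the $B$ term, write
\[
    \frac{BL^C}{(C+1)!}=\frac{B}{C+1}\cdot\frac{L^C}{C!}\le\frac{B}{C+1}.
\]
Since $C+1> 4B$, this is below $\frac14$.

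I will double-check the Stirling inequality $C!\ge (C/\mathrm e)^C$, which follows from $\mathrm e^C\ge C^C/C!$. That is the only nontrivial estimate, and it completes the proof.
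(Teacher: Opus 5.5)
Your proposal is correct and follows essentially the same route as the paper: induction on $r$ with the left-endpoint Riemann bound $\frac1K\sum_{j<i}s_j^p\le s_i^{p+1}/(p+1)$ applied termwise, the fresh $\beta$ supplying the $q=0$ term, and then Stirling's inequality $C!\ge(C/\mathrm e)^C$ together with $C\ge\mathrm eL$ and $C+1>4B$ for the final bound $e_K^{(C)}\le\frac14+\alpha+\beta\mathrm e^L$.
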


\begin{proof}
We prove~\eqref{eq:picardinduction} by induction on $r$. For $r=0$ the right-hand side is $Bs_i+\alpha$, so~\eqref{eq:picardinduction} is~\eqref{eq:picardbase}. Assume that~\eqref{eq:picardinduction} holds for $r$ and all $i$. Since the function $\vartheta\mapsto\vartheta^q$ is nondecreasing on $[0,\infty)$ and $s_{j+1}-s_j=1/K$, every integer $q\ge0$ satisfies the Riemann bound
\begin{equation}\label{eq:riemann}
    \frac1K\sum_{j=0}^{i-1}s_j^q=\frac{1}{K} \sum_{j=0}^{i-1} (j/K)^q \le\int_0^{s_i}\vartheta^q\,\mathrm d\vartheta=\frac{s_i^{q+1}}{q+1}.
\end{equation}
Summing~\eqref{eq:picardinduction} over $j<i$ and applying~\eqref{eq:riemann} termwise gives
\begin{align*}
    \frac{L}{K}\sum_{j=0}^{i-1}e_j^{(r)}
    &\le\frac{1}{K}\sum_{j=0}^{i-1} \left(\frac{BL^{r+1}s_j^{r+1}}{(r+1)!}+\frac{\alpha L^{r+1}s_j^r}{r!}+\beta\sum_{q=0}^{r-1}\frac{L^{q+1}s_j^q}{q!}\right)\\
    &\le\frac{BL^{r+1}s_i^{r+2}}{(r+1)!\,(r+2)}+\frac{\alpha L^{r+1}s_i^{r+1}}{r!\,(r+1)}+\beta\sum_{q=0}^{r-1}\frac{L^{q+1}s_i^{q+1}}{q!\,(q+1)}\\
    &=\frac{BL^{r+1}s_i^{r+2}}{(r+2)!}+\frac{\alpha L^{r+1}s_i^{r+1}}{(r+1)!}+\beta\sum_{q=1}^{r}\frac{L^{q}s_i^{q}}{q!}.
\end{align*}
By~\eqref{eq:picardrec}, adding $\beta=\beta L^0s_i^0/0!$ to this bound yields~\eqref{eq:picardinduction} for $r+1$.

For the last claim, put $i=K$ and $r=C$ in~\eqref{eq:picardinduction}, so that $s_K=1$. Stirling's inequality $C!\ge(C/\mathrm e)^C$ together with $C\ge\mathrm eL$ gives $L^C/C!\le(\mathrm eL/C)^C\le1$, whence $\alpha L^C/C!\le\alpha$ and
\[
    \frac{BL^C}{(C+1)!}=\frac{B}{C+1}\cdot\frac{L^C}{C!}\le\frac{B}{4B+1}\le\frac14.
\]
The remaining sum satisfies $\sum_{q<C}L^q/q!\le\sum_{q\ge0}L^q/q!=\mathrm e^L$, and adding the three terms yields
\[
    e_K^{(C)}\le \frac{BL^C}{(C+1)!}+\frac{\alpha L^C}{C!}+\beta\sum_{q=0}^{C-1}\frac{L^q}{q!}\le\frac14+\alpha+\beta\mathrm e^L,
\]
which completes the proof.
\end{proof}

\subsubsection{Proof of Lemma~\ref{lem:picard}}
\label{app:pfpicard}

Up to rounding, the $K$ Euler updates of Algorithm~\ref{alg:vanillafm} state that the trajectory $(x_i)_{i=0}^K$ solves the fixed-point system $x_i=x_0+\sum_{j<i}h_ju(x_j,t_j;w)$, with $h_j:=t_{j+1}-t_j$ as in Appendix~\ref{app:proofconventions}. Algorithm~\ref{alg:parallelpicard} solves this fixed-point system by a Picard iteration~\citep{hairer1993solving}: iteration $r$ recomputes all positions $y_0^{(r+1)},\ldots,y_K^{(r+1)}$ in parallel from the previous iterate $(y_i^{(r)})_{i=0}^K$, replacing the rounded step sizes $h_j$ by the uniform nominal step $1/K$, implemented by dividing each prefix sum $S_i^{(r)}$ by $K$ (line~8 of Algorithm~\ref{alg:parallelpicard}). We show that $C=O(L)$ iterations bring its endpoint $y_K^{(C)}$ within $\frac12$ of $x_K$, thereby proving Lemma~\ref{lem:picard}.

\begin{algorithm}[H]
\caption{Parallel Picard Inference}
\label{alg:parallelpicard}
\begin{algorithmic}[1]
\Function{ParallelPicard}{$w;u,K,C$}
    \State $t_i\leftarrow\round_b(i/K)$ for $i=0,\ldots,K$
    \State Sample $\tilde{x}_0\sim p_{\text{prior}}$; $x_0\leftarrow\round_b(\tilde{x}_0)$
    \State $y_i^{(0)}\leftarrow x_0$ for $i=0,\ldots,K$
    \For{$r=0,\ldots,C-1$}
        \State \textbf{in parallel for} $j=0,\ldots,K-1$: \ $v_j^{(r)}\leftarrow u(y_j^{(r)},t_j;w)$
        \State $S_i^{(r)}\leftarrow\sum_{j=0}^{i-1}v_j^{(r)}$ for $i=0,\ldots,K$
        \State \textbf{in parallel for} $i=0,\ldots,K$: \ $y_i^{(r+1)}\leftarrow\round_b\bigl(x_0+S_i^{(r)}/K\bigr)$
    \EndFor
    \State \Return $y_K^{(C)}$
\EndFunction
\end{algorithmic}
\end{algorithm}

\begin{proof}[Proof of Lemma~\ref{lem:picard}]
Fix $m$ and $w\in\{0,1\}^m$, and write $L=L(m)$. Let $c_0=c_0(B)>0$ be a constant to be chosen at the end of the proof, and take the inference algorithm to be $\textsc{ParallelPicard}(w;u,K,C)$ with
\[
    C:=\lceil\mathrm eL+4B\rceil.
\]
Both $\VanillaFM$ and $\textsc{ParallelPicard}$ draw $\tilde x_0$ from $p_{\text{prior}}$ and round it, so we may couple the two runs by giving them the same $x_0$. The estimates below hold for every realization of $x_0$, which is the only random quantity in either run, and therefore hold with probability~$1$.

\paragraph{The two trajectories.} Let $(x_i)_{i=0}^K$ be the $\VanillaFM$ trajectory and $v_j:=u(x_j,t_j;w)$. In line~7 of Algorithm~\ref{alg:vanillafm}, the subtraction $t_{i+1}-t_i$ and the addition are exact in $F_b$ (Appendix~\ref{app:proofconventions}). Only the product needs rounding, so $x_{i+1}=x_i+\round_b(h_iv_i)$, and each step contributes a rounding residual $\round_b(h_iv_i)-h_iv_i$ of size at most $\delta/2$. Summing these residuals into $\rho_i:=\sum_{j<i}\bigl(\round_b(h_jv_j)-h_jv_j\bigr)$ gives
\begin{equation}\label{eq:vanillatraj}
    x_i=x_0+\sum_{j=0}^{i-1}h_jv_j+\rho_i
    \qquad \left(\|\rho_i\|_\infty\le\frac{i\delta}{2}\le\frac{K\delta}{2}\right).
\end{equation}
Likewise the prefix sum $S_i^{(r)}$ in line~7 of Algorithm~\ref{alg:parallelpicard} is a sum of elements of $F_b$ and hence exact, and $x_0\in F_b$, so line~8 rounds only once. Writing $\rho_i^{(r)}$ for this rounding residual, line~8 gives
\begin{equation}\label{eq:picardtraj}
    y_i^{(r+1)}=x_0+\frac1K\sum_{j=0}^{i-1}v_j^{(r)}+\rho_i^{(r)}
    \qquad \left(\|\rho_i^{(r)}\|_\infty\le\frac{\delta}{2}\right).
\end{equation}
We bound the difference between~\eqref{eq:vanillatraj} and~\eqref{eq:picardtraj} in two steps: we first verify the hypotheses~\eqref{eq:picardbase} and~\eqref{eq:picardrec} of Lemma~\ref{lem:picarderror} for it, and then apply the lemma. The two expressions differ in three respects, which the first step tracks: the weights $1/K$ against $h_j$, the velocities $v_j^{(r)}$ against $v_j$, and the rounding residuals $\rho_i$ and $\rho_i^{(r)}$.

\paragraph{Verifying the conditions of Lemma~\ref{lem:picarderror}.} Put $e_i^{(r)}:=\|y_i^{(r)}-x_i\|_\infty$, and set $\alpha:=\tfrac12(B+1)K\delta$ and $\beta:=(B+1)K\delta$. For $r=0$, using $h_j\ge0$, $\sum_{j<i}h_j=t_i$, $t_i\le s_i+\delta/2$ (Appendix~\ref{app:proofconventions}), and the $B$-boundedness $\|v_j\|_\infty\le B$ of $u$, we have
\[
    e_i^{(0)}=\|x_0-x_i\|_\infty\le Bt_i+\frac{K\delta}{2}\le Bs_i+\frac{(B+K)\delta}{2}\le Bs_i+\alpha,
\]
where the last step uses $B+K\le(B+1)K$ for $K\ge1$. This is~\eqref{eq:picardbase} in Lemma~\ref{lem:picarderror}. For $0\le r<C$, subtract~\eqref{eq:vanillatraj} from~\eqref{eq:picardtraj}:
\[
    y_i^{(r+1)}-x_i=\frac1K\sum_{j=0}^{i-1}\bigl(v_j^{(r)}-v_j\bigr)+\sum_{j=0}^{i-1}\Bigl(\frac1K-h_j\Bigr)v_j+\rho_i^{(r)}-\rho_i.
\]
The two velocities in the first sum are evaluated at the same time $t_j$ and the same condition $w$: $v_j^{(r)}=u(y_j^{(r)},t_j;w)$ and $v_j=u(x_j,t_j;w)$. Therefore, $L$-Lipschitz continuity of $u$ gives $\|v_j^{(r)}-v_j\|_\infty\le Le_j^{(r)}$. By $B$-boundedness and $|h_j-1/K|\le\delta$ (Appendix~\ref{app:proofconventions}), the second sum is bounded by $B\sum_{j<K}|1/K-h_j|\le BK\delta$. By the residual bounds in~\eqref{eq:vanillatraj} and~\eqref{eq:picardtraj}, the last two terms are bounded by $(K+1)\delta/2\le K\delta$. Thus,
\[
    e_i^{(r+1)}\le\frac{L}{K}\sum_{j=0}^{i-1}e_j^{(r)}+BK\delta+K\delta=\frac{L}{K}\sum_{j=0}^{i-1}e_j^{(r)}+\beta,
\]
which is~\eqref{eq:picardrec} in Lemma~\ref{lem:picarderror}.

\paragraph{Applying Lemma~\ref{lem:picarderror}.} With~\eqref{eq:picardbase} and~\eqref{eq:picardrec} verified, Lemma~\ref{lem:picarderror}, applied with the parameters $B$, $L$, $\alpha$, $\beta$ above and the same $C=\lceil\mathrm eL+4B\rceil$, yields $e_K^{(C)}\le\frac14+\alpha+\beta\mathrm e^L$. Since $\delta=2^{-b}$, the precision hypothesis~\eqref{eq:picardprecision} is equivalent to $K\delta\le2^{-c_0}\mathrm e^{-L}$, so, using $\mathrm e^{-L}\le1$,
\[
    \alpha+\beta\mathrm e^L\le\frac{(B+1)2^{-c_0}}{2}+(B+1)2^{-c_0}=\frac32(B+1)2^{-c_0}.
\]
Choose $c_0=c_0(B)$ so that $\frac32(B+1)2^{-c_0}\le\frac14$; then $\|y_K^{(C)}-x_K\|_\infty=e_K^{(C)}\le\frac12$. 

Since $u$ accepts $\mathcal{L}$ with margin $1$ under $\VanillaFM$, every coordinate of $x_K$ is at least $1$ when $w\in\mathcal{L}$ and at most $-1$ otherwise, so every coordinate of the returned value $y_K^{(C)}$ is at least $\frac12$ in the first case and at most $-\frac12$ in the second. Thus $u$ accepts $\mathcal{L}$ with margin $\frac12$ under Algorithm~\ref{alg:parallelpicard}, which runs for $C=O(L)$ iterations because $L\ge1$ and $B$ is a constant, and makes $K$ parallel calls to $u$ in each iteration.
\end{proof}

Aside from the model calls in line~6, one iteration of Algorithm~\ref{alg:parallelpicard} performs only the prefix sums of line~7 and the normalization of line~8. When $K=\poly(m)$ and $b=\poly(m)$, the prefix sums are iterated additions of polynomially many numbers of $\poly(m)$ bits each, while each normalization uses addition and division by the hardwired $K$, and $\mathsf{TC}^0$ circuits compute all these operations exactly~\citep{vollmer1999introduction,hesse2002uniform}.
For constant $L$, Lemma~\ref{lem:picard} together with this observation essentially gives the $\mathsf{TC}^0$ part of Theorem~\ref{thm:vanillaupperbound} (Appendix~\ref{app:pfvanilla} gives the complete proof for every $\mathsf D$). The same iteration gives the growing-$L$ extension in \Sref{sec:theoryextensions}, i.e., Corollary~\ref{cor:vanillapolylogl}.

Picard iterations have also been used to accelerate diffusion sampling~\citep{shih2023parallel}, with a worst-case guarantee of as many iterations as sampling steps and fewer iterations observed empirically. We instead use Picard iteration as a proof device for expressivity: Lemma~\ref{lem:picard} shows that the $K$ sequential steps of $\VanillaFM$ collapse into $O(L)$ parallel rounds regardless of $K$, which caps the sequential depth that $\VanillaFM$ can exploit.

As a side remark, this collapse relies on approximating the endpoint only to constant accuracy, which suffices for acceptance with margin. When exponentially small additive error is required, even a Lipschitz continuous ODE can be hard to solve: \citet{kawamura2010lipschitz} constructs an ODE with a polynomial-time computable, Lipschitz continuous right-hand side whose solution is $\mathsf{PSPACE}$-complete, with the answer encoded in the solution at precision $2^{-\poly(m)}$.

\subsubsection{Approximate Normalized Prefix Sums in \texorpdfstring{$\mathsf{AC}^0$}{AC0}}
\label{app:prefixsums}

The exact prefix sums in line~7 of Algorithm~\ref{alg:parallelpicard}, however, lie beyond $\mathsf{AC}^0$: the low-order bit of a sum is the parity of the summands' low-order bits~\citep{furst1984parity,hastad1986almost}. To obtain the $\mathsf{AC}^0$ part of Theorem~\ref{thm:vanillaupperbound}, we approximate the sums, which suffices because the Picard iteration needs each normalized prefix sum only to within a constant additive error, which merely enlarges $\beta$ in~\eqref{eq:picardrec}. Quantizing the summands to constantly many levels makes each normalized prefix sum an affine function of a \emph{count of ones}, and $\mathsf{AC}^0$ approximates such a count to within any constant fraction of the string length~\citep{ajtai1990approximate}. Formally, we have the following lemma.

\begin{lemma}\label{lem:ac0prefix}
Let $M\ge1$ and $\varepsilon\in(0,1)$ be constant reals. Let $N=N(m)\le\poly(m)$ be a positive integer, and let the number of fractional bits $b=b(m)=O(\log m)$ satisfy $2^{-b}\le\varepsilon$. There is a family of $\mathsf{AC}^0$ circuits of size $\poly(m)$ that maps any input $g_0,\ldots,g_{N-1}\in F_b\cap[-M,M]$ to outputs $A_0,\ldots,A_N\in F_b$ with
\[
    \Bigl|A_i-\frac1N\sum_{j=0}^{i-1}g_j\Bigr|\le\varepsilon
    \qquad\text{for all }0\le i\le N.
\]
\end{lemma}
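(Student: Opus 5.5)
The plan is to quantize, reduce to approximate counting, and let the table bound absorb the arithmetic on few bits. The paragraph before the statement already sketches this route.

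\textbf{Quantization.} Fix a constant $q$, to be chosen in terms of $M$ and $\varepsilon$, and let $\eta:=1/q$. Round each summand $g_j$ to the grid $\{-M,-M+\eta,\ldots,M\}$ by writing $g_j=-M+\eta\,c_j+\xi_j$, where $c_j\in\{0,1,\ldots,2Mq\}$ is an integer and $|\xi_j|\le\eta$. Since $g_j$ has $O(\log m)$ bits, the table bound gives a depth-two circuit of size $\poly(m)$ that computes $c_j$. We then expand $c_j$ in unary as $c_j=\sum_{\ell=1}^{2Mq}\mathbf 1[c_j\ge\ell]$, which yields $2Mq$ bits per summand. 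Summing over $j<i$,
\[
\frac1N\sum_{j<i}g_j=-\frac{Mi}{N}+\frac{\eta}{N}\,T_i+\frac1N\sum_{j<i}\xi_j,
\]
where $T_i$ is the number of ones among the first $2Mq\cdot i$ unary bits. The last term is at most $\eta$ in absolute value because $i\le N$. Taking $\eta\le\varepsilon/4$ makes this quantization error at most $\varepsilon/4$.

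\textbf{Approximate counting.} For each prefix $i$ (there are $N+1$ of them, computed in parallel), we pad the relevant bit string with zeros to the fixed length $R:=2MqN$. We then apply the approximate-counting result of \citet{ajtai1990approximate}, taking it as given that for any constant $\gamma>0$ there are $\mathsf{AC}^0$ circuits of size $\poly(R)$ that output $\tilde T$ with $|\tilde T-T|\le\gamma R$. The error this introduces in $\frac{\eta}{N}T_i$ is at most $\eta\gamma\cdot 2Mq=2M\gamma$. Choosing $\gamma:=\varepsilon/(8M)$ keeps it at most $\varepsilon/4$.

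\textbf{Output arithmetic.} It remains to produce $A_i:=\round_b\bigl(-Mi/N+\eta\tilde T_i/N\bigr)\in F_b$. Since $i$, $N$ and $\tilde T_i$ are integers of magnitude $\poly(m)$, each has $O(\log m)$ bits, and $N$ is hardwired. The map is therefore a function of $O(\log m)$ input bits with $O(\log m)$ output bits, and the table bound makes it depth two and polynomial size. The final rounding contributes at most $\delta/2\le\varepsilon/2$. Adding up, the total error is $\varepsilon/4+\varepsilon/4+\varepsilon/2=\varepsilon$. If needed, the constants can be tightened, for example to $\varepsilon/8$ each, for slack. The overall circuit has constant depth and size $\poly(m)$.

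\textbf{Main obstacle.} I expect the difficulty to lie in invoking approximate counting correctly. The Ajtai result is usually stated as approximating the count up to a factor $1\pm\gamma$, or as deciding thresholds with a gap, rather than as additive error $\gamma R$. I would handle this as follows. For each of the $\lceil 1/\gamma\rceil$ thresholds $\theta_k=k\gamma R$, an $\mathsf{AC}^0$ circuit distinguishes $T\ge\theta_k+\gamma R$ from $T\le\theta_k$, with arbitrary output in between. I would output $\tilde T:=\theta_{k^\ast}$, where $k^\ast$ is the largest $k$ that is accepted, computed from the constantly many threshold outputs. This gives additive error $O(\gamma R)$, which suffices after adjusting $\gamma$ by a constant factor.
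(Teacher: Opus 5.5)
Your proposal is correct and is essentially the paper's proof: quantize each $g_j$ to constantly many levels using the table bound, write the level in unary, approximate every zero-padded prefix count to additive error $\gamma$ times the string length via \citet{ajtai1990approximate}, and apply the table bound again for the final affine map and rounding, with the quantization, counting, and rounding errors summing to at most $\varepsilon$. The differences are cosmetic: the paper uses step $2M/R$ with $R=\lceil 8M/\varepsilon\rceil$, which avoids your implicit assumption that $2Mq$ is an integer, and it takes the additive form of approximate counting as given instead of deriving it from the gap-threshold form as you do.
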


\begin{proof}
Fix a constant $R:=\lceil8M/\varepsilon\rceil$. For each $j$, let $k_j\in\{0,\ldots,R\}$ be the integer nearest to $(g_j+M)R/(2M)$ and put $\hat g_j:=\frac{2M}{R}k_j-M$, so that $|\hat g_j-g_j|\le M/R\le\varepsilon/8$. Write $\chi_j\in\{0,1\}^R$ for the unary encoding of $k_j$. Since $g_j\in F_b\cap[-M,M]$ occupies $O(\log m)$ bits, the table bound (as in Appendix~\ref{app:proofconventions}) maps $g_j$ to $\chi_j$ at depth two. For each $i$, the string $\chi_0\cdots\chi_{i-1}0^{(N-i)R}$ has length $NR$ and exactly $S_i:=\sum_{j<i}k_j$ ones. $\mathsf{AC}^0$ circuits can count these ones up to any constant fraction $\gamma>0$ of the length, i.e., output $\hat S_i$ with $|\hat S_i-S_i|\le\gamma NR$~\citep{ajtai1990approximate}. Taking $\gamma:=\varepsilon/(16M)$ gives $\frac{2M}{NR}|\hat S_i-S_i|\le\varepsilon/8$. As $\frac1N\sum_{j<i}\hat g_j=\frac{2M}{NR}S_i-\frac{iM}{N}$, set
\[
    A_i:=\round_b\Bigl(\frac{2M}{NR}\hat S_i-\frac{iM}{N}\Bigr).
\]
Here $i$ and $N$ are hardwired and $\hat S_i$ occupies $O(\log m)$ bits, so the table bound maps $\hat S_i$ to $A_i$ at depth two as well. Composing these three stages, each of constant depth and size $\poly(m)$, gives the claimed $\mathsf{AC}^0$ family. It remains to bound the error of $A_i$. Inserting the intermediate quantities $\frac{2M}{NR}\hat S_i-\frac{iM}{N}$ and $\frac1N\sum_{j<i}\hat g_j$ and applying the triangle inequality,
\begin{align*}
    \Bigl|A_i-\frac1N\sum_{j<i}g_j\Bigr|
    &\le\Bigl|A_i-\Bigl(\frac{2M}{NR}\hat S_i-\frac{iM}{N}\Bigr)\Bigr|
    +\frac{2M}{NR}\bigl|\hat S_i-S_i\bigr|
    +\frac1N\sum_{j<i}\bigl|\hat g_j-g_j\bigr|\\
    &\le2^{-b-1}+\frac{\varepsilon}{8}+\frac{\varepsilon}{8}
    \le\frac{\varepsilon}{2}+\frac{\varepsilon}{4}\le\varepsilon,
\end{align*}
the three terms being the rounding, counting, and quantization errors, and the second inequality using $2^{-b}\le\varepsilon$.
\end{proof}

\subsubsection{Proof of Theorem~\ref{thm:vanillaupperbound}}
\label{app:pfvanilla}

To place the $\VanillaFM$ sampler in $\mathsf{D}$, we split on the number of steps: a bounded $K$ is handled by unrolling the $K$ Euler steps directly, and otherwise we substitute the approximate prefix sums of Lemma~\ref{lem:ac0prefix} into the Picard iteration of Lemma~\ref{lem:picard} (Algorithm~\ref{alg:parallelpicard}). The substitution perturbs the error recurrence~\eqref{eq:picardrec} by one constant per round, small enough for the endpoint to stay within a constant acceptance margin, and it keeps the circuit at constant depth over the model calls.

\begin{proof}[Proof of Theorem~\ref{thm:vanillaupperbound}]
By~\eqref{eq:vanillaprecision} there is a constant $c>0$ such that $b\ge(1+c)\log_2K$ for all but finitely many $m$. Let $c_1=c_1(B)>0$ satisfy
\begin{equation}\label{eq:cdchoice}
    \Bigl(\frac{3B}{2}+1\Bigr)2^{-c_1}\le\frac18,
\end{equation}
and split the proof into the two cases $b<\log_2K+(\log_2\mathrm e)L+c_1$ and $b\ge\log_2K+(\log_2\mathrm e)L+c_1$. Since the circuit families are non-uniform, each $m$ may use the construction of its own case.

\paragraph{Small $K$.} Suppose first that $b<\log_2K+(\log_2\mathrm e)L+c_1$. Then $K$ is bounded: only finitely many such $m$ have $b<(1+c)\log_2K$ by~\eqref{eq:vanillaprecision}, and every other one satisfies $(1+c)\log_2K\le b<\log_2K+(\log_2\mathrm e)L+c_1$, hence $K<2^{((\log_2\mathrm e)L+c_1)/c}$, a constant. 

We then reproduce the sampler directly. Fix $m$ and a hardwired initialization $\hat x_0$ as in Appendix~\ref{app:proofconventions}. The circuit runs the $K$ Euler steps of Algorithm~\ref{alg:vanillafm} from $x_0:=\hat x_0$ with the time grid hardwired: step $i$ evaluates $v_i:=u(x_i,t_i;w)$ with one copy of $u$ and then sets $x_{i+1}:=x_i+\round_b(h_iv_i)$ coordinate-wise, where $h_i:=t_{i+1}-t_i$. This reproduces the $\VanillaFM$ trajectory exactly, so its endpoint has every coordinate at least $1$ when $w\in\mathcal L$ and at most $-1$ otherwise, and the sign of the first coordinate decides $\mathcal L$.

The circuit lies in $\mathsf D$. Each of the constantly many steps makes one call to the $\mathsf D$ circuit $u$. Each coordinate-wise update is a map on constantly many scalars of $O(\log m)$ bits, since $b=O(\log m)$ and, with $K$ bounded in this case, every state and velocity has magnitude bounded by a constant. The table bound places this map in $\mathsf{AC}^0\subseteq\mathsf D$ (Appendix~\ref{app:proofconventions}). Composing constantly many $\mathsf D$ circuits keeps the result in $\mathsf D$, so $\mathcal L\in\mathsf D$ in this case.

\paragraph{Large $K$.} Otherwise $b\ge\log_2K+(\log_2\mathrm e)L+c_1$. We construct the circuit by running Algorithm~\ref{alg:parallelpicard} with its exact prefix sums replaced by the approximate ones of Lemma~\ref{lem:ac0prefix}, and then show that it decides $\mathcal L$ and lies in $\mathsf D$. Set $M:=\max\{B,1\}$ and $\eta:=\tfrac18\mathrm e^{-L}$, and use the same $C=\lceil\mathrm eL+4B\rceil$ as in Appendix~\ref{app:pfpicard}. This lower bound on $b$ is equivalent to $K\delta\le2^{-c_1}\mathrm e^{-L}$, which with~\eqref{eq:cdchoice} gives $\delta\le K\delta\le\eta$ and, by~\eqref{eq:vanillaprecision}, $K\le2^b=\poly(m)$.

\paragraph{The circuit.} Fix $m$ and a hardwired initialization $\hat x_0$ as in Appendix~\ref{app:proofconventions}, and let $\Phi_m$ be the circuit of Lemma~\ref{lem:ac0prefix} for $N:=K$, $M$, and $\varepsilon:=\eta$, whose hypotheses hold by the previous paragraph and~\eqref{eq:vanillaprecision}. On input $w$, the circuit runs $C$ rounds of Algorithm~\ref{alg:parallelpicard} from $y_i^{(0)}:=\hat x_0$, with the time grid hardwired and lines~7 and~8 replaced by $y_i^{(r+1)}:=\hat x_0+A_i^{(r)}$. Here $A_0^{(r)},\ldots,A_K^{(r)}$ are the outputs of $\Phi_m$ applied coordinate-wise to the velocities $v_j^{(r)}:=u(y_j^{(r)},t_j;w)$, $j=0,\ldots,K-1$. Each $A_i^{(r)}$ therefore approximates the normalized prefix sum $\frac1K\sum_{j<i}v_j^{(r)}$. $B$-boundedness puts the velocities in $F_b\cap[-M,M]$ as Lemma~\ref{lem:ac0prefix} requires. Their normalized prefix sums also lie in $[-M,M]$, so every scalar here has constant magnitude and hence $O(\log m)$ bits, as the table bound below requires.

\paragraph{Correctness.} We show that the circuit decides $\mathcal L$ by the sign of $(y_K^{(C)})_1$. Compare with the $\VanillaFM$ run from $x_0=\hat x_0$, with $v_j$ and $\rho_i$ as in~\eqref{eq:vanillatraj}, and put $e_i^{(r)}:=\|y_i^{(r)}-x_i\|_\infty$. Since $\hat x_0,A_i^{(r)}\in F_b$, the new update involves no rounding, so subtracting~\eqref{eq:vanillatraj} from $y_i^{(r+1)}=\hat x_0+A_i^{(r)}$ gives
\[
    y_i^{(r+1)}-x_i
    =\Bigl(A_i^{(r)}-\tfrac1K\textstyle\sum_{j<i}v_j^{(r)}\Bigr)
    +\tfrac1K\sum_{j<i}(v_j^{(r)}-v_j)
    +\sum_{j<i}\Bigl(\tfrac1K-h_j\Bigr)v_j-\rho_i.
\]
The first term is bounded by $\eta$ by Lemma~\ref{lem:ac0prefix}; the remaining three terms are bounded by $\frac LK\sum_{j<i}e_j^{(r)}$, $BK\delta$, and $K\delta/2$ exactly as in Appendix~\ref{app:pfpicard}, with $\rho_i^{(r)}$ absent. Thus~\eqref{eq:picardbase} holds with the same $\alpha:=\tfrac12(B+1)K\delta$ and~\eqref{eq:picardrec} with $\beta:=\eta+(B+\tfrac12)K\delta$, and Lemma~\ref{lem:picarderror} gives
\[
    e_K^{(C)}\le\frac14+\alpha+\beta\mathrm e^L
    =\frac14+\eta\mathrm e^L+\Bigl(\frac{B+1}{2}+\Bigl(B+\frac12\Bigr)\mathrm e^L\Bigr)K\delta
    \le\frac14+\frac18+\Bigl(\frac{3B}{2}+1\Bigr)2^{-c_1}\le\frac12,
\]
using $\eta\mathrm e^L=\frac18$, then $K\delta\mathrm e^L\le2^{-c_1}$ and $K\delta\le2^{-c_1}$, and finally~\eqref{eq:cdchoice}. As in Appendix~\ref{app:pfpicard}, acceptance with margin $1$ under $\VanillaFM$ at the positive-probability initialization $\hat x_0$ (Appendix~\ref{app:proofconventions}) now gives $(y_K^{(C)})_1\ge\frac12$ if $w\in\mathcal{L}$ and $(y_K^{(C)})_1\le-\frac12$ otherwise.

\paragraph{Size and depth.} The run consists of $C$ rounds, each consisting of $K$ parallel calls to $u$ surrounded by $\mathsf{AC}^0$ circuits of size $\poly(m)$, namely $n$ copies of $\Phi_m$, one per state coordinate, and, by the table bound, the addition of $\hat x_0$. Since $C=O(1)$ and $K$, $n$ are $\poly(m)$, the whole circuit has size $\poly(m)$ and depth $O(1)$ over the copies of $u$. The surrounding arithmetic lies in $\mathsf{AC}^0\subseteq\mathsf{D}$, and composing these $C$ layers of parallel copies of the $\mathsf{D}$ circuit $u$ keeps the result in $\mathsf{D}$. 
Since the sign test $(y_K^{(C)})_1\ge0$ reads only the $O(\log m)$ bits of one scalar, the table bound computes it in $\mathsf{AC}^0$, correctly outputting $\mathbf{1}\{w\in\mathcal{L}\}$. Hence $\mathcal{L}\in\mathsf{D}$.
\end{proof}

\subsection{Expressiveness Lower Bound for \texorpdfstring{$\SelfCondFM$}{SelfCondFM}}
\label{app:scondlb}

We now prove Theorem~\ref{thm:selfcondlowerbound} by making $\SelfCondFM$ simulate a Turing machine, using the endpoint prediction $z_i$ as the Turing machine's tape. Appendix~\ref{app:tapesim} puts a space-bounded Turing machine with advice into a form in which the new value of each cell is determined by that cell and its two neighbors. Appendix~\ref{app:pfselfcond} then builds a model that applies this rewrite rule to $z_i$ coordinate-wise, so that each output coordinate reads constantly many inputs and the model stays in $\mathsf{AC}^0$. Since the extrapolation $z_{i+1}=x_i+(1-t_i)v_i$ (line~8 of Algorithm~\ref{alg:selfcondfm}) carries the current velocity with sufficient magnitude while $t_i$ is small, the tape can advance one machine step per sampling step, and taking $K$ large enough lets the simulated Turing machine halt. The tape then freezes on the verdict, which keeps the velocity constant and decides the sign of $x_K$.

\subsubsection{Local Simulation of Space-Bounded Machines}
\label{app:tapesim}

To prepare the machine for the coordinate-wise simulation of Appendix~\ref{app:pfselfcond}, we flatten it into a cellular automaton: the initial tape is the input followed by one hardwired string carrying the advice, and the machine fills the tape with one of two verdict symbols before halting.

\begin{lemma}\label{lem:local}
Let $\mathcal{L}$ be any language in $\mathsf{PSPACE/poly}$. There exist an alphabet $\Sigma\supseteq\{0,1\}$ of constant size containing two symbols $\mathsf{acc}$ and $\mathsf{rej}$, a boundary symbol $\#\notin\Sigma$, a local rule $\varphi:(\Sigma\cup\{\#\})^3\to\Sigma$, a tape length $P(m)=\poly(m)$, a horizon $H(m)\le2^{\poly(m)}$, and strings $\hat a_m\in\Sigma^{P(m)-m}$, such that the following holds for every $w\in\{0,1\}^m$. The sequence
\[
    c^{(1)}:=\operatorname{concat}(w,\hat a_m),\qquad
    c^{(i+1)}_p:=\varphi\bigl(c^{(i)}_{p-1},c^{(i)}_p,c^{(i)}_{p+1}\bigr)\quad(1\le p\le P(m)),
\]
with the convention $c^{(i)}_0=c^{(i)}_{P(m)+1}=\#$, satisfies $c^{(i)}=\mathsf{acc}^{P(m)}$ for all $i\ge H(m)$ if $w\in\mathcal{L}$, and $c^{(i)}=\mathsf{rej}^{P(m)}$ otherwise. If moreover $\mathcal{L}\in\mathsf{P/poly}$, then $H(m)=\poly(m)$.
\end{lemma}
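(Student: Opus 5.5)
The plan is to start from a single-tape deterministic Turing machine $T$ with advice that decides $\mathcal L$ in space $S(m)=\poly(m)$, and convert it into a cellular automaton by the standard tableau encoding. Since $\mathcal L\in\mathsf{PSPACE/poly}$, there are advice strings $a_m$ of length $\poly(m)$ and a machine $T$ that, on input $(w,a_m)$, halts within $2^{\poly(m)}$ steps using at most $S(m)$ cells. Halting in that time is automatic for a deterministic space-bounded machine: it cannot repeat a configuration, and the number of configurations is $2^{O(S)}$. We can normalize $T$ in three ways:
\begin{itemize}
\item by standard tape reorganization, $T$ never moves left of cell $1$ and never right of cell $P(m):=S(m)$;
\item on halting it enters a cleanup phase that sweeps the tape, overwriting every cell with $\mathsf{acc}$ or $\mathsf{rej}$ according to its verdict;
\item afterward it stays idle.
\end{itemize}
Cleanup costs $O(P)$ extra steps, so the total running time $H(m)$ is still $2^{\poly(m)}$, and it is $\poly(m)$ when $\mathcal L\in\mathsf{P/poly}$.

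Next we define the alphabet $\Sigma:=\Gamma\cup(\Gamma\times Q)$, where $\Gamma\supseteq\{0,1,\mathsf{acc},\mathsf{rej},\sqcup\}$ is the tape alphabet and $Q$ is the finite state set. A cell is marked with a state exactly when the head is on it. The local rule $\varphi$ reads $(c_{p-1},c_p,c_{p+1})$ and outputs the new content of cell $p$, applying the transition function. If $c_p$ carries the head, $\varphi$ writes the new symbol, and keeps the head only if the move is ``stay''. If a neighbor carries a head that moves toward $p$, $\varphi$ places the head with the new state on $c_p$. Otherwise $\varphi$ copies $c_p$. The boundary $\#$ never carries a head, and by normalization the head never tries to move onto it, so $\varphi$ is well defined. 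Any value of $\varphi$ on triples that never arise can be chosen arbitrarily.

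The one delicate point is the initial configuration, which must have the form $\operatorname{concat}(w,\hat a_m)$ with $w$ occupying the first $m$ cells as plain bits. In particular, we cannot mark cell $1$ with the start state, since that cell must hold the bare bit $w_1$. We handle this by placing the head in the advice region: $\hat a_m$ begins with a cell $(\sqcup,q_0)$ at position $m+1$, followed by the advice symbols and then blanks up to length $P(m)-m$. The machine $T$ is rewritten to start there, walk left to read the input, and so on; this changes the running time by only $O(P)$ steps. Because the symbols $0$ and $1$ in $\Sigma$ are the unmarked tape symbols, $c^{(1)}$ is exactly the initial configuration. By induction on $i$, $c^{(i)}$ is then the configuration of $T$ after $i-1$ steps. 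Hence once the sweep finishes and the machine idles, for all $i\ge H(m)$ the tape is $\mathsf{acc}^{P}$ or $\mathsf{rej}^{P}$.

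For the final idle state, the cell carrying the head must also display a plain verdict symbol, since the claim requires $c^{(i)}=\mathsf{acc}^{P}$ exactly. We resolve this with a head-erasing rule. The cleanup ends in a special halting state $q_{\mathrm{h}}$, and $\varphi$ maps a cell $(\mathsf{acc},q_{\mathrm{h}})$ to $\mathsf{acc}$, and likewise for $\mathsf{rej}$. Every all-verdict configuration is a fixed point of $\varphi$ because $\varphi$ copies unmarked cells. So the tape stays frozen forever after, which completes the argument.
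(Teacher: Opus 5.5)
Your proposal is correct and follows essentially the same route as the paper: fold the head and state into $\Sigma=\Gamma\cup(\Gamma\times Q)$, place the start head on a separator cell at position $m+1$ at the front of $\hat a_m$, have the machine overwrite the whole tape with its verdict, and let the halting head dissolve so that the headless all-verdict tapes are fixed points of $\varphi$. The differences are cosmetic. The paper uses a dedicated separator $\diamond$ instead of $\sqcup$. It lets $\varphi$ read $\#$ as an end-marker, which is a clean way for the machine to find cell $1$ (holding the bare bit $w_1$) and cell $P(m)$ for the sweep. It also sets $H(m):=T(m)+2$ to absorb the extra step in which the halting head dissolves, whereas you take $H(m)$ to be the running time itself.
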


\begin{proof}
Take a deterministic Turing machine $\mathcal{M}$ that decides $w\in\mathcal{L}$ on the cells $1,\ldots,P(m)$, for some $P(m)=\poly(m)$, from the tape initialization $\operatorname{concat}(w,\diamond,a_m)$, with $\diamond$ a separator and $a_m$ an advice string of length $P(m)-m-1$. Being deterministic and halting, $\mathcal{M}$ never repeats one of its at most $2^{O(P(m))}$ configurations, so it halts within $T(m)\le2^{\poly(m)}$ steps, and within $T(m)\le \poly(m)$ steps in the $\mathsf{P/poly}$ case, where $\mathcal{M}$ can be taken polynomial-time. We may assume that $\mathcal{M}$ starts in an initial state $q_0$ with its head on the separator and, before halting, overwrites every cell with $\mathsf{acc}$ or $\mathsf{rej}$ according to its verdict.

Fold the head and its state into the alphabet: $\Sigma:= \Gamma\cup(\mathcal{Q}\times\Gamma)$, where $\Gamma\supseteq\{0,1,\diamond,\mathsf{acc},\mathsf{rej}\}$ is the tape alphabet and $\mathcal{Q}$ the state set. The whole run of $\mathcal{M}$ is then local: each cell's next content is determined by itself and its two neighboring cells. We encode this one-step update by the rule $\varphi$, with $\#$ read as an end-marker and a halting head dissolving into its verdict. Since a cell changes only when the head acts on it, the headless all-$\mathsf{acc}$ and all-$\mathsf{rej}$ configurations are fixed points. Taking $\hat a_m:=\operatorname{concat}((q_0,\diamond),a_m)$ makes $c^{(1)}$ the initial configuration, so $c^{(i)}$ is the configuration after $i-1$ steps. A halting head thus appears by $c^{(T(m)+1)}$ and dissolves one step later, leaving the tape filled with the verdict, a fixed point. Hence $H(m):=T(m)+2$ works.
\end{proof}

\subsubsection{Proof of Theorem~\ref{thm:selfcondlowerbound}}
\label{app:pfselfcond}

To make the sampler simulate the machine of Lemma~\ref{lem:local}, we store the binary encoding of $c^{(i)}$ in the prediction $z_i$, one bit per coordinate, and take $K=16H$. The simulation then runs at times $t_i\le\frac18$, where the extrapolation $z_{i+1}=x_i+(1-t_i)v_i$ delivers the velocity with sufficient magnitude, so $z_{i+1}$ decodes exactly to the next configuration computed by the model. Once the tape freezes on the verdict, the velocity stays pinned at $\pm4$, 
ensuring that every coordinate of $x_K$ clears the acceptance margin with the sign of the verdict.

\begin{proof}[Proof of Theorem~\ref{thm:selfcondlowerbound}]
Let $\Sigma$, $\varphi$, $P=P(m)$, $H=H(m)$, and $\hat a_m$ be as in Lemma~\ref{lem:local}. Put $\kappa:=\lceil\log_2|\Sigma|\rceil$ and fix an injection $\enc:\Sigma\to\{0,1\}^\kappa$ with $\enc(\mathsf{acc})=\mathbf1^\kappa$ and $\enc(\mathsf{rej})=\mathbf0^\kappa$, calling $\enc(\sigma)$ the \emph{codeword} of $\sigma$. Extend $\enc$ to configurations $c\in\Sigma^P$ cell by cell, $\enc(c):=\operatorname{concat}(\enc(c_1),\ldots,\enc(c_P))$, and set
\[
    n:=\kappa P,\qquad K:=16H.
\]
The codeword of cell $p$ thus occupies the \emph{block} of coordinates $\kappa(p-1)+1,\ldots,\kappa p$, so, with $p:=\lceil j/\kappa\rceil$, coordinate $j$ carries bit $q:=j-\kappa(p-1)$ of the codeword of cell $p$. Let $N_p:=\{p-1,p,p+1\}\cap\{1,\ldots,P\}$ and let $J_j:=\bigcup_{r\in N_p}\{\kappa(r-1)+1,\ldots,\kappa r\}$ collect the blocks of cell $p$ and of its neighbors on the tape, and let $I_j:=\{p\}$ if $p\le m$ and $I_j:=\emptyset$ otherwise. The precision $b=b(m)$ is fixed at the end of the proof. Everything below holds for any choice with $b\ge3$ and $b=O(\log m)$. The next three paragraphs construct $u$ and verify conditions~(a) and~(b) of Theorem~\ref{thm:selfcondlowerbound}, the paragraph \emph{The run} proves acceptance, and the paragraph \emph{Parameters} fixes $b$ to meet condition~(c).

\paragraph{Encoding a bit.} A coordinate of the prediction $z_i$ encodes one bit of a configuration, or the placeholder $\bot$ when not yet initialized, via the \emph{encoding set} $E$ and the decoding map $\dec$ given by
\begin{gather*}
    E:=\{0\}\cup\{\zeta\in F_b:2\le|\zeta|\le5\},\\
    \dec(\zeta):=\begin{cases}
        \bot & \zeta =0,\\
        1 & \zeta\ge 2,\\
        0 & \zeta\le -2.
    \end{cases}
\end{gather*}
The three bands $\{0\}$, $E\cap[2,5]$, and $E\cap[-5,-2]$ are deliberately wide and pairwise at distance at least $2$. Therefore, the encoding can tolerate the sampler's perturbation, while an $8$-Lipschitz model can still tell the bands apart.

\paragraph{The local maps.} Define $\tau_j:\{0,1\}^{I_j}\times\{0,1,\bot\}^{J_j}\to\{0,1\}$ by
\[
    \tau_j(\omega,\psi):=\begin{cases}
        \text{bit }q\text{ of }\enc(\omega_p) & \psi=\bot^{J_j}\text{ and }p\le m,\\
        \text{bit }q\text{ of }\enc\bigl((\hat a_m)_{p-m}\bigr) & \psi=\bot^{J_j}\text{ and }p>m,\\
        \text{bit }q\text{ of }\enc\bigl(\varphi(\sigma_{p-1},\sigma_p,\sigma_{p+1})\bigr) & \psi=(\enc(\sigma_r))_{r\in N_p},\\
        0 & \text{otherwise},
    \end{cases}
\]
with the convention $\sigma_0=\sigma_{P+1}=\#$ at the tape ends. The first two cases initialize the tape and the third advances the machine by one step, so that with $\bar c^{(0)}:=\bot^n$ and $\bar c^{(i)}:=\enc(c^{(i)})$ for $i\ge1$,
\begin{equation}\label{eq:steprule}
    \bar c^{(i+1)}_j=\tau_j\bigl(w_{I_j},\bar c^{(i)}_{J_j}\bigr)
    \qquad\text{for all }j\text{ and all }i\ge0.
\end{equation}

\paragraph{The model.} Write $D_j:=\{0,1\}^{I_j}\times E^{J_j}$, a finite nonempty subset of $F_b^{I_j}\times F_b^{J_j}$, and define $f_j:D_j\to\{-4,4\}$ by
\[
    f_j(\omega,\psi):=\begin{cases}
        +4 & \tau_j\bigl(\omega,\dec(\psi)\bigr)=1,\\
        -4 & \tau_j\bigl(\omega,\dec(\psi)\bigr)=0,
    \end{cases}
\]
where $\dec$ is applied coordinate-wise to $\psi$. Then $f_j$ is $8$-Lipschitz on $D_j$ in the maximum norm: if two points have different values, they differ in a Boolean coordinate or lie in different bands at some $E$ coordinate, so they differ by at least $1$ in that coordinate, while the values differ by $8$.

Extend $f_j$ to all of $F_b^{I_j}\times F_b^{J_j}$ by a clipped McShane extension~\citep{mcshane1934extension},
\[
    \bar f_j(\omega,\psi):=\max\Bigl\{-4,\ \min\Bigl\{4,\ \min_{(\omega',\psi')\in D_j}\bigl(f_j(\omega',\psi')+8\max\{\|\omega-\omega'\|_\infty,\|\psi-\psi'\|_\infty\}\bigr)\Bigr\}\Bigr\}.
\]
Then $\bar f_j$ is $8$-Lipschitz and $4$-bounded, and agrees with $f_j$ on $D_j$. It needs no rounding, because $F_b$ is closed under the arithmetic involved.

For every $m$, define the restriction of the single model $u$ with
$b=b(m)$, $n=n(m)$, and $|\ctx|=m+n$ by
\[
    \bigl(u(x,t;\ctx)\bigr)_j
    :=\bar f_j\bigl(\ctx_{I_j},
    (\ctx_{m+k})_{k\in J_j}\bigr),
    \qquad j=1,\ldots,n,
\]
and set $u(x,t;\ctx):=\mathbf0^n$ on all other encoded inputs.
Since each output coordinate is $\bar f_j$ of a fixed subtuple of the condition, the model is $4$-bounded and $8$-Lipschitz continuous in the sense of
\Sref{sec:fmsetup}. Each $\bar f_j$ reads
$|I_j|+|J_j|=O(1)$ scalars, and an $\mathsf{AC}^0$ circuit first clips them to $[-8,8]$. This leaves $\bar f_j$ unchanged, since $D_j\subseteq[-5,5]^{I_j\cup J_j}$ and a scalar of magnitude at least $8$ forces $\bar f_j=4$. The clipped scalars have $O(\log m)$ bits, so the table bound (Appendix~\ref{app:proofconventions}) computes $\bar f_j$ at depth two
and size $\poly(m)$. Hence $u$ is computed by $\mathsf{AC}^0$ circuits, and conditions~(a) and~(b) of Theorem~\ref{thm:selfcondlowerbound} hold.

\paragraph{The run.} We now prove that $u$ accepts $\mathcal{L}$ with margin $1$ under $\SelfCondFM$. Fix $w\in\{0,1\}^m$ and run $\SelfCondFM(w;u,K)$. The estimates below hold for every realization of $x_0$, which is the only randomness in the run, and therefore hold with probability~$1$. By line~6 of Algorithm~\ref{alg:selfcondfm}, its condition at step $i$ is $\operatorname{concat}(w,z_i)$, of length exactly $m+n$, so $(v_i)_j=\bar f_j(w_{I_j},(z_i)_{J_j})$. The run splits at step $H$. For the first phase, when $0\le i\le H$, one sampling step advances the tape by one machine step, so that $z_i$ decodes to $\bar c^{(i)}=\enc(c^{(i)})$. For the second phase, from $H$ on, Lemma~\ref{lem:local} freezes the tape on the verdict, the velocity stays pinned at $\pm4$, and $x_K$ inherits this sign. We prove these two claims in turn.

For the first phase, we show that $z_i$ decodes to $\bar c^{(i)}$ for every $0\le i\le H$. This phase runs at small times: since $b\ge3$, we have $\delta/2\le2^{-4}$, so $t_i\le i/K+2^{-4}$, and $K=16H$ gives
\begin{equation}\label{eq:tsmall}
    t_i\le\frac{1}{16}+\frac{1}{16}=\frac18\qquad\text{for all }0\le i\le H.
\end{equation}
We prove by induction on $i$ the stronger claim that for every $0\le i\le H$: (a) $(z_i)_j\in E$ with $\dec((z_i)_j)=\bar c^{(i)}_j$ for all $j$; (b) $\|x_i\|_\infty\le1+4t_i$; and (c) all arithmetic up to step $i$ is exact. All three hold at $i=0$. Given them at some $i<H$, claim (a) puts the argument of $\bar f_j$ in $D_j$, where $\bar f_j$ agrees with $f_j$, so that $(v_i)_j$ is $+4$ if $\bar c^{(i+1)}_j=1$ and $-4$ otherwise, by~\eqref{eq:steprule}. As $F_b$ is closed under multiplication by integers, both updates in lines~7 and~8 of Algorithm~\ref{alg:selfcondfm} are exact, giving (c) and $\|x_{i+1}\|_\infty\le\|x_i\|_\infty+4(t_{i+1}-t_i)\le1+4t_{i+1}$, which is (b). For (a), if $\bar c^{(i+1)}_j=1$ then $(z_{i+1})_j=(x_i)_j+4(1-t_i)$ lies in $[3-8t_i,\,5]\subseteq[2,5]$ by (b) and~\eqref{eq:tsmall}, and symmetrically in $[-5,-2]$ if $\bar c^{(i+1)}_j=0$. This completes the induction, so $z_H$ decodes to $\bar c^{(H)}$, the encoded verdict tape.

For the second phase, we show that the velocity stays at $\pm4$ from step $H$ on, so that every coordinate of $x_K$ has the sign of the verdict and magnitude at least $1$. The first-phase bound on $z_{i+1}$ relied on $t_i\le1/8$, which fails as $t_i$ approaches $1$, so we use instead that the tape is frozen. Suppose that $w\in\mathcal{L}$. Then Lemma~\ref{lem:local} and $\enc(\mathsf{acc})=\mathbf1^\kappa$ give $\bar c^{(i)}=\mathbf1^n$ for all $i\ge H$. We show by induction on $H\le i\le K$ that (a$'$) $(z_i)_j\in E\cap[2,5]$ for all $j$, and (b$'$) $x_i=x_H+4(t_i-t_H)\mathbf1^n$ exactly. At $i=H$, (a$'$) is (a) with $\bar c^{(H)}=\mathbf1^n$, and (b$'$) is trivial. Given them at some $i<K$, (a$'$) puts the argument of $\bar f_j$ in $D_j$ with $\dec((z_i)_{J_j})=\mathbf1^{J_j}=\bar c^{(i)}_{J_j}$, so~\eqref{eq:steprule} and $\bar c^{(i+1)}=\mathbf1^n$ give $(v_i)_j=+4$ for every $j$. Line~7 of Algorithm~\ref{alg:selfcondfm} then gives (b$'$) at $i+1$, and in line~8 the two $t_i$ terms cancel:
\[
    (z_{i+1})_j=(x_i)_j+4(1-t_i)=(x_H)_j+4(1-t_H)\in[3-8t_H,\,5]\subseteq[2,5]
\]
by (b) and~\eqref{eq:tsmall} at $i=H$, which is (a$'$) at $i+1$. Thus $z_i$ stops moving and the velocity stays at $+4$ until the end. Taking $i=K$ in (b$'$) and using $t_K=1$ gives $(x_K)_j=(x_H)_j+4(1-t_H)\ge2\ge1$ for every $j$. Symmetrically, if $w\notin\mathcal{L}$, then $\bar c^{(i)}=\mathbf0^n$ for $i\ge H$, the velocity stays at $-4$, and $(x_K)_j\le-1$. Hence $u$ accepts $\mathcal{L}$ with margin $1$ under $\SelfCondFM$.

\paragraph{Parameters.} Finally, we fix $b$ and read off the two cases of Theorem~\ref{thm:selfcondlowerbound}. For $\mathcal{L}\in\mathsf{PSPACE/poly}$, Lemma~\ref{lem:local} gives $n(m)=\kappa P(m)=\poly(m)$ and $K(m)=16H(m)\le2^{\poly(m)}$, and $b(m):=3=O(\log m)$ meets the sole requirement $b\ge3$ made above. If moreover $\mathcal{L}\in\mathsf{P/poly}$, the same lemma gives $H(m)=\poly(m)$ and hence $K(m)=\poly(m)$, while $b$ must in addition satisfy the bound~\eqref{eq:vanillaprecision} on its fractional bits. The choice
\[
    b(m):=\max\bigl\{3,\ \lceil2\log_2K(m)\rceil\bigr\}
\]
satisfies $b(m)\ge2\log_2K(m)$, and hence~\eqref{eq:vanillaprecision}. Since $K(m)$ is now polynomial, this $b(m)$ is $O(\log m)$, and the construction remains valid. In both cases condition~(c) holds, which completes the proof.
\end{proof}

\clearpage

\section{Proofs of Theoretical Extensions}
\label{app:extensions}

This appendix proves the extension results presented in~\Sref{sec:theoryextensions}. Appendix~\ref{app:polylogl} establishes Corollary~\ref{cor:vanillapolylogl}, the circuit analogue of Lemma~\ref{lem:picard} for a growing Lipschitz constant $L$. Appendix~\ref{app:lplipschitz} extends the analyses from $\ell_\infty$ to $\ell_p$ Lipschitz continuity for any fixed $p\in[1,\infty]$.
Appendix~\ref{app:rk} extends the upper bounds for $\VanillaFM$ from the Euler method to fixed explicit Runge--Kutta samplers. Finally, Appendix~\ref{app:depthmatch} encodes majority circuits into $\VanillaFM$, demonstrating that the $O(L)$ rounds of Lemma~\ref{lem:picard} match circuit depth and cannot be improved without collapsing the majority-circuit depth hierarchy.

\subsection{The Growing Lipschitz Constant}
\label{app:polylogl}

The proof of Lemma~\ref{lem:picard} bounds the number of Picard rounds by $O(L)$ without using the constancy of $L$, so the lemma already allows a growing $L(m)$. Since the exact prefix sums of each round are computable in $\mathsf{TC}^0$ when $K=\poly(m)$ and $b=\poly(m)$, the whole run then becomes a majority circuit of depth $O(L)$ instead of $O(1)$.

\begin{corollary}\label{cor:vanillapolylogl}
Let $u$ be a model that can be computed by circuits in $\mathsf{TC}^0$, and $\mathcal{L}$ be any language that can be accepted by $u$ with margin $1$ under $\VanillaFM$. Assume that $u$ is $B$-bounded for a constant $B>0$ and $L(m)$-Lipschitz continuous for some $L(m)\ge1$ that may grow with $m$. Then there exists a constant $c_0=c_0(B)>0$, independent of $m$, such that as long as the state dimension and the number of steps satisfy $n=\poly(m)$ and $K=\poly(m)$, and the number of fractional bits satisfies
\begin{equation}\label{eq:vanillapolyloglprecision}
\log_2 K(m)+(\log_2\mathrm e)L(m)+c_0\le b(m)\le\poly(m)\qquad\text{for every }m,
\end{equation}
$\mathcal{L}$ can be decided by majority circuits of size $\poly(m)$ and depth $O(L(m))$. In particular, $\mathcal{L}\in\mathsf{TC}^{k}$ if $L(m)=O(\log^k m)$ for a constant integer $k\ge0$, and $\mathcal{L}\in\mathsf{NC}$ if $L(m)=\poly(\log m)$.
\end{corollary}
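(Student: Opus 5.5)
The plan is to turn the inference algorithm of Lemma~\ref{lem:picard}, namely \textsc{ParallelPicard} (Algorithm~\ref{alg:parallelpicard}) with $C=\lceil\mathrm eL+4B\rceil$ rounds, directly into a majority circuit. The precision hypothesis~\eqref{eq:vanillapolyloglprecision} is exactly~\eqref{eq:picardprecision} with the same constant $c_0=c_0(B)$. Lemma~\ref{lem:picard} therefore says that $u$ accepts $\mathcal L$ with margin $\tfrac12$ under \textsc{ParallelPicard}, and its proof nowhere used that $L$ is constant. Following Appendix~\ref{app:proofconventions}, I would fix for each $m$ a positive-probability initialization $\hat x_0$ and hardwire it, together with the time grid $t_i=\round_b(i/K)$. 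This makes the algorithm deterministic with the same guarantee, and the sign of $(y_K^{(C)})_1$ decides $\mathcal L$.

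Next I would bound circuit size and depth round by round. In round $r$, the $K$ parallel calls $u(y_j^{(r)},t_j;w)$ are $K=\poly(m)$ copies of a $\mathsf{TC}^0$ circuit. Their input length is polynomial, since $n,b=\poly(m)$ and every scalar has magnitude $O(1)$ (it is bounded by $1+B$), so each copy has constant depth and $\poly(m)$ size. The prefix sums $S_i^{(r)}$ are iterated additions of at most $K$ numbers with $\poly(m)$ bits each. The normalization $x_0+S_i^{(r)}/K$ followed by $\round_b$ amounts to exact division by the hardwired integer $K$ plus an addition. All of these operations are in $\mathsf{TC}^0$ with polynomial size (iterated addition, and division~\citep{hesse2002uniform}), applied in parallel over $i\le K$ and all $n$ coordinates. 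Each round is thus a polynomial-size majority circuit of constant depth, and stacking $C=O(L(m))$ rounds gives size $C\cdot\poly(m)$ and depth $O(L(m))$. If $L(m)$ exceeded $\poly(m)$, then $b\ge(\log_2\mathrm e)L$ would contradict $b\le\poly(m)$, so $C$ is polynomial and the total size stays $\poly(m)$. A final sign test on one scalar adds constant depth.

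For the consequences: when $L=O(\log^k m)$, the circuit has depth $O(\log^k m)$ with polynomial size and majority gates, so $\mathcal L\in\mathsf{TC}^k$. When $L=\poly(\log m)$, the inclusion $\mathsf{TC}^k\subseteq\mathsf{NC}^{k+1}$ gives $\mathcal L\in\mathsf{NC}$.

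The step needing the most care is the exactness of the arithmetic. The circuit must reproduce \textsc{ParallelPicard} bit for bit so that Lemma~\ref{lem:picard} applies verbatim. Line~8 performs a single rounding of an exact rational $x_0+S/K$, so I would compute $\lfloor 2^b(x_0+S/K)+\tfrac12\rfloor$ as an integer division of the $\poly(m)$-bit integer $2^{b}(Kx_0+S)\cdot 2+K$ by the hardwired $2K$, which is in $\mathsf{TC}^0$. I would also check that the model inputs keep a consistent encoding length across rounds, so that a single fixed $\mathsf{TC}^0$ circuit for $u$ can be reused in every round.
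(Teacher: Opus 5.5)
Your proposal is correct and follows essentially the same route as the paper's proof. You identify the lower bound in~\eqref{eq:vanillapolyloglprecision} with~\eqref{eq:picardprecision}, hardwire $\hat x_0$, and compose $C=\lceil \mathrm eL+4B\rceil$ rounds, each made of $K$ copies of the $\mathsf{TC}^0$ circuit for $u$ plus exact $\mathsf{TC}^0$ prefix sums and normalizations, with $b\le\poly(m)$ forcing $C=\poly(m)$. Your extra care in reproducing the rounding of line~8 exactly, via integer division by the hardwired $2K$, is a detail the paper leaves implicit, and it is consistent with the paper's argument.
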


\begin{proof}
We turn the algorithm of Lemma~\ref{lem:picard} into a majority circuit. The lower bound in~\eqref{eq:vanillapolyloglprecision} is exactly the hypothesis~\eqref{eq:picardprecision} of Lemma~\ref{lem:picard}, with $c_0=c_0(B)$ the constant of that lemma. Hence the lemma's algorithm, namely Algorithm~\ref{alg:parallelpicard} with $C:=\lceil\mathrm eL+4B\rceil$ rounds, run from a hardwired initialization $\hat x_0$ (Appendix~\ref{app:proofconventions}), accepts $\mathcal{L}$ with margin $\frac12$: every coordinate of its output $y_K^{(C)}$ is at least $\frac12$ when $w\in\mathcal{L}$ and at most $-\frac12$ otherwise.

As noted after the proof of Lemma~\ref{lem:picard}, one round of that algorithm consists of $K$ parallel calls to $u$ followed by prefix sums and normalizations, which $\mathsf{TC}^0$ computes exactly since $K$ and $b$ are $\poly(m)$~\citep{vollmer1999introduction,hesse2002uniform}. Since $u$ is itself computed by $\mathsf{TC}^0$ circuits, composing the $C$ rounds with the indicator $\mathbf{1}\{(y_K^{(C)})_1\ge0\}$ yields a majority circuit deciding $\mathcal{L}$. Its depth is $O(C)=O(L(m))$, and its size is polynomial because each round has size $\poly(m)$ and the precision hypothesis~\eqref{eq:vanillapolyloglprecision} bounds $(\log_2\mathrm e)L(m)$ by $b(m)=\poly(m)$, so that $C=\poly(m)$.

If $L(m)=O(\log^k m)$, the depth is $O(\log^k m)$ and $\mathcal{L}\in\mathsf{TC}^k$ by definition. If $L(m)=\poly(\log m)$, then $L(m)=O(\log^k m)$ for some constant $k$, so $\mathcal{L}\in\mathsf{TC}^k\subseteq\mathsf{NC}$ (Appendix~\ref{app:proofconventions}).
\end{proof}

\subsection{\texorpdfstring{$\ell_p$}{lp} Lipschitz Condition}
\label{app:lplipschitz}

We now extend Theorems~\ref{thm:vanillaupperbound} and~\ref{thm:selfcondlowerbound} to the $\ell_p$ norm for a fixed $p\in[1,\infty]$. We say that $u$ is \emph{$L$-Lipschitz in $\ell_p$} if
\[
\|u(x,t;\ctx)-u(x',t;\ctx')\|_p\le L\max\{\|x-x'\|_p,\|\ctx-\ctx'\|_p\}
\]
for all $x,x'\in F_b^n$, equal-length $\ctx,\ctx'\in F_b^*$, and $t\in[0,1]\cap F_b$.

\begin{proposition}[Upper bound under $\ell_p$]\label{prop:vanillalp}
Theorem~\ref{thm:vanillaupperbound} remains true if its $\ell_\infty$ Lipschitz hypothesis is replaced by an $\ell_p$ Lipschitz hypothesis for any fixed $p\in[1,\infty]$.
\end{proposition}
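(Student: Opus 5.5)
The plan is to rerun the proof of Theorem~\ref{thm:vanillaupperbound} with every error measured in $\ell_p$ instead of $\ell_\infty$, keeping its architecture: a small-$K$ case handled by unrolling, and a large-$K$ case handled by the Picard iteration of Algorithm~\ref{alg:parallelpicard} with the approximate prefix sums of Lemma~\ref{lem:ac0prefix}. I would not try to convert the $\ell_p$ hypothesis into an $\ell_\infty$ one. The conversion $\|u(x)-u(x')\|_\infty\le\|u(x)-u(x')\|_p\le L\|x-x'\|_p\le Ln^{1/p}\|x-x'\|_\infty$ costs a factor $n^{1/p}$, which is not constant. What survives unchanged is the final test: $\|y_K^{(C)}-x_K\|_\infty\le\|y_K^{(C)}-x_K\|_p$, so an $\ell_p$ endpoint error of at most $\frac12$ still gives acceptance with margin $\frac12$.

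The small-$K$ case carries over verbatim, since it uses no Lipschitz property. For the large-$K$ case I set $e_i^{(r)}:=\|y_i^{(r)}-x_i\|_p$ and re-derive the hypotheses of Lemma~\ref{lem:picarderror}. The recurrence term $\frac LK\sum_{j<i}e_j^{(r)}$ now comes directly from the $\ell_p$ Lipschitz hypothesis at equal $t_j$ and equal condition $w$. The remaining terms pick up a factor $n^{1/p}$ when passed from coordinatewise to $\ell_p$ bounds. These are the step-size mismatch $\sum_j|1/K-h_j|\,\|v_j\|_p$, the rounding residuals $\rho_i,\rho_i^{(r)}$, and the prefix-sum approximation error $\eta$ of Lemma~\ref{lem:ac0prefix}. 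Likewise the base case becomes $e_i^{(0)}\le Bn^{1/p}s_i+\alpha$. Lemma~\ref{lem:picarderror} is purely scalar, so it then applies with $B$ replaced by $B':=Bn^{1/p}$ and $\alpha,\beta$ scaled by $n^{1/p}$.

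The main obstacle is exactly this factor $n^{1/p}$, which is polynomial in $m$ when $p<\infty$. It causes two separate problems.
\begin{enumerate}
\item The initial-error term $B'L^C/(C+1)!$ forces $C\approx\mathrm eL+\Theta(\log n/\log\log n)$ rounds rather than $O(1)$.
\item The additive floor $\beta\mathrm e^L$ requires $n^{1/p}K\delta$ and $n^{1/p}\eta$ to be small. The precision window~\eqref{eq:vanillaprecision} does not obviously provide this.
\end{enumerate}

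For the first problem, I would first try to avoid the $\ell_p$ size of the trajectory displacement altogether. The idea is a warm start: split $[0,1]$ into constantly many windows of length $\tau$ and run the iteration window by window, so the initial error within a window is $O(\tau)$ per coordinate. However, this only helps if that smallness can be propagated in $\ell_p$, which I doubt. The fallback is to ask whether the proposition intends $L$ to absorb the normalization, that is, an $\ell_p$ norm averaged over coordinates. In that case all $n^{1/p}$ factors cancel and the $\ell_\infty$ proof goes through line by line.

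For the second problem, I would reduce $\eta$ by choosing $\varepsilon=\eta/n^{1/p}$ in Lemma~\ref{lem:ac0prefix}. This is admissible only if $n^{1/p}$ is at most $2^{b}$ up to constants, so I would check that constraint explicitly against the precision window~\eqref{eq:vanillaprecision}.
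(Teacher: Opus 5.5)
Your proposal leaves the proof open at exactly the point you flag, and you mis-assess your fallback route in two ways. In the un-normalized metric $\|y_i^{(r)}-x_i\|_p$ the obstacles are genuine. The term $Bn^{1/p}L^C/(C+1)!$ forces a non-constant number of rounds, which destroys constant depth over the copies of $u$. The choice $\varepsilon=\eta/n^{1/p}$ in Lemma~\ref{lem:ac0prefix} is also not licensed, because \eqref{eq:vanillaprecision} ties $b$ to $K$ and says nothing about $n$. The normalized metric $e_i^{(r)}:=n^{-1/p}\|y_i^{(r)}-x_i\|_p$ is the right one, but it needs no reinterpretation of the hypothesis. The state $x$ and the output $u(x,t;\ctx)$ both have dimension $n$, and the two velocities compared in the recurrence share $t_j$ and the condition $w$. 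Multiplying the ordinary $\ell_p$ Lipschitz inequality by $n^{-1/p}$ therefore gives $n^{-1/p}\|v_j^{(r)}-v_j\|_p\le Le_j^{(r)}$ with the same $L$. Moreover $n^{-1/p}\|\cdot\|_p\le\|\cdot\|_\infty$, so the displacement $Bs_i$, the step-size mismatch, the rounding residuals, and the prefix-sum error $\eta$ all enter with their $\ell_\infty$ constants. Lemma~\ref{lem:picarderror} then applies with $C=\lceil\mathrm eL+4B\rceil$ and no factor $n^{1/p}$ anywhere.

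What is false is that the $\ell_\infty$ proof then goes through line by line: the factor $n^{1/p}$ you removed from the recurrence reappears at the readout. A normalized endpoint error of $\frac12$ only gives $\|y_K^{(C)}-x_K\|_\infty\le\frac12n^{1/p}$, so the single-coordinate sign test of Appendix~\ref{app:pfvanilla} is unjustified. Nothing in the recurrence controls an individual coordinate. For instance, with $p=1$, a model whose first output is $\clip(\sum_jx_j;B)$ and whose other outputs are constant is $1$-Lipschitz in $\ell_1$, yet its first output reacts to the sum of all coordinate errors.

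The missing idea is a majority-vote readout. Tighten the endpoint bound to $e_K^{(C)}\le\frac38$ by taking $\eta:=\frac1{16}\mathrm e^{-L}$ and choosing $c_1$ with $(\frac{3B}{2}+1)2^{-c_1}\le\frac1{16}$ and $2^{-c_1}\le\frac1{16}$. By the margin, every coordinate of $y_K^{(C)}$ with the wrong sign differs from $x_K$ by at least $1$. These coordinates therefore form a fraction at most $(e_K^{(C)})^p\le\frac38$ of all $n$. Compute each sign by the table bound, and estimate the fraction of positive signs to within $\frac1{16}$ by approximate counting in $\mathsf{AC}^0$ (as in the proof of Lemma~\ref{lem:ac0prefix}). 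Then compare with $\frac12$, using $\frac58-\frac1{16}>\frac12>\frac38+\frac1{16}$. Exact counting is not available when $\mathsf D=\mathsf{AC}^0$, which is why the slack between $\frac38$ and $\frac12$ is needed. Without such an aggregated readout, neither of your two routes proves the proposition.
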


\begin{proof}
The small-$K$ case of Appendix~\ref{app:pfvanilla} re-runs the sampler exactly and is independent of the Lipschitz norm, so it carries over unchanged. For the large-$K$ case, we follow the proof of Theorem~\ref{thm:vanillaupperbound} in Appendix~\ref{app:pfvanilla}, measuring every error in the normalized norm $e_i^{(r)}:=n^{-1/p}\|y_i^{(r)}-x_i\|_p$, with $n^{-1/\infty}:=1$. Two observations carry the argument over. Boundedness gives $\|v_j\|_p\le n^{1/p}B$, and a perturbation bounded by $\varepsilon$ in every coordinate changes the normalized norm by at most $\varepsilon$, both because $\|\cdot\|_p\le n^{1/p}\|\cdot\|_\infty$. The $\ell_p$ Lipschitz hypothesis gives $\|v_j^{(r)}-v_j\|_p\le L\|y_j^{(r)}-x_j\|_p$, since the two velocities share $t_j$ and $w$.

We run the construction of Appendix~\ref{app:pfvanilla} with $\eta:=\frac1{16}\mathrm e^{-L}$ in place of $\frac18\mathrm e^{-L}$, choosing the constant $c_1=c_1(B)$ of~\eqref{eq:cdchoice} large enough that $(\frac{3B}{2}+1)2^{-c_1}\le\frac1{16}$ and $2^{-c_1}\le\frac1{16}$. In the large-$K$ case we have $b\ge\log_2K+(\log_2\mathrm e)L+c_1$, which is equivalent to $K\delta\le2^{-c_1}\mathrm e^{-L}$ and thus gives $\delta\le K\delta\le\eta$, so Lemma~\ref{lem:ac0prefix} applies with $\varepsilon:=\eta$ exactly as there, and every other parameter is unchanged.

The error analysis of Appendix~\ref{app:pfvanilla} then carries over: with $s_i:=i/K$ and the same $\alpha:=\frac{B+1}{2}K\delta$ and $\beta:=\eta+(B+\frac12)K\delta$ as there,
\[
e_i^{(0)}\le Bs_i+\alpha,\qquad
e_i^{(r+1)}\le\frac{L}{K}\sum_{j=0}^{i-1}e_j^{(r)}+\beta.
\]
Indeed, the four terms of the decomposition of Appendix~\ref{app:pfvanilla} are bounded exactly as there: the prefix-sum error contributes $\eta$, the velocity differences $\frac{L}{K}\sum_{j<i}e_j^{(r)}$ by the second observation, the weight mismatch at most $BK\delta$ by the first, and the rounding residuals $K\delta/2$ since $n^{-1/p}\|\cdot\|_p\le\|\cdot\|_\infty$, and similarly for the initial bound.

Lemma~\ref{lem:picarderror} with $C:=\lceil\mathrm eL+4B\rceil$ gives $e_K^{(C)}\le\frac14+\alpha+\beta\mathrm e^L$, and as in Appendix~\ref{app:pfvanilla} the precision hypothesis bounds $\alpha+\beta\mathrm e^L\le(\frac{3B}{2}+1)2^{-c_1}+\eta\mathrm e^L\le\frac1{16}+\frac1{16}=\frac18$, so that $e_K^{(C)}\le\frac38$.

It remains to read off the decision, for which a single coordinate no longer suffices. If $w\in\mathcal{L}$, every coordinate of $x_K$ is at least $1$, so a coordinate $j$ with $(y_K^{(C)})_j\le0$ satisfies $|(y_K^{(C)})_j-(x_K)_j|\ge1$. Such coordinates form at most a fraction $\xi$ of the $n$ coordinates, where $\xi\le(e_K^{(C)})^p$ if $p<\infty$, since their contribution alone gives $n^{-1}\|y_K^{(C)}-x_K\|_p^p\ge\xi$, and $\xi=0$ if $p=\infty$. In either case $\xi\le\frac38$, because $e_K^{(C)}\le\frac38<1$ and $p\ge1$.

At least $\frac58$ of the coordinates of $y_K^{(C)}$ are therefore positive, and symmetrically at least $\frac58$ are negative if $w\notin\mathcal{L}$. The table bound computes the sign of each coordinate, and approximate counting in $\mathsf{AC}^0$~\citep{ajtai1990approximate}, applied to these signs, estimates the fraction of positive coordinates to within $\frac1{16}$. The estimate is thus at least $\frac58-\frac1{16}>\frac12$ if $w\in\mathcal L$ and at most $\frac38+\frac1{16}<\frac12$ if $w\notin\mathcal L$, so comparing it with $\frac12$ decides $\mathcal L$. This count replaces the single-coordinate comparison at the end of Appendix~\ref{app:pfvanilla} and lies in $\mathsf{AC}^0\subseteq\mathsf{D}$ for all three choices of $\mathsf{D}$. Hence $\mathcal L\in\mathsf D$.
\end{proof}

\begin{proposition}[Lower bound under $\ell_p$]\label{prop:selfcondlp} For every fixed $p\in[1,\infty]$, Theorem~\ref{thm:selfcondlowerbound} remains true with the Lipschitz requirement in condition (b) replaced by $L'$-Lipschitz continuity in $\ell_p$, for some constant $L'$ depending only on $\mathcal{L}$.
In the $\mathsf{P/poly}$ case its precision can be chosen to further satisfy~\eqref{eq:vanillaprecision}.
\end{proposition}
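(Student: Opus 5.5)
We reuse the construction from the proof of Theorem~\ref{thm:selfcondlowerbound} in Appendix~\ref{app:pfselfcond} unchanged: the same machine from Lemma~\ref{lem:local}, the same encoding set $E$, the same local maps $\tau_j$ and the same clipped McShane extensions $\bar f_j$, and the same $K=16H$ and precision $b$. The run analysis never mentions the Lipschitz norm. It only uses that $(v_i)_j=\bar f_j(w_{I_j},(z_i)_{J_j})$ takes the value $\pm4$ on the encoded domain and that all arithmetic is exact. Acceptance with margin $1$, $4$-boundedness, $\mathsf{AC}^0$ computability, $n=\poly(m)$, $K\le2^{\poly(m)}$, $b=O(\log m)$, and the choice $b=\max\{3,\lceil2\log_2K\rceil\}$ that gives~\eqref{eq:vanillaprecision} in the $\mathsf{P/poly}$ case therefore all carry over verbatim. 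The only thing to re-prove is an $\ell_p$ Lipschitz bound with a constant $L'$ that does not depend on $m$.

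The key point is locality. Output coordinate $j$ depends only on the coordinates of $\ctx$ indexed by $I_j$ and by $m+J_j$, and $|I_j|+|J_j|\le1+3\kappa$, where $\kappa=\lceil\log_2|\Sigma|\rceil$ depends only on $\mathcal L$. Conversely, each coordinate of $\ctx$ is read by at most $d:=3\kappa$ output coordinates: a coordinate of $z$ in the block of cell $r$ is read only by the blocks of cells $r-1,r,r+1$, and $w_p$ is read only by the $\kappa$ coordinates of block $p$. Now fix $\ctx,\ctx'$ and put $\Delta:=\ctx-\ctx'$. Each $\bar f_j$ is $8$-Lipschitz in the $\ell_\infty$ norm on its own inputs, so $|u_j-u'_j|\le8\max_{k\in S_j}|\Delta_k|$, where $S_j$ is the set of at most $1+3\kappa$ indices that $\bar f_j$ reads.

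For $p=\infty$ this is the original statement. For finite $p$, bound the maximum by a sum, $\max_{k\in S_j}|\Delta_k|^p\le\sum_{k\in S_j}|\Delta_k|^p$, and then sum over $j$:
\[
\|u-u'\|_p^p\le8^p\sum_j\sum_{k\in S_j}|\Delta_k|^p\le8^p d\,\|\Delta\|_p^p .
\]
This gives $\|u-u'\|_p\le8d^{1/p}\|\ctx-\ctx'\|_p\le8\cdot3\kappa\,\|\ctx-\ctx'\|_p$, so we may take $L':=24\kappa$ for every $p$. The model does not depend on $x$, so the maximum with $\|x-x'\|_p$ in the definition is harmless.

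The main obstacle, and the step to state carefully, is this bounded-multiplicity count. It requires that the reading pattern $J_j$ really is confined to neighbouring blocks, with the tape ends handled by $N_p$ being truncated to $\{1,\dots,P\}$, so that the multiplicity $d$ and hence $L'$ do not grow with $P(m)$ or with $m$. The naive conversion through $\|\cdot\|_p\le n^{1/p}\|\cdot\|_\infty$ would lose a factor of $n^{1/p}$, which grows with $m$, and the locality count is exactly what avoids it.
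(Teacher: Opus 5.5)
Your proposal is correct and follows essentially the same route as the paper. Both reuse the model of Theorem~\ref{thm:selfcondlowerbound} unchanged, bound each output coordinate by $8$ times the largest change in its window, and replace the maximum by a sum. Both then use the bounded read multiplicity ($\kappa$ for input coordinates, at most $3\kappa$ for endpoint-prediction coordinates) to get the constant $8(3\kappa)^{1/p}\le 24\kappa$. You also correctly note that the precision choice $b=\max\{3,\lceil 2\log_2K\rceil\}$ does not depend on the Lipschitz norm, which is exactly how the paper handles the $\mathsf{P/poly}$ case.
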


\begin{proof}
We reuse the model $u$ of Appendix~\ref{app:pfselfcond} as it stands: each coordinate of its condition is read by constantly many of the windows $(I_j,J_j)$, and this bounded overlap already makes $u$ Lipschitz continuous in every $\ell_p$ norm. All other conditions of Theorem~\ref{thm:selfcondlowerbound}, and acceptance with margin $1$, are unaffected, since the model and the run are unchanged.

To compute the constant, write $\Delta\ctx:=\ctx-\ctx'$ for the difference of two equal-length conditions and split it into its input part $\Delta w$ and its endpoint-prediction part $\Delta z$. Since $u$ ignores $x$ and each $\bar f_j$ is $8$-Lipschitz in the $\ell_\infty$ norm of its window, each output coordinate changes by at most $8$ times the largest change inside its window. Raising to the $p$-th power, bounding that maximum by a sum, and summing over $j$ give, for $1\le p<\infty$,
\[
\|u(x,t;\ctx)-u(x,t;\ctx')\|_p^p
\le8^p\sum_{j=1}^{n}\Bigl(\sum_{k\in I_j}|\Delta w_k|^p+\sum_{k\in J_j}|\Delta z_k|^p\Bigr).
\]
In this double sum, each input coordinate appears in $I_j$ for the $\kappa$ coordinates $j$ of its block, and each endpoint-prediction coordinate appears in $J_j$ for at most $3\kappa$ coordinates $j$, those of its own block and of the two neighboring blocks. The double sum is therefore at most $\kappa\|\Delta w\|_p^p+3\kappa\|\Delta z\|_p^p\le3\kappa\|\Delta\ctx\|_p^p$, where the last step uses $\|\Delta\ctx\|_p^p=\|\Delta w\|_p^p+\|\Delta z\|_p^p$, so $u$ is $8(3\kappa)^{1/p}$-Lipschitz in $\ell_p$. The same argument with the maximum kept in place of the sum handles $p=\infty$, recovering the constant $8$ of Theorem~\ref{thm:selfcondlowerbound}. As $8(3\kappa)^{1/p}\le24\kappa$ for every $p\in[1,\infty]$, we may take $L':=24\kappa$, which depends only on the alphabet of Lemma~\ref{lem:local} and hence only on $\mathcal{L}$.

In the $\mathsf{P/poly}$ case, the parameter choice of Appendix~\ref{app:pfselfcond} sets $b(m):=\max\{3,\lceil2\log_2K(m)\rceil\}$, which does not depend on the Lipschitz constant and therefore still satisfies~\eqref{eq:vanillaprecision}.
\end{proof}

\subsection{Runge--Kutta Samplers}
\label{app:rk}

The upper bounds above are stated for vanilla flow matching with the Euler method. We extend them to any fixed explicit Runge--Kutta method. Let $\mathsf R$ be such a method with $q\in\mathbb{N}^+$ stages per step, whose tableau is specified by nodes $\tilde\omega_\ell\in[0,1]$, weights $\tilde\lambda_\ell\in\R$, and coefficients $\tilde a_{\ell,k}\in\R$ for $1\le k<\ell\le q$~\citep{hairer1993solving}. Algorithm~\ref{alg:vanillark} gives the resulting sampler, $\RKFM$, using the rounded tableau entries $\omega_\ell:=\round_b(\tilde\omega_\ell)$, $\lambda_\ell:=\round_b(\tilde\lambda_\ell)$, and $a_{\ell,k}:=\round_b(\tilde a_{\ell,k})$. The Euler method is the special case $q=1$, while Heun's method, another common flow-matching sampler~\citep{karras2022elucidating,lee2024improving}, has $q=2$.

As in Algorithm~\ref{alg:parallelpicard}, we simulate $\RKFM$ by a parallel Picard iteration for $\mathsf R$. The analysis of Appendices~\ref{app:pfpicard} and~\ref{app:pfvanilla} then carries over with $L$ scaled by a constant depending only on the tableau, at the cost of $qK$ parallel model calls per round.

\begin{algorithm}[tbh]
\caption{Runge--Kutta Flow Matching}
\label{alg:vanillark}
\begin{algorithmic}[1]
\Statex \textbf{Fixed:} an explicit Runge--Kutta method $\mathsf R$ with constant stage count $q\in \mathbb{N}^+$ and rounded tableau entries $\omega_\ell\in F_b$, $\lambda_\ell\in F_b$, and $a_{\ell,k}\in F_b$
\Function{RKFM}{$w;u,K$}
    \State $t_i\leftarrow\round_b(i/K)$ for $i=0,\ldots,K$
    \State $t_{i,\ell}\leftarrow\round_b(t_i+\omega_\ell h_i)$ for $i=0,\ldots,K-1$; $\ell=1,\ldots,q$
    \State Sample $\tilde{x}_0\sim p_{\text{prior}}$; $x_0\leftarrow\round_b(\tilde{x}_0)$
    \For{$i=0,\ldots,K-1$}
        \For{$\ell=1,\ldots,q$}
            \State $Y_{i,\ell}\leftarrow\round_b\bigl(x_i+h_i{\textstyle\sum_{k<\ell}}a_{\ell,k}v_{i,k}\bigr)$
            \State $v_{i,\ell}\leftarrow u(Y_{i,\ell},t_{i,\ell};w)$
        \EndFor
        \State $x_{i+1}\leftarrow\round_b\bigl(x_i+h_i{\textstyle\sum_{\ell=1}^{q}}\lambda_\ell v_{i,\ell}\bigr)$
    \EndFor
    \State \Return $x_K$
\EndFunction
\end{algorithmic}
\end{algorithm}

\begin{algorithm}[tbh]
\caption{Parallel Runge--Kutta Picard Inference}
\label{alg:parallelrk}
\begin{algorithmic}[1]
\Statex \textbf{Fixed:} an explicit Runge--Kutta method $\mathsf R$ with constant stage count $q\in \mathbb{N}^+$ and rounded tableau entries $\omega_\ell\in F_b$, $\lambda_\ell\in F_b$, and $a_{\ell,k}\in F_b$
\Function{ParallelRK}{$w;u,K,C$}
    \State $t_i\leftarrow\round_b(i/K)$ for $i=0,\ldots,K$
    \State $t_{i,\ell}\leftarrow\round_b(t_i+\omega_\ell h_i)$ for $i=0,\ldots,K-1$, $\ell=1,\ldots,q$
    \State Sample $\tilde{x}_0\sim p_{\text{prior}}$; $x_0\leftarrow\round_b(\tilde{x}_0)$
    \State $Y^{(0)}_{i,\ell}\leftarrow x_0$ for all $i,\ell$; $X^{(0)}\leftarrow x_0$
    \For{$r=0,\ldots,C-1$}
        \State \textbf{in parallel for} all $i,\ell$: \ $v^{(r)}_{i,\ell}\leftarrow u(Y^{(r)}_{i,\ell},t_{i,\ell};w)$
        \State $S^{(r)}_i\leftarrow\sum_{j=0}^{i-1}\sum_{k=1}^{q}\round_b(\lambda_k v^{(r)}_{j,k})$ for $i=0,\ldots,K$
        \State $T^{(r)}_{i,\ell}\leftarrow\sum_{k=1}^{\ell-1}\round_b(a_{\ell,k}v^{(r)}_{i,k})$ for all $i,\ell$
        \State \textbf{in parallel for} all $i,\ell$: \ $Y^{(r+1)}_{i,\ell}\leftarrow\round_b\bigl(x_0+\bigl(S^{(r)}_i+T^{(r)}_{i,\ell}\bigr)/K\bigr)$
        \State $X^{(r+1)}\leftarrow\round_b\bigl(x_0+S^{(r)}_K/K\bigr)$
    \EndFor
    \State \Return $X^{(C)}$
\EndFunction
\end{algorithmic}
\end{algorithm}

\begin{proposition}[Runge--Kutta samplers]\label{prop:rk}
Let $q$ be a constant integer and $\mathsf R$ be any fixed explicit Runge--Kutta method with $q$ stages, whose nodes lie in $[0,1]$. There is a constant $\gamma_{\mathsf R}\ge1$, depending only on $\mathsf R$, such that the following hold with $\VanillaFM$ replaced by $\RKFM$:
\begin{itemize}[leftmargin=*]
    \item Lemma~\ref{lem:picard} remains true once the precision condition~\eqref{eq:picardprecision} is replaced by
    \begin{equation}\label{eq:rkprecision}
        b(m)\ge \log_2 K(m)+\gamma_{\mathsf R}(\log_2\mathrm e)L(m)+c_0\qquad\text{for every }m,
    \end{equation}
    where the constant $c_0=c_0(B,\mathsf R)$ now also depends on $\mathsf R$, and the bound $K(m)$ on the parallel calls per round is replaced by $qK(m)$.
    \item Corollary~\ref{cor:vanillapolylogl} remains true once the lower bound in~\eqref{eq:vanillapolyloglprecision} is replaced by~\eqref{eq:rkprecision}.
    \item Theorem~\ref{thm:vanillaupperbound} remains true.
\end{itemize}
\end{proposition}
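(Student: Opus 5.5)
The approach is to rerun the proofs of Lemma~\ref{lem:picard}, Corollary~\ref{cor:vanillapolylogl} and Theorem~\ref{thm:vanillaupperbound} with Algorithm~\ref{alg:parallelrk} in place of Algorithm~\ref{alg:parallelpicard}. The only new ingredient is an error recurrence of the form \eqref{eq:picardbase}--\eqref{eq:picardrec} for a suitable scalar error sequence, with $L$ replaced by $\gamma_{\mathsf R}L$. I would index the comparison by step $i$ and take
\[
e_i^{(r)}:=\max\Bigl\{\|X_i^{(r)}-x_i\|_\infty,\ \max_\ell\|Y_{i,\ell}^{(r)}-Y_{i,\ell}\|_\infty\Bigr\},
\]
where $X_i^{(r)}:=\round_b(x_0+S_{i}^{(r)}/K)$ is the implicit state iterate and $x_i,Y_{i,\ell},v_{i,\ell}$ are the exact $\RKFM$ quantities. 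As in \eqref{eq:vanillatraj}, I would unroll the true run as
\[
x_i=x_0+\sum_{j<i}h_j\sum_k\lambda_kv_{j,k}+\rho_i,\qquad Y_{i,\ell}=x_i+h_i\sum_{k<\ell}a_{\ell,k}v_{i,k}+\rho_{i,\ell},
\]
with accumulated rounding residuals of size $O(K\delta)$, where the constant depends on the tableau through $q$ and $\max|a|,\max|\lambda|$.

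The key steps are the following.

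\emph{Initial error.} Boundedness gives $\|Y_{i,\ell}-x_0\|_\infty\le \Lambda B s_i+O(\delta)$, with $\Lambda:=\sum|\tilde\lambda_k|+\max_\ell\sum_k|\tilde a_{\ell,k}|$. This yields \eqref{eq:picardbase} with $B$ replaced by $\Lambda B$ and $\alpha=O(K\delta)$.

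\emph{Recurrence.} Subtracting the two unrolled forms splits the error into three kinds of terms. The mismatch between $h_j$ and $1/K$ gives $O(BK\delta)$. The roundings of the tableau entries and of the products give $O(K\delta)$. The velocity differences are bounded by Lipschitzness; here I note that the times $t_{i,\ell}$ coincide in both runs, which matters because Lipschitzness is only assumed at a common $t$. For the state part this contributes $\frac{L\Lambda}{K}\sum_{j<i}\max_k\|Y^{(r)}_{j,k}-Y_{j,k}\|$. The stage part contributes an extra term $\frac{L\Lambda}{K}\max_k\|Y^{(r)}_{i,k}-Y_{i,k}\|$ at the \emph{same} index $i$.

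\emph{Closing the recurrence.} That same-index term does not fit \eqref{eq:picardrec}, which sums only over $j<i$. This is the main obstacle. I would absorb it in one of two ways. The first is to unroll one extra level: since $L\Lambda/K$ is small once $K\ge 2L\Lambda$, the same-index term is at most $\frac{L\Lambda}{K}\bigl(e_i^{(r)}\bigr)$, which itself obeys the recurrence. The second, and simpler, is to note that the same-index term is bounded by $\frac{L\Lambda}{K}e_i^{(r)}\le \frac{L\Lambda}{K}(\text{bound at }i)$, and that $\frac1K s_i^{q}$ can be absorbed into the Riemann sum by using $\int_0^{s_{i+1}}$ instead of $\int_0^{s_i}$.

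With the second option, the factorial bound of Lemma~\ref{lem:picarderror} should go through with $s_i$ replaced by $s_{i+1}\le 2$ and $L$ replaced by $L\Lambda$. That gives $\gamma_{\mathsf R}=O(\Lambda)$, with $C=\lceil 2\mathrm e\gamma_{\mathsf R}L+4\Lambda B\rceil$ rounds, and a final error $\frac14+\alpha+\beta\mathrm e^{\gamma_{\mathsf R}L}$. The precision condition \eqref{eq:rkprecision} then makes this at most $\frac12$ for a suitable $c_0(B,\mathsf R)$. If $K$ is too small for these estimates, it is bounded and is handled separately in the third bullet below.

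\emph{Corollary~\ref{cor:vanillapolylogl}.} Each round now makes $qK$ parallel model calls plus exact prefix sums, which lie in $\mathsf{TC}^0$, so this case is immediate.

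\emph{Theorem~\ref{thm:vanillaupperbound}.} I would split into the same two cases as in Appendix~\ref{app:pfvanilla}. In the small-$K$ case, with threshold $\gamma_{\mathsf R}(\log_2\mathrm e)L+c_1$, the constantly many RK steps are unrolled, and the table bound handles the arithmetic. In the large-$K$ case, $S_i^{(r)}/K$ is replaced by the approximate sums of Lemma~\ref{lem:ac0prefix}, applied to the quantized summands $\sum_k\round_b(\lambda_kv_{j,k})$, which are bounded by a tableau constant. The finitely many terms $T_{i,\ell}$ are computed by the table bound. This adds $\eta$ to $\beta$ exactly as before.
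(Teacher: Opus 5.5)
Your per-step indexing is a genuinely different route from the paper's, and it is what creates the obstacle you run into. The paper indexes every stage separately, in dependency order $\nu=qi+\ell$, with the endpoint at $M=qK+1$. Stage $\ell$ of step $i$ reads only stages $k<\ell$ of the same step and stages of earlier steps. So the fixed-point system is strictly lower triangular, every coefficient is at most $\mu/K\le\gamma_{\mathsf R}/M$, and Lemma~\ref{lem:picarderror} applies verbatim with $(K,B,L)$ replaced by $(M,B_{\mathsf R},\gamma_{\mathsf R}L)$, for every $K$. Taking the maximum over the stages of a step throws away the within-step order, and the same-index term is an artifact of that aggregation.

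There is a genuine gap in your closing step as written.
\begin{itemize}
\item \emph{The fixed shift does not close the induction.} Start from $e_i^{(r+1)}\le\frac{L\Lambda}{K}\sum_{j\le i}e_j^{(r)}+\beta$ and a round-$r$ bound in terms of $s_{j+1}$. The Riemann step gives $\frac1K\sum_{j\le i}s_{j+1}^{p}\le\int_0^{s_{i+2}}\vartheta^{p}\,\mathrm d\vartheta$, so the round-$(r+1)$ bound is in terms of $s_{i+2}$. A fixed replacement $s_i\to s_{i+1}$ therefore cannot work.
\item \emph{The growing shift needs $K\ge C$.} The induction does close with $s_{i+r}$ at round $r$. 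But then the endpoint bound involves $s_{K+C}=1+C/K$, which is at most $2$ only if $K\ge C=\Theta(L)$. For $K$ small relative to $L$, the aggregated recurrence need not contract at all.
\item \emph{Small $K$ is not bounded in general.} You dismiss small $K$ as bounded, but that holds only for constant $L$. In the first two bullets $L(m)$ may grow, so $K$ can be unbounded and still below $C$. That regime needs a separate argument, e.g.\ running $\RKFM$ itself, which takes $qK<qC=O(L)$ sequential rounds of one call each.
\end{itemize}
With this case split and adjusted constants, your route goes through. For instance, take $C\ge2\mathrm e\Lambda L+8\Lambda B$, so that $\frac{2\Lambda B}{C+1}\le\frac14$. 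Take $\gamma_{\mathsf R}=2\Lambda$, since the $\beta$-sum becomes $\beta\mathrm e^{2\Lambda L}$. The paper's flattening needs neither the growing shift nor the extra case.

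Two smaller points.
\begin{itemize}
\item Your initial bound $\|Y_{i,\ell}-x_0\|_\infty\le\Lambda Bs_i+O(\delta)$ fails at $i=0$. The within-step offset $h_i\sum_ka_{\ell,k}v_{i,k}$ has size of order $\Lambda B/K$, not $\delta$. The paper's choice $s_\nu=\nu/M$ absorbs this term.
\item You never use that the nodes lie in $[0,1]$. That hypothesis is exactly what puts every $t_{i,\ell}$ in $[t_i,t_{i+1}]\subseteq[0,1]$, where boundedness and Lipschitz continuity are assumed.
\end{itemize}
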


\begin{proof}
We simulate the Runge--Kutta sampler of Algorithm~\ref{alg:vanillark} by the parallel Picard iteration of Algorithm~\ref{alg:parallelrk}, following the same strategy as the Euler case: write the $\RKFM$ trajectory as a fixed-point system, solve it by Algorithm~\ref{alg:parallelrk}, and bound the error with Lemma~\ref{lem:picarderror}. The only new feature is that each step has $q$ stages, so the system has $qK$ positions instead of $K$. Each stage depends on earlier stages in the same step as well as on all stages of previous steps.

\paragraph{The fixed-point system.}
Fix $w$ and couple the two algorithms through a common initial $x_0$. Let $Y^*_{i,\ell}$ and $x^*_i$ denote the true stages and states of Algorithm~\ref{alg:vanillark}, and write $v^*_{i,\ell}:=u(Y^*_{i,\ell},t_{i,\ell};w)$. Since $\omega_\ell\in[0,1]$ and $\round_b$ is nondecreasing and fixes $F_b$, every stage time satisfies $t_{i,\ell}\in[t_i,t_{i+1}]\subseteq[0,1]$, so the boundedness and Lipschitz hypotheses apply at every stage. Each $\round_b$ displayed in Algorithms~\ref{alg:vanillark} and~\ref{alg:parallelrk} evaluates its argument over the reals and rounds the result once. Unrolling the rounded updates gives the fixed-point equations
\begin{equation}
Y^*_{i,\ell}=x_0+\sum_{j<i}h_j\sum_{k=1}^{q}\lambda_k v^*_{j,k}
    +h_i\sum_{k<\ell}a_{\ell,k}v^*_{i,k}+\rho_{i,\ell},
    \qquad\|\rho_{i,\ell}\|_\infty\le\frac{K\delta}{2},
\label{eq:rk-stage}
\end{equation}
and
\begin{equation}
x^*_K=x_0+\sum_{j=0}^{K-1}h_j\sum_{k=1}^{q}\lambda_k v^*_{j,k}+\rho_K,
    \qquad\|\rho_K\|_\infty\le\frac{K\delta}{2}.
\label{eq:rk-endpoint}
\end{equation}

We now index all positions in dependency order. Put index $0$ for $x_0$, index $\nu = qi+\ell$ for stage $Y^*_{i,\ell}$ with $0\le i\le K-1$ and $1\le \ell\le q$, and index $M:=qK+1$ for the endpoint $x_K^*$. For each $\nu\ge1$, write $y_\nu^*$ for the corresponding quantity (the stage $Y^*_{i,\ell}$ when $\nu=qi+\ell$, and the endpoint $x_K^*$ when $\nu=M$), and for $1\le\nu\le qK$ write $v_\nu^*$ for the corresponding stage velocity $v^*_{i,\ell}$. Then the fixed-point equations for all positions can be written uniformly as
\[
y_\nu^* = x_0 + \sum_{1\le\nu'<\nu} \Lambda_{\nu,\nu'}\, v_{\nu'}^* + \rho_\nu,
\qquad \|\rho_\nu\|_\infty\le \frac{K\delta}{2},
\]
where, by \eqref{eq:rk-stage} and \eqref{eq:rk-endpoint}, each coefficient $\Lambda_{\nu,\nu'}$ is either $0$, some $h_j\lambda_k$, or some $h_i a_{\ell,k}$. Put $\mu:=3\max_{\ell,k}\{1,|\tilde\lambda_\ell|,|\tilde a_{\ell,k}|\}$, a constant depending only on the tableau, so that every rounded tableau entry has magnitude at most $\mu/2$. The precision condition~\eqref{eq:rkprecision} gives $\delta\le1/K$, so $h_j\le2/K$, and hence every coefficient satisfies
\[
|\Lambda_{\nu,\nu'}|\le \frac{\mu}{K}.
\]

\paragraph{Error analysis.}
For each position $\nu$, define $e_\nu^{(r)}$ as follows: if $\nu=qi+\ell$ is a stage, then $e_\nu^{(r)}:=\|Y_{i,\ell}^{(r)}-Y^*_{i,\ell}\|_\infty$; if $\nu=M$ is the endpoint, then $e_M^{(r)}:=\|X^{(r)}-x^*_K\|_\infty$; and $e_0^{(r)}:=0$, since both runs share $x_0$. Put $s_\nu:=\nu/M$, $\gamma_{\mathsf R}:=2q\mu$, $B_{\mathsf R}:=2q\mu B$, and $A_{\mathsf R}:=q(\mu B+1)+1$, constants depending only on $B$ and $\mathsf R$. Since the iteration starts every position at $x_0$, bounding the at most $\nu$ velocities in the uniform equation by $B$ gives
\[
e_\nu^{(0)}=\|x_0-y_\nu^*\|_\infty
\le\sum_{1\le\nu'<\nu}|\Lambda_{\nu,\nu'}|\,\|v_{\nu'}^*\|_\infty+\|\rho_\nu\|_\infty
\le\nu\cdot\frac{\mu B}{K}+\frac{K\delta}{2}
\le B_{\mathsf R}s_\nu+A_{\mathsf R}K\delta,
\]
the last step because $\nu/K=s_\nu M/K\le2qs_\nu$.

For the inductive step, write $y^{(r)}_\nu$ and $v^{(r)}_\nu$ for the iterate and velocity of Algorithm~\ref{alg:parallelrk} at position $\nu$, in the indexing above, and let $\Lambda^\circ_{\nu,\nu'}$ be the nominal coefficients, obtained from $\Lambda_{\nu,\nu'}$ by replacing the step $h_j$ or $h_i$ with $1/K$. Each rounded tableau entry has magnitude at most $\mu/2$, so $|\Lambda^\circ_{\nu,\nu'}|\le\mu/(2K)$, and $|1/K-h_j|\le\delta$ bounds the mismatch by $|\Lambda^\circ_{\nu,\nu'}-\Lambda_{\nu,\nu'}|\le\mu\delta/2$. Expanding the update of Algorithm~\ref{alg:parallelrk} in the same indexing,
\[
y^{(r+1)}_\nu=x_0+\sum_{1\le\nu'<\nu}\Lambda^\circ_{\nu,\nu'}v^{(r)}_{\nu'}+\rho^{(r)}_\nu,
\qquad
\|\rho^{(r)}_\nu\|_\infty\le\frac{(2q+1)\delta}{2},
\]
where $\rho^{(r)}_\nu$ collects the roundings of the at most $qK$ products in $S^{(r)}_i$ and the at most $q$ products in $T^{(r)}_{i,\ell}$, each of size at most $\delta/2$ and divided by $K$, together with the outer rounding. Subtracting the uniform equation from this expansion and taking norms,
\[
e^{(r+1)}_\nu
\le\sum_{1\le\nu'<\nu}|\Lambda^\circ_{\nu,\nu'}|\,\|v^{(r)}_{\nu'}-v^*_{\nu'}\|_\infty
+\sum_{1\le\nu'<\nu}|\Lambda^\circ_{\nu,\nu'}-\Lambda_{\nu,\nu'}|\,\|v^*_{\nu'}\|_\infty
+\|\rho^{(r)}_\nu\|_\infty+\|\rho_\nu\|_\infty.
\]
In the first sum, the two velocities at position $\nu'$ share the stage time and the condition, so Lipschitz continuity gives $\|v^{(r)}_{\nu'}-v^*_{\nu'}\|_\infty\le Le^{(r)}_{\nu'}$, and $|\Lambda^\circ_{\nu,\nu'}| \le \mu/(2K)\le\gamma_{\mathsf R}/M$ because $M\le2qK$. The second sum has fewer than $M\le2qK$ terms, each at most $\frac{\mu\delta}{2}B$, so it is at most $q\mu BK\delta$. The two residuals add at most $(2q+1)\delta/2+K\delta/2\le(q+1)K\delta$. Hence, by the choice of $A_{\mathsf R}$,
\[
e_\nu^{(r+1)}\le \frac{\gamma_{\mathsf R}L}{M}\sum_{\nu'<\nu}e_{\nu'}^{(r)}+A_{\mathsf R}K\delta.
\]
This is exactly the recurrence of Lemma~\ref{lem:picarderror}, with its parameters set to $M$, $B_{\mathsf R}$, $\gamma_{\mathsf R}L$, $\alpha=A_{\mathsf R}K\delta$, and $\beta=A_{\mathsf R}K\delta$. Applying that lemma with $C:=\lceil\mathrm e\gamma_{\mathsf R}L+4B_{\mathsf R}\rceil$ gives
\[
e_M^{(C)}\le \frac14 + A_{\mathsf R}K\delta\bigl(1+\mathrm e^{\gamma_{\mathsf R}L}\bigr).
\]
The precision condition~\eqref{eq:rkprecision} implies $K\delta\le2^{-c_0}\mathrm e^{-\gamma_{\mathsf R}L}$.  Choosing $c_0=c_0(B,\mathsf R)$ with $2A_{\mathsf R}2^{-c_0}\le\frac14$ makes $e_M^{(C)}\le\frac12$. Thus $u$ accepts $\mathcal{L}$ with margin $\frac12$ under Algorithm~\ref{alg:parallelrk} whenever it accepts $\mathcal{L}$ with margin $1$ under $\RKFM$, and the algorithm runs for $C=O(L)$ rounds with $qK$ parallel model calls per round. This proves the first item.

\paragraph{Expressiveness upper bounds.}
For the analogue of Theorem~\ref{thm:vanillaupperbound}, where $b=O(\log m)$, we split as in Appendix~\ref{app:pfvanilla}, now at $b\ge\log_2K+\gamma_{\mathsf R}(\log_2\mathrm e)L+c_1$. The small-$K$ case again re-runs the sampler directly, now making $q$ stage calls per step, and lies in $\mathsf D$ by the same argument as in Appendix~\ref{app:pfvanilla}. In the large-$K$ case, we replace the normalized prefix sums by the approximate ones of Lemma~\ref{lem:ac0prefix}, applied with $N:=qK$ and accuracy $\varepsilon:=\eta/q$ to the rounded products $\round_b(\lambda_kv^{(r)}_{j,k})$, which are bounded by a constant, and rescaled by the integer $q$, as in Appendix~\ref{app:pfvanilla}. All remaining updates depend on only $O(1)$ scalars of $O(\log m)$ bits. With the time grid and stage times hardwired, they can be computed exactly by the table bound. The approximation adds $\eta$ to the additive constant of the recurrence, so Lemma~\ref{lem:picarderror}, now with $\beta=A_{\mathsf R}K\delta+\eta$, gives
\[
e_M^{(C)}\le\frac14+A_{\mathsf R}K\delta+\bigl(A_{\mathsf R}K\delta+\eta\bigr)\mathrm e^{\gamma_{\mathsf R}L}.
\]
Taking $\eta:=\tfrac18\mathrm e^{-\gamma_{\mathsf R}L}$ and choosing $c_1=c_1(B,\mathsf R)$ with $2A_{\mathsf R}2^{-c_1}\le\frac18$ gives $e_M^{(C)}\le\frac12$. Since $A_{\mathsf R}\ge q$, this choice also gives $\delta\le K\delta\le2^{-c_1}\mathrm e^{-\gamma_{\mathsf R}L}\le\eta/q$, as Lemma~\ref{lem:ac0prefix} requires. The size and depth argument of Appendix~\ref{app:pfvanilla} then places $\mathcal{L}$ in $\mathsf{D}$, which proves the third item.

For the analogue of Corollary~\ref{cor:vanillapolylogl}, where $b=\poly(m)$, arithmetic with the hardwired rounded tableau entries and the exact prefix sums lies in $\mathsf{TC}^0$~\citep{vollmer1999introduction,hesse2002uniform}, so the depth is $O(L)$ as in Corollary~\ref{cor:vanillapolylogl}. This proves the second item and hence the proposition.
\end{proof}

\subsection{Matching the Circuit Depth}
\label{app:depthmatch}

Corollary~\ref{cor:vanillapolylogl} shows that any run of $\VanillaFM$ can be compressed into $O(L)$ sequential rounds. Here we show that these rounds match circuit depth up to constant factors: given any polynomial-size target majority circuit of depth $d$, $\VanillaFM$ can simulate it using a model with Lipschitz constant $O(d)$, while each model evaluation can be computed by a constant-depth circuit.
The construction proceeds in three phases, determined by the current time $t$: resetting the sampler state, evaluating one layer of the target circuit per sampling step, and finally amplifying the resulting verdict to satisfy the acceptance margin.

\begin{proposition}[Encoding majority circuits into $\VanillaFM$]\label{prop:depthmatch}
Let $\{d(m)\}_{m=1}^\infty$ be a sequence of positive integers, and $\mathcal{L}$ be a language decided by polynomial-size majority circuits of depth $d(m)$. Then there are $K(m)=\Theta(d(m))$, $n(m)=\poly(m)$, and $b(m)=\poly(m)$ satisfying~\eqref{eq:picardprecision} with $B=8$ and $L(m):=8K(m)$, and a model $u$, computed by $\mathsf{TC}^0$ circuits, such that $u$ is $8$-bounded and $8K(m)$-Lipschitz continuous and accepts $\mathcal{L}$ with margin $1$ under $\VanillaFM$.
\end{proposition}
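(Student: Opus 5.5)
The plan is to build $u$ so that one Euler step of $\VanillaFM$ evaluates one layer of the target circuit. Each gate's bit is stored in the state at scale $\Theta(1/K)$; Lipschitz constant $\Theta(K)$ is exactly what is needed to read a bit stored at that scale. Let the target circuit $\mathcal C_m$ have gates $g=1,\dots,n$ arranged in layers $1,\dots,d$, with the output gate $g^\star$ in layer $d$, and give each gate one state coordinate $x_g$. Take $K:=8(d+2)$ and partition the time grid into three phases by the value of $t$, which $u$ may depend on arbitrarily since Lipschitzness is only required in $(x,\ctx)$:
\begin{itemize}
\item a reset phase $t<1/4$;
\item a layer phase consisting of the next $d$ grid steps, one layer per step;
\item an amplification phase lasting the remaining time, which is at least $1/2$.
\end{itemize}
I will take $b=\Theta(K)=\poly(m)$ large enough that \eqref{eq:picardprecision} holds with $L=8K$ and $B=8$. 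All rounding is then negligible, and I will treat $h_i$ as $1/K$ up to an error of $2^{-b}$.

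\emph{Reset.} Set $u(x,t;w)_g:=-\clip(Kx_g,-8,8)$. This map is $K$-Lipschitz and $8$-bounded. A coordinate of magnitude at most $8/K$ is sent exactly to $0$ (up to rounding) in one step, and larger coordinates shrink by $8/K$ per step. Since $|x_g|\le1$ at the start, all coordinates reach $0$ within $K/8+1$ steps, i.e.\ before $t=1/4$, and then stay at $0$.

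\emph{Layers.} Define the soft bit $q_g(x):=\clip(Kx_g/2,-1,1)$. It is $K/2$-Lipschitz and equals $\pm1$ whenever $|x_g|\ge2/K$. At the step assigned to layer $\ell$, let a gate $g$ in layer $\ell$ with inputs $\mathrm{in}(g)$ (earlier gates and/or bits of $w$) have velocity
\[
u_g:=4\,c_g(x,w)\,\bigl(2\,G_g\bigl(\operatorname{sign}q_{\mathrm{in}(g)}\bigr)-1\bigr),\qquad c_g:=\min_{j\in\mathrm{in}(g)}\min\{1,|q_j|\},
\]
where $G_g$ is the gate's Boolean function and, for input bits $w_k$, $q$ is replaced by $2w_k-1$ clipped to $[-1,1]$. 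All other coordinates get velocity $0$ at this step. By induction on $\ell$, every gate of layer $<\ell$ sits at $\pm4h\approx\pm4/K$ with the correct sign, so $c_g=1$. Hence gate $g$ moves to $\pm4/K$ with the correct sign, while gates of later layers stay at $0$.

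\emph{Amplification.} Set every coordinate's velocity to $8\,q_{g^\star}(x)$. The output coordinate starts at $\pm4/K$ and only moves further from $0$, so $q_{g^\star}$ stays at $\pm1$. Every coordinate starts with magnitude at most $4/K$ and then travels at least $8\cdot\tfrac12=4$ in the verdict's direction. It therefore ends with magnitude at least $4-4/K\ge1$ and the correct sign, which gives acceptance with margin $1$.

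\emph{Main obstacle: Lipschitzness and $\mathsf{TC}^0$-computability of the layer velocity, simultaneously.} The fan-in of a majority gate is unbounded, so any formula that sums soft bits would have Lipschitz constant growing with the fan-in. This is why I use the confidence-weighted form $c_g\cdot(\pm4)$, which involves no sums. The factor $\operatorname{sign}q_j$ can flip only where some $q_j=0$, and there $c_g=0$. Along any segment between two inputs, the product $c_g\cdot s_g$ is therefore continuous and piecewise equal to $\pm4c_g$. Since $c_g$ is a minimum of $K/2$-Lipschitz functions (in $\ell_\infty$), the product is $4\cdot K/2=2K\le8K$ Lipschitz. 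The boundedness $|u_g|\le8$ is immediate in all three phases.

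For computability, each phase's velocity uses only clipping, comparisons, a minimum over $\poly(m)$ numbers, one evaluation of an AND, OR, NOT or MAJORITY gate on the signs, and multiplication by constants. All of these are computable in $\mathsf{TC}^0$ on $\poly(m)$-bit numbers. The phase selection reads the hardwired time index from $t$, so $u$ is computed by $\mathsf{TC}^0$ circuits. Finally, I will check that the three phases fit in $K=\Theta(d)$ steps, that at the layer/amplification boundary the changes of velocity formula happen only on grid times (which is harmless since Lipschitzness is uniform in $t$ but not required across $t$), and that \eqref{eq:picardprecision} holds with $L=8K$ and $b=\Theta(K)$.
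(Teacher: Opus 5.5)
Your proposal is correct. It shares the paper's skeleton:
\begin{itemize}
\item a reset phase with velocity $-\clip(Kx_\gamma;8)$;
\item one circuit layer per Euler step, with bits stored at scale $\Theta(1/K)$ and read by a $\Theta(K)$-Lipschitz clip decoder;
\item an amplification phase that drives every coordinate at $8$ times the decoded output.
\end{itemize}
The genuine difference is how a gate is extended Lipschitz-continuously off the Boolean cube. The paper recodes every AND, OR, NOT and MAJORITY gate as a threshold gate with $\pm1$ weights. It then extends the gate as the $k$-th largest of the signed inputs $q_jy_j$. This extension is $1$-Lipschitz by a short monotonicity-plus-translation argument. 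Its $\mathsf{TC}^0$ implementation needs pairwise comparisons, one threshold gate per candidate, and a selection. Your confidence-weighted form $4\min_j|q_j|\,(2G(\operatorname{sign}q)-1)$ instead works for an arbitrary Boolean gate function and is simpler to compute: a minimum plus one gate evaluation on the signs. The price is a more delicate Lipschitz proof. Your continuity-along-segments argument goes through because each $q_j$ is a clip of an affine function. On a segment it therefore vanishes at most at one point, or identically, in which case the product is identically $0$. A smaller difference is the choice of $K$. The paper takes $K$ to be a power of two, so that $t_i=i/K$ and $h_i=1/K$ exactly, the reset sends coordinates exactly to $0$, and the whole run is exact. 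With $K=8(d+2)$ you must carry rounding residuals. One reset step leaves $|x_\gamma|\le|x_\gamma|K2^{-b}+2^{-b-1}$ rather than $0$, and ``stays at $0$'' becomes ``stays within $O(2^{-b})$ of $0$.'' Since $b=\Theta(K)$ is forced anyway, this is harmless, as you indicate.
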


\begin{proof}
Every gate of the circuit is a threshold gate with weights in $\{-1,1\}$ and an integer threshold. An AND, OR, or MAJORITY gate of fan-in $r$ has unit weights and threshold $r$, $1$, or $\lceil r/2\rceil$, respectively, and a NOT gate has weight $-1$ and threshold $0$.

We construct the model $u$ explicitly. Recode Boolean values by $0\mapsto-1$ and $1\mapsto1$. A gate keeps its $\pm1$ weights $q_1,\ldots,q_r$, and its threshold $\theta_{\text{old}}$ becomes the integer $\theta:=2\theta_{\text{old}}-\sum_{j=1}^rq_j$. Index the state coordinates by the gates, let $n$ be the number of gates, and write $*$ for the output gate. The run stores the output of each gate $\gamma$ in the state coordinate $x_\gamma$. The layer of a gate $\gamma$ is its depth $\ell(\gamma)\in\{1,\ldots,d\}$. Every gate reads only primary inputs and gates of earlier layers.

Let $K$ be the least power of two with $K\ge8(d+1)$, so that $d+1\le K/8$ and $K\le16(d+1)$. We will choose $b\ge\log_2 K$ at the end of the proof. Since $K$ is a power of two and $\round_b$ fixes $F_b$, the time grid is exact: $t_i=i/K$ and $h_i=1/K$.

\paragraph{Encodings and gate extensions.}
Write $\clip(y;c):=\max\{-c,\min\{c,y\}\}$ for the truncation of a scalar $y$ to $[-c,c]$. Gate outputs are stored at scale $1/K$: one Euler step with velocity $\pm1$ writes exactly $\pm1/K$ into a coordinate. A state coordinate $x_\gamma$ is therefore decoded by $\clip(Kx_\gamma;1)$, which maps the stored encodings $\pm1/K$ to $\pm1$. An input bit $w_j\in\{0,1\}$ is decoded by $\clip(2w_j-1;1)$, which maps $0$ to $-1$ and $1$ to $1$.

Consider a gate with weights $q_1,\ldots,q_r\in\{-1,1\}$ and integer threshold $\theta$, and set $k:=\lceil(\theta+r)/2\rceil$. On a vertex $a\in\{-1,1\}^r$, the gate fires if and only if $\sum_jq_ja_j=2\,|\{j:q_ja_j=1\}|-r$ reaches $\theta$, that is, if and only if at least $k$ of the $\pm1$ values $q_ja_j$ equal $1$. The recoded gate value is therefore the $k$-th largest of the $q_ja_j$ when $1\le k\le r$, and a constant otherwise. Accordingly, extend the gate to $[-1,1]^r$ by letting $\hat g(y)$ be the $k$-th largest of $q_1y_1,\ldots,q_ry_r$, with $\hat g\equiv1$ if $k\le0$ and $\hat g\equiv-1$ if $k>r$. Then $\hat g$ equals the recoded gate value on the vertices, and every value of $\hat g$ is one of the $q_jy_j$ or $\pm1$, so $\hat g$ maps $F_b^r$ into $F_b$ and the model needs no rounding.

\paragraph{The model.}
Let $G_\gamma(w,x)$ denote $\hat g$ of gate $\gamma$ applied to its decoded inputs. We define $u$ coordinatewise by
\begin{equation}
\label{eq:depthmatch-model}
\bigl(u(x,t;w)\bigr)_\gamma :=
\begin{cases}
-\clip(Kx_\gamma;8) & t=i/K\text{ with }0\le i<K/4,\\[2mm]
G_\gamma(w,x) & t=(K/4+\ell(\gamma)-1)/K,\\[2mm]
8\clip(Kx_*;1) & t=i/K\text{ with }K/4+d\le i<K,\\[2mm]
0 & \text{otherwise},
\end{cases}
\end{equation}
on valid inputs with parameters $(n(m),b(m),m)$, and $u(x,t;w):=\mathbf0^n$ on all other encoded inputs.

\paragraph{The run.}
We now show that $u$ accepts $\mathcal L$ with margin $1$ under $\VanillaFM$. In the first phase, one step maps $x_\gamma$ to $0$ if $|x_\gamma|\le8/K$ and otherwise moves it toward $0$ by $8/K$, so from $\|x_0\|_\infty\le1$ the state reaches $\mathbf0$ within $K/8+1\le K/4$ steps and stays there.

The second phase evaluates one circuit layer per step. A coordinate has zero velocity at every second-phase step except its own layer step, so a value once written persists. By construction, step $K/4+\ell-1$ writes the value of every layer-$\ell$ gate, encoded as $\pm1/K$, into its coordinate.

The third phase amplifies the verdict: after step $K/4+d-1$ every gate value is written, so the decoded output $\clip(Kx_*;1)$ is the verdict $\pm1$. The shared velocity $8\clip(Kx_*;1)$ stays pinned at $\pm8$ for the $K-K/4-d\ge\frac58K$ remaining steps. Every final coordinate is then at least $8\cdot\frac58-\frac1K\ge1$ on acceptance and at most $-1$ on rejection, so $u$ accepts $\mathcal{L}$ with margin $1$.

The run is exact: every velocity lies in $\{0,\pm1,\pm8\}$ or equals $-Kx_\gamma$, so every Euler increment is $0$, $\pm1/K$, $\pm8/K$, or $-x_\gamma$, all in $F_b$. 

\paragraph{Lipschitz constant and circuit complexity.}
We first check that $\hat g$ is $1$-Lipschitz. Suppose that $\|y-y'\|_\infty\le\varepsilon$, so that $q_jy'_j\le q_jy_j+\varepsilon$ for every $j$. The $k$-th largest is nondecreasing in every coordinate and grows by exactly $\varepsilon$ when every coordinate grows by $\varepsilon$. Hence $\hat g(y')\le\hat g(y)+\varepsilon$, and exchanging $y$ and $y'$ gives the reverse bound.

The model is $8$-bounded. For fixed $t$, the velocity is $K$-Lipschitz in the reset phase and $8K$-Lipschitz in the amplification phase, as $\clip$ is $1$-Lipschitz. In the layer phase, the velocity is $K$-Lipschitz: the state decoder $\clip(Kx_\gamma;1)$ is $K$-Lipschitz, the input decoder $\clip(2w_j-1;1)$ is $2$-Lipschitz, and $\hat g$ is $1$-Lipschitz. Since the Lipschitz condition of~\Sref{sec:fmsetup} constrains each $t$ separately, $u$ is $8K$-Lipschitz continuous.

It remains to check that $u$ is computed by $\mathsf{TC}^0$ circuits. Every scalar involved occupies polynomially many bits in the input length. Comparisons, additions, and multiplexing lie in $\mathsf{AC}^0$~\citep{vollmer1999introduction}, and multiplication by $K$ is a bit shift, as $K$ is a power of two.

The only nontrivial computation is $\hat g$, the $k$-th largest of the $r$ scalars $q_jy_j$. This value is the largest candidate $c\in\{q_1y_1,\ldots,q_ry_r\}$ with $|\{j:q_jy_j\ge c\}|\ge k$. The circuit compares all pairs of the scalars in parallel, tests each candidate by one threshold gate over its $r$ comparison bits, and selects the largest passing candidate by multiplexing. These blocks have constant depth and polynomial size, so $u$ is computed by $\mathsf{TC}^0$ circuits.

\paragraph{Parameters.}
Finally, we fix the parameters. The quantities $n(m)$ and $K(m)\le16(d(m)+1)$ are $\poly(m)$, and
\[
b(m):=\bigl\lceil\log_2K(m)+8(\log_2\mathrm e)K(m)+c_0(8)\bigr\rceil=\poly(m)
\]
satisfies $b\ge\log_2K$ and~\eqref{eq:picardprecision} with $B=8$ and $L(m)=8K(m)$.
\end{proof}

\textbf{Remark.} Proposition~\ref{prop:depthmatch} and Corollary~\ref{cor:vanillapolylogl} show that the $O(L)$ rounds of Lemma~\ref{lem:picard} cannot be improved in general without collapsing the depth hierarchy of majority circuits. Suppose that some inference algorithm simulates $\VanillaFM$ in $f(L)=o(L)$ rounds, each consisting of polynomially many parallel calls to $u$ and $\mathsf{TC}^0$ arithmetic. The model of Proposition~\ref{prop:depthmatch} has $L(m)=\Theta(d(m))$ and is computed by majority circuits of constant depth, so polynomial-size majority circuits of depth $d(m)$ would collapse to polynomial-size majority circuits of depth $O(f(L(m)))=o(d(m))$, which is unlikely. For example, the depth hierarchy of majority circuits within $\mathsf{TC}^0$ would collapse to a constant depth, whereas it is widely conjectured to be infinite~\citep{hajnal1993threshold}.

\clearpage

\section{Formalization and Circuit Complexity of the Tasks in \Sref{sec:theory_aligned_tasks}}
\label{app:task_complexity}

This appendix formalizes the three synthetic tasks introduced in \Sref{sec:theory_aligned_tasks} and establishes the circuit-complexity claims used there. Specifically, we show that line tracing is \(\mathsf{NC}^1\)-complete, while long multiplication and long text rendering are \(\mathsf{TC}^0\)-complete, matching the separation barriers between $\VanillaFM$ and $\SelfCondFM$ in \Sref{sec:vanillabarrier} for \(\mathsf{TC}^0\) and \(\mathsf{AC}^0\) backbones, respectively. As in Appendix~\ref{app:proofconventions}, all circuit classes considered below are non-uniform.

\subsection{Task Descriptions}

We first describe the three task families before giving their formal definitions and complexity analyses. The tasks are illustrated in Figure~\ref{fig:theory_aligned_tasks}.

\paragraph{Line tracing.} The input is an image containing five black polylines that traverse \(m\) permutation stages from left to right, with one polyline starting in each of the five rows. At stage \(k\), a permutation \(g_k\in A_5\) reroutes the polyline entering row \(i\) to row \(g_k(i)\), where \(A_5\) denotes the group of even permutations on five elements. The target is a colored version of the input in which each polyline is assigned a fixed color determined by its starting row. Whenever two polylines cross within a stage, we resolve occlusion by rendering the polyline entering the stage from the higher row on top.

\paragraph{Long multiplication.} The input is an image containing a multiplication problem of two positive integers \(A\) and \(B\), each with at most \(m\) digits and no leading zeros. The two integers are rendered with their digits aligned by column and their least significant digits in the rightmost column. The target is a complete step-by-step long-multiplication worksheet: writing \(B=\sum_i B_i10^i\), it displays the shifted partial products \(AB_i10^i\), the digitwise carries, their vertical summation, and the final product \(AB\). The worksheet follows a digit-aligned grid layout, with each digit rendered in a separate cell and carry digits placed at the upper left of the corresponding cells. Additional line segments indicate the summation of the partial products.

\paragraph{Long text rendering.} The input is a text passage of \(m\) words sampled from the TinyStories~\citep{eldan2023tinystories} dataset, given to the model as a text condition. The target is a monospaced rendering of the passage on a canvas of fixed usable width \(W\). The first word begins at the left margin; each subsequent word is placed on the current line if it fits after a single interword space, and otherwise starts a new line. At each paragraph break, one line is left blank, and the next paragraph begins at the left margin.

\subsection{Preliminaries for the Complexity Analysis}

Next, we formalize these three image-space tasks and establish their circuit complexities through their underlying symbolic functions.

\paragraph{Completeness for functions.} Each of the three tasks produces a long output rather than a single bit, so we
phrase completeness through the output bits. For a function \(f\) whose output
has polynomial length, put
\[
    \operatorname{Bit}(f)
    :=\{(x,j):\text{the \(j\)-th bit of \(f(x)\) is \(1\)}\}.
\]

As in Theorem~\ref{thm:vanillaupperbound}, we write \(\mathsf{D}\) for a
circuit class. We call \(f\)
\emph{\(\mathsf{D}\)-complete} when \(\operatorname{Bit}(f)\in\mathsf{D}\) and
every language in \(\mathsf{D}\) is decided by \(\mathsf{AC}^0\) circuits with
oracle gates for \(\operatorname{Bit}(f)\), that is, when \(f\) is complete for
\(\mathsf{D}\) under \(\mathsf{AC}^0\) Turing reductions. The three
propositions below use this notion with \(\mathsf{D}=\mathsf{NC}^1\) for line
tracing and \(\mathsf{D}=\mathsf{TC}^0\) for the other two tasks.

\paragraph{Symbolic and image-space tasks.}
Line tracing and long multiplication map images to images, whereas long text rendering maps a word sequence with paragraph boundaries to an image. We call these three image-output mappings \emph{image-space tasks}, distinguishing them from their underlying symbolic functions. For each task, we establish the circuit complexity of its symbolic function, then give bidirectional \(\mathsf{AC}^0\) Turing reductions between that function and the corresponding image-space task.

\paragraph{Asymptotic interpretation.} The propositions below concern asymptotic task families as \(m\) grows. In practice, our experiments in \Sref{sec:theory_aligned_tasks} instantiate these families over a finite range of problem sizes \(m\)
and
report a single score for each task.
A line-tracing instance with \(m=50\)
contains every shorter prefix length as a subproblem. Long multiplication
draws operands of five to seven digits, and long text rendering draws
passages of varying word count \(m\). Circuit complexity is a property of the
family, so the propositions describe the computation that the tasks demand as
\(m\) grows.

\subsection{Line Tracing}
\label{app:line_tracing_complexity}

A line-tracing instance, as illustrated in the left column of
Figure~\ref{fig:theory_aligned_tasks}, can be viewed as a sequence
\((g_1,\ldots,g_m)\in A_5^m\), where \(A_5\) denotes the alternating group on
five elements. Here \(g_k\) reroutes the polyline entering row \(i\) of stage
\(k\) to row \(g_k(i)\). With \(e\) the identity of \(A_5\), set
\[
    \pi_0=e,
    \qquad
    \pi_k=g_k\circ\pi_{k-1}
    \quad (1\le k\le m),
\]
so that the polyline starting in row \(i\) occupies row \(\pi_k(i)\) after
\(k\) stages. Each polyline keeps the color \(c_i\) of its starting row \(i\),
so row \(j\) carries the color \(c_{\pi_k^{-1}(j)}\) after \(k\) stages. The
colored target is exactly the sequence of prefix permutations
\(\operatorname{Trace}_{A_5}(g_1,\ldots,g_m):=(\pi_0,\ldots,\pi_m)\).

\begin{proposition}[Line tracing]
\label{prop:line_tracing_complexity}
The line-tracing task, mapping a black-and-white input image to its colored target, is
\(\mathsf{NC}^1\)-complete under \(\mathsf{AC}^0\) Turing reductions.
\end{proposition}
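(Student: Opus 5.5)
Following the plan announced for all three tasks, I would first settle the complexity of the symbolic function $\operatorname{Trace}_{A_5}$, and then connect it to the image-space task by $\mathsf{AC}^0$ Turing reductions in both directions. Since completeness is transitive under $\mathsf{AC}^0$ Turing reductions (an oracle gate for one problem can be replaced by the $\mathsf{AC}^0$ oracle circuit that computes it from the other, and constant-depth compositions stay constant depth), this gives the proposition.

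\emph{Symbolic membership.} I would show $\operatorname{Bit}(\operatorname{Trace}_{A_5})\in\mathsf{NC}^1$. Each bit of $\pi_k$ is a bit of the iterated product $g_k\circ\cdots\circ g_1$ in the constant-size group $A_5$. A balanced binary tree of group multiplications computes this product. Each multiplication is a map on constantly many bits, so it is a bounded-fan-in circuit of constant size, and the tree has depth $O(\log m)$. Running all $m+1$ prefixes in parallel keeps the size polynomial, and selecting bit $j$ is multiplexing of depth $O(\log m)$.

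\emph{Symbolic hardness.} I would invoke Barrington's theorem: the word problem for $A_5$, namely deciding whether $g_m\circ\cdots\circ g_1=e$, is $\mathsf{NC}^1$-complete under $\mathsf{AC}^0$ (indeed projection) reductions, since every $\mathsf{NC}^1$ language has polynomial-length width-$5$ permutation branching programs over $A_5$. To decide the word problem with a $\operatorname{Trace}_{A_5}$ oracle, the circuit queries the constantly many bits that encode $\pi_m$ and compares them with the encoding of $e$. This is a depth-two table lookup.

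\emph{Image reductions.} I expect this step to be the main obstacle, mainly because the rendering must be fixed precisely: per-stage pixel blocks, polyline shapes, colours, and the occlusion rule. In the image-to-symbol direction, each stage occupies a fixed pixel block determined by $k$. The permutation $g_k$ is a function of constantly many pixel patterns in that block, because the candidate drawings range over the $60$ elements of $A_5$. An $\mathsf{AC}^0$ circuit can therefore recover each $g_k$ by a table lookup over those $O(1)$ patterns, or equivalently by matching the block against $60$ hardwired templates with unbounded fan-in AND/OR. In the symbol-to-image direction, the colour of each target pixel in stage block $k$ is determined by $g_k$, $\pi_{k-1}$, the fixed palette $(c_i)$, and the occlusion rule for crossings, in which the polyline entering from the higher row is drawn on top. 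This is a hardwired function of constantly many symbols, obtained either from oracle bits of $\operatorname{Trace}_{A_5}$ or from parsed input bits, so it is again $\mathsf{AC}^0$. For hardness, a word-problem instance is first rendered as an input image, which is a projection. The circuit then queries the image oracle on the final-stage pixels and decodes $\pi_m$ from which colour lies in which row.
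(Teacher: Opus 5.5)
Your proposal is correct and follows the paper's proof essentially step for step. It uses a balanced product tree with all prefixes computed in parallel for $\mathsf{NC}^1$ membership of $\operatorname{Trace}_{A_5}$. For hardness it uses Barrington's theorem together with a constant number of oracle queries that read off $\pi_m$. Local $\mathsf{AC}^0$ template lookups then translate between the symbolic sequences and the images in both directions, and the paper does the same.

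One small imprecision: rendering an arbitrary word-problem instance as an image is a constant-size table lookup on each encoded $g_k$, not literally a projection. An $\mathsf{AC}^0$ map is all the reduction needs, so this does not affect the argument.
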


\begin{proof}
We first consider the symbolic function \(\operatorname{Trace}_{A_5}\).
Since \(A_5\) is fixed, multiplying two encoded permutations is a
constant-size lookup, so a balanced tree computes any \(\pi_k\) with depth
\(O(\log m)\). Computing all \(m\) prefix products independently still uses
only polynomial size, so the output-bit relation of
\(\operatorname{Trace}_{A_5}\) is in \(\mathsf{NC}^1\).

To establish \(\mathsf{NC}^1\)-hardness, we use Barrington's result that the
word problem for every fixed non-solvable group is \(\mathsf{NC}^1\)-complete
under \(\mathsf{AC}^0\) reductions~\citep{barrington1986bounded}. Since \(A_5\)
is non-solvable, that result applies to the word problem of deciding whether
\(g_m\circ\cdots\circ g_1=e\). An encoded group element occupies a constant
number of bits, so a constant number of oracle queries returns the final
component \(\pi_m\) of the trace, and an \(\mathsf{AC}^0\) test compares it with
\(e\). Hence \(\operatorname{Trace}_{A_5}\) is \(\mathsf{NC}^1\)-complete.

Parallel local lookups decode \((g_k)_{k=1}^m\) from the black-and-white stage templates
and \((\pi_k)_{k=0}^m\) from the colored target, and conversely render both sequences as
images. These conversions lie in \(\mathsf{AC}^0\), so the
classification transfers to the line-tracing task.
\end{proof}

\subsection{Long Multiplication}
\label{app:long_multiplication_complexity}

We write \(\operatorname{LMul}(A,B)\) for the complete decimal worksheet of two
positive integers \(A\) and \(B\), shown in the middle column of
Figure~\ref{fig:theory_aligned_tasks}. For
\(B=\sum_i B_i10^i\), put \(P_i=AB_i10^i\) and \(S_i=\sum_{j=0}^i P_j\). The
worksheet contains the shifted partial products \(P_i\), the successive partial
sums \(S_i\) for \(i\ge1\), the digitwise carries used in these multiplications
and additions, and the final product \(AB\), all in decimal.

\begin{proposition}[Long multiplication]
\label{prop:long_multiplication_complexity}
The long-multiplication task, mapping an operand image to its rendered
worksheet, is \(\mathsf{TC}^0\)-complete under \(\mathsf{AC}^0\) Turing
reductions.
\end{proposition}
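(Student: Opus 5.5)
Following the template of Proposition~\ref{prop:line_tracing_complexity}, I would first classify the symbolic function $\operatorname{LMul}$ and then transfer the result to the image-space task through $\mathsf{AC}^0$ conversions in both directions.

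\textbf{Membership.} I need $\operatorname{Bit}(\operatorname{LMul})\in\mathsf{TC}^0$, and I would compute each item of the worksheet separately, in parallel. Each partial product $P_i=AB_i10^i$ multiplies an $m$-digit decimal number by a single digit. Its digitwise carries come from a carry recurrence over digits, and that recurrence is a prefix computation over a constant-size monoid (the carry lies in $\{0,\ldots,8\}$), which is in $\mathsf{AC}^0$ and hence in $\mathsf{TC}^0$. Alternatively, each carry equals $\lfloor(\text{value of the lower-order prefix product})/10^{k}\rfloor$, which can be computed by iterated addition in $\mathsf{TC}^0$. The partial sums $S_i=\sum_{j\le i}P_j$ are iterated sums of polynomially many integers, and these lie in $\mathsf{TC}^0$. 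I would also need decimal--binary conversion in $\mathsf{TC}^0$, for which I would cite~\citep{hesse2002uniform}. The addition carries for $S_{i-1}+P_i$ are determined by those two operands: the carry into column $k$ equals $\lfloor((S_{i-1}\bmod 10^k)+(P_i\bmod 10^k))/10^k\rfloor$. This is again $\mathsf{TC}^0$ arithmetic given the decimal digits. Finally, the product $AB$ itself is iterated multiplication of two numbers, which is in $\mathsf{TC}^0$.

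\textbf{Hardness.} I would reduce MAJORITY to the task with constantly many oracle queries. Given $y\in\{0,1\}^m$, I would form decimal numbers whose digits are $0/1$. The point is that multiplication of suitably padded numbers encodes counting. Take $A=\sum_k y_k10^{ck}$ and $B=\sum_k 10^{ck}$ with spacing $c=\lceil\log_{10}(m+1)\rceil+1$, chosen so that column blocks do not interfere. Then the block of $AB$ at position $c(m-1)$ equals $\sum_k y_k$, written in decimal within $c$ digits. An $\mathsf{AC}^0$ circuit compares this $O(\log m)$-digit value against $m/2$ using the table bound, so it decides MAJORITY. Since MAJORITY is $\mathsf{TC}^0$-complete under $\mathsf{AC}^0$ reductions, every language in $\mathsf{TC}^0$ reduces to $\operatorname{Bit}(\operatorname{LMul})$. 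One requirement is that $A$ and $B$ have no leading zeros; I would ensure this by setting the top digits to $1$ and then subtracting their known contribution, which only shifts the count by a hardwired amount.

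\textbf{Transfer to images.} Exactly as in the line-tracing case, parallel local lookups decode the digits of $A$ and $B$ from the fixed grid layout, and they render or decode the worksheet cells and carry glyphs. Since the layout positions are hardwired per size, both conversions lie in $\mathsf{AC}^0$.

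I expect the main obstacle to be the membership of the carry digits. I must check that every carry displayed in the worksheet is a $\mathsf{TC}^0$ function of the inputs, and that the variable length of $A$, together with the alignment of the grid cells, does not break constant-depth decoding. I would handle this by the closed-form floor expression for the carries given above.
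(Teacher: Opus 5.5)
Your proposal is correct and follows the paper's proof. Membership comes from $\mathsf{TC}^0$ arithmetic (iterated addition, multiplication, base conversion, and carries as quotient/remainder expressions), hardness from a block-spaced product $A\cdot\sum_k 10^{ck}$ whose middle base-$10^c$ block equals the number of ones (the paper reduces from exact counting $\operatorname{Count}_N$ rather than MAJORITY, which makes no difference), and the image transfer from $\mathsf{AC}^0$ glyph lookups. One justification should be dropped: prefix computation over an arbitrary constant-size monoid is not in $\mathsf{AC}^0$ (parity is a counterexample, as is the $A_5$ word problem behind the line-tracing proposition), so rely on your closed-form floor expression for the multiplication carries, or note that this particular carry monoid consists of order-preserving maps of $\{0,\ldots,8\}$ and is therefore aperiodic.
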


\begin{proof}
We first analyze the symbolic worksheet function \(\operatorname{LMul}\).
To show that \(\operatorname{LMul}\in\mathsf{TC}^0\), recall that fixed-base
conversion, iterated addition, multiplication, division, and remainder are
computable by constant-depth threshold
circuits~\citep{chandra1984constant,hesse2002uniform}. These operations compute every \(P_i\), every \(S_i\), and every decimal output digit in
parallel, and each displayed carry is obtained from the corresponding
fixed-base quotient and remainder. The number of worksheet entries and the
length of each entry are polynomial in the input length, so the complete
worksheet is in \(\mathsf{TC}^0\).

We now prove \(\mathsf{TC}^0\)-hardness. For
\(x=(x_0,\ldots,x_{N-1})\in\{0,1\}^N\), the exact-counting function is
\[
    \operatorname{Count}_N(x):=\sum_{i=0}^{N-1}x_i,
\]
with its output written in binary. This function is
\(\mathsf{TC}^0\)-complete under \(\mathsf{AC}^0\) Turing
reductions~\citep{chandra1984constant}. To establish
\(\mathsf{TC}^0\)-hardness, it therefore suffices to reduce
\(\operatorname{Count}_N\) to \(\operatorname{LMul}\). 

Given \(x\), we set
\[
    d=\left\lceil\log_{10}(N+2)\right\rceil,
    \qquad Q=10^d,
\]
and define the positive decimal integers \(A\) and \(B\) by
\[
    A=1+\sum_{i=0}^{N-1}x_iQ^{i+1},
    \qquad
    B=\sum_{j=0}^{N-1}Q^j.
\]

Their decimal encodings consist of \(d\)-digit blocks and are produced by an
\(\mathsf{AC}^0\) circuit, since every output digit is either a constant or an
input bit \(x_i\). When \(AB\) is written in base \(Q\), every coefficient is
at most \(N+1<Q\), so no carry crosses a \(d\)-digit block, and the coefficient
of \(Q^N\) is exactly \(\operatorname{Count}_N(x)\).

The final row of the worksheet contains \(AB\), from which an \(\mathsf{AC}^0\)
readout extracts this block. Its \(d=O(\log N)\) decimal digits have an
\(O(\log N)\)-bit encoding. A non-uniform lookup on these bits has
\(2^{O(\log N)}=N^{O(1)}\) entries and converts the block to binary in
\(\mathsf{AC}^0\). Thus \(\operatorname{Count}_N\), and hence every language
in \(\mathsf{TC}^0\), reduces to \(\operatorname{LMul}\).

The fixed digit glyphs and worksheet grid similarly allow parallel local
lookups to decode \((A,B)\) from the operand image and
\(\operatorname{LMul}(A,B)\) from the worksheet image, and conversely to render both as
images. These conversions lie in \(\mathsf{AC}^0\), so the same classification
holds for the long-multiplication task.
\end{proof}

\subsection{Long Text Rendering}
\label{app:text_rendering_complexity}

As illustrated in the right column of Figure~\ref{fig:theory_aligned_tasks},
long text rendering uses a fixed monospaced raster font over a finite alphabet
\(\Sigma\) with \(\lvert\Sigma\rvert\ge2\). We call the font nontrivial if its
glyph-raster map is not constant, that is, if some two characters render
differently. We normalize horizontal distances to character cells, so the
width budget is an integer \(W \ge 3\).

An input consists of a sequence $((s_i, p_i))_{i=1}^m$ where
$s_i \in \Sigma^+$ is the $i$-th word and $p_i \in \{0,1\}$ is a
paragraph-boundary indicator. By convention, $p_1 = 1$. For $i > 1$,
$p_i = 1$ means that $s_i$ begins a new paragraph after one blank
line, whereas $p_i = 0$ means that it continues the current paragraph.

The words \(s_i\) determine the glyph content, while the auxiliary wrapping
function below determines their positions. Let
$a_i := |s_i| \in \{1, \dots, W\}$ be the length of word $i$. We
define three auxiliary variables for each word: $\ell_i$ is its line
index, starting from $1$; $\zeta_i$ is its horizontal starting
position, starting from $0$; and $r_i$ is the occupied width of the
line after placing $s_i$, equivalently the exclusive right boundary
of $s_i$.

With
\((\ell_1,\zeta_1,r_1):=(1,0,a_1)\), we define, for \(i>1\),
\[
    (\ell_i,\zeta_i,r_i)=
    \begin{cases}
        (\ell_{i-1}+2,\,0,\,a_i),
        & p_i=1,\\
        (\ell_{i-1},\,r_{i-1}+1,\,r_{i-1}+1+a_i),
        & p_i=0\ \text{and}\ r_{i-1}+1+a_i\le W,\\
        (\ell_{i-1}+1,\,0,\,a_i),
        & p_i=0\ \text{and}\ r_{i-1}+1+a_i>W,
    \end{cases}
\]
and denote the sequence \(((\ell_i,\zeta_i))_{i=1}^m\) by
\(\operatorname{Wrap}_W\bigl((a_i,p_i)_{i=1}^m\bigr)\). The three cases are the
greedy rule stated in the task descriptions above, where a word joins the
current line whenever it fits with a single interword space.

The long text rendering task is the function \(\operatorname{Render}_W\). On
input \(((s_i,p_i))_{i=1}^m\), it returns the full-height rasterization of these
words under the rule above. Its canvas has \(W\) character cells per line and \(2m\) line
slots, with unused slots left white. The largest line index is \(2m-1\), so the
canvas accommodates every input and has a fixed output length given \(m\).

\begin{proposition}[Long text rendering]
\label{prop:text_rendering_complexity}
For every fixed \(W\ge3\) and fixed nontrivial monospaced raster font over an
alphabet \(\Sigma\) with \(\lvert\Sigma\rvert\ge2\), the function
\(\operatorname{Render}_W\) is
\(\mathsf{TC}^0\)-complete under \(\mathsf{AC}^0\) Turing reductions.
\end{proposition}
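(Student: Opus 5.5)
The plan has two parts, mirroring the previous two propositions. First, place $\operatorname{Bit}(\operatorname{Render}_W)$ in $\mathsf{TC}^0$. Second, reduce exact counting $\operatorname{Count}_N$, which is $\mathsf{TC}^0$-complete under $\mathsf{AC}^0$ Turing reductions~\citep{chandra1984constant}, to $\operatorname{Render}_W$.

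\textbf{Membership.} Since $W$ is fixed, the wrapping recurrence is a finite automaton on the per-word triples. The state is $r_{i-1}\in\{1,\ldots,W\}$, and each input letter $(a_i,p_i)$ comes from a finite set. Each letter therefore induces a map $T_i:\{1,\ldots,W\}\to\{1,\ldots,W\}$ on the current right boundary, together with a line increment $\Delta_i(r)\in\{0,1,2\}$ that depends on the incoming state.

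The obstacle is that composing arbitrary maps on a $W$-element set is $\mathsf{NC}^1$-hard in general. So I must show that this particular monoid is aperiodic, or otherwise that $r_i$ is computable in $\mathsf{AC}^0$ or $\mathsf{TC}^0$. I expect this to be the main obstacle.

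What I would try first rests on a structural observation: the line through word $i$ started at the last word $j\le i$ that begins a line. Whenever $p_i=1$ or a wrap occurs, the state resets to $a_i$, independent of the past. The automaton is therefore ``reset-or-increase'': $T_i$ either resets to the constant $a_i$, or adds $1+a_i$ while this stays at most $W$. Between resets the state strictly increases, so each line contains at most $\lceil W/2\rceil$ words.

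Consequently, $r_i$ is determined by the last at most $W$ words. Concretely, within any window of $W$ consecutive words a forced reset (a wrap) must occur, and after that reset the state no longer depends on earlier history. Hence $(\zeta_i,r_i)$ is a function of a constant-size window, computable by the table bound in $\mathsf{AC}^0$.

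The line index $\ell_i$ is then $1$ plus the sum of the increments $\Delta_j$ over $j\le i$. This is an iterated addition of polynomially many small integers, which is in $\mathsf{TC}^0$~\citep{hesse2002uniform}. Each pixel of the canvas at line slot $\ell$ and column $c$ is obtained by selecting, in $\mathsf{AC}^0$ given the comparisons, the unique word with $\ell_i=\ell$ and $\zeta_i\le c<r_i$, and looking up its glyph. This places $\operatorname{Bit}(\operatorname{Render}_W)$ in $\mathsf{TC}^0$.

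\textbf{Hardness.} Given $x\in\{0,1\}^N$, build $N$ paragraphs, or more safely an interleaved word sequence, whose line count encodes $\operatorname{Count}_N(x)$. For example, use $p_1=1$, and for each $i$ emit a word of length $W$ (which always forces a new line) if $x_i=1$, and a word of length $1$ placed so that it never forces a line otherwise. The simplest concrete choice is to emit, for each $i$, a word of length $W$ with $p=0$ when $x_i=1$ and nothing useful otherwise.

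Because a fixed output length is needed, I would instead emit exactly $N$ words: for bit $x_i$, a length-$W$ word if $x_i=1$ and a length-$1$ word if $x_i=0$, each with $p_i=0$. Each length-$W$ word occupies its own line, while short words pack at most $\lceil W/2\rceil$ per line. This is not yet clean, so I would interleave a length-$W$ word after every bit. Then every bit-$1$ word sits on its own line, and every bit-$0$ word sits on a line with nothing else. Either way this gives one line per word, which fails to distinguish the bits.

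The fix is to use paragraph flags: set $p_i=x_i$ with fixed words of length $W$. A $1$ adds two lines and a $0$ adds one, so $\ell_{\text{last}}=N+\operatorname{Count}_N(x)$ up to constants.

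The readout determines the largest nonempty line slot, using the nontrivial font so that a nonempty cell is distinguishable from white. This is found by an $\mathsf{AC}^0$ search over $2N$ slots, each checked by an OR over constantly many pixels via oracle queries to $\operatorname{Bit}(\operatorname{Render}_W)$. Subtracting $N$ and converting to binary by the table bound yields $\operatorname{Count}_N(x)$.

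Finally, as in Propositions~\ref{prop:line_tracing_complexity} and~\ref{prop:long_multiplication_complexity}, the encodings of the inputs and the raster decoding are local lookups in $\mathsf{AC}^0$, so the classification transfers.
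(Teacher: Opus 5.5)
\textbf{There is a gap in your membership argument.} The claim that $(\zeta_i,r_i)$ is a function of the last $O(W)$ words is false. Take $W=3$ and two inputs that differ only in the first word: it has length $1$ in one input and length $2$ in the other, with $p_1=1$ in both. Follow it in both inputs by $k$ one-character words with $p=0$. The right boundaries are $1,3,1,3,\dots$ in the first run. In the second run they are $2,1,3,1,\dots$, because $2+1+1>3$ forces an immediate wrap. So $r_{k+1}$ differs for every $k$, although the last $k$ words coincide.

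The flaw is in ``after that reset the state no longer depends on earlier history.'' A wrap with $p_i=0$ does reset the state to $a_i$, but \emph{whether} it happens depends on $r_{i-1}$. Only words with $p_i=1$ or $a_i\ge W-1$ induce constant transitions. So a reset does occur in every window of about $W/2$ words of the actual run, but its position is history-dependent. Algebraically, $\tau_{a,0}$ permutes the states $j(a+1)-1$, $1\le j\le k:=\lfloor (W+1)/(a+1)\rfloor$, cyclically. Hence the transition monoid contains $\mathbb{Z}_k$ with $k\ge2$ (already for $a=1$), it is not aperiodic, and the automaton is not definite. Neither route you hoped for, aperiodicity or a constant window, is available.

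What is missing is a non-local argument that tolerates these cyclic groups. The paper's argument runs as follows.
\begin{enumerate}
\item Every transition is an orientation-preserving self-map of the cyclically ordered state set, or a constant map when $p=1$, and such maps are closed under composition.
\item A subgroup $G$ with identity $e$ acts faithfully on $e(\{1,\dots,W\})$ by cyclic-order-preserving permutations, so $G$ embeds in a rotation group. Hence $G$ is cyclic, in particular solvable.
\item The characterization of \citep{barrington1992regular} then puts every state predicate, and hence a one-hot encoding of every $r_i$, in $\mathsf{ACC}^0\subseteq\mathsf{TC}^0$.
\end{enumerate}
With $r_{i-1}$ available, your remaining steps go through: the lookup of $\Delta_i$, iterated addition for $\ell_i$, and pixel selection.

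Your hardness part is correct and takes a different, shorter route than the paper. You put the bits in the paragraph flags ($p_i=x_i$, all words of width $W$, after a sentinel), so the last line index is $1+N+\operatorname{Count}_N(x)$. You then read it directly off the image as the last non-white line slot; nontriviality supplies a character with a non-white glyph. The paper instead stays within a single paragraph and reduces MAJORITY to $\operatorname{Wrap}_W$ with the width gadgets $(1,W-2)$ and $(W,W)$. It then reduces $\operatorname{Wrap}_W$ to $\operatorname{Render}_W$ by comparing renderings that differ in one character. The paper's route shows hardness without paragraph breaks and hardness of $\operatorname{Wrap}_W$ itself; yours avoids the second reduction.
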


\begin{proof}
We first classify the auxiliary wrapping function.
To show that \(\operatorname{Wrap}_W\in\mathsf{TC}^0\), we track the
occupied width with a finite automaton. For each input pair \((a,p)\), its transition on
the state set \(\{1,\ldots,W\}\) is
\[
    \tau_{a,p}(r)=
    \begin{cases}
        a, & p=1,\\
        r+1+a, & p=0\ \text{and}\ r+1+a\le W,\\
        a, & p=0\ \text{and}\ r+1+a>W,
    \end{cases}
\]
and we let \(M_W\) be the transition monoid generated under composition by
\(\{\tau_{a,p}:1\le a\le W,\ p\in\{0,1\}\}\).

For any initial state \(r_0\), reading the first \(i\) input pairs yields
\[
    r_i=(\tau_{a_i,p_i}\circ\cdots\circ\tau_{a_1,p_1})(r_0).
\]

We arrange the states in cyclic order. For \(p=0\), the list
\(\tau_{a,0}(1),\ldots,\tau_{a,0}(W)\) advances in that order, wraps around at
most once, and may repeat \(a\); for \(p=1\), the transition is constant. Both
are orientation-preserving transformations of the cycle, and such
transformations are closed under composition.

Every subgroup \(G\) of \(M_W\) is cyclic. To see this, let \(e\) be the
identity of \(G\) and \(S=e(\{1,\ldots,W\})\). Although \(e\) need not fix
every state, it fixes every element of \(S\). Since \(g=g\circ e\), each
\(g\in G\) is determined by its restriction to \(S\), where it acts as a
cyclic-order-preserving permutation.
Restriction to \(S\) therefore embeds \(G\) into the rotation group of \(S\).
Since this rotation group is cyclic, so is \(G\). In particular, every subgroup
of \(M_W\) is solvable.

By the characterization of regular languages recognized by finite monoids with
solvable subgroups~\citep{barrington1992regular}, every state predicate of this
automaton lies in \(\mathsf{ACC}^0\subseteq\mathsf{TC}^0\). Applying these
state-predicate circuits to every prefix gives a one-hot encoding of every
\(r_i\), still with constant depth and polynomial size. The first word needs no
special treatment: since \(p_1=1\), the map \(\tau_{a_1,1}\) is constant with
value \(a_1\), so applying it to any initial state returns \(r_1=a_1\).

For each \(i\ge2\), the automatic-wrap indicator \(\chi_i\) and the total line
increment \(\Delta_i\) are
\[
    \chi_i:=(1-p_i)\mathbf{1}\{r_{i-1}+1+a_i>W\},
    \qquad
    \Delta_i:=\chi_i+2p_i,
\]
and computing them requires only a constant-size lookup once \(r_{i-1}\),
\(a_i\), and \(p_i\) are known. Threshold circuits compute all prefix sums
\[
    \ell_i=1+\sum_{j=2}^{i}\Delta_j
\]
in constant depth, and the starting position is \(\zeta_i=0\) when
\(\chi_i+p_i>0\) and \(\zeta_i=r_{i-1}+1\) otherwise. Thus the entire wrapping
function is in \(\mathsf{TC}^0\).

To establish \(\mathsf{TC}^0\)-hardness, we use the single-paragraph subfamily,
setting \(p_1=1\) and \(p_i=0\) for every \(i>1\), and reduce
\(\mathrm{MAJORITY}\) to a single query of \(\operatorname{Wrap}_W\). Given \(x\in\{0,1\}^N\), we begin with a
sentinel word of width \(W\), so that the first line is full, and encode each
input bit \(x_i\) by the two-word gadget \(\gamma(x_i)\) given by
\[
    \gamma(0):=(1,W-2),
    \qquad
    \gamma(1):=(W,W).
\]
The gadgets \(\gamma(x_0),\ldots,\gamma(x_{N-1})\) are appended after the
sentinel.

Whenever a gadget begins, the current line is full. The \(0\)-gadget starts one
new line and fills it exactly, because \(1+1+(W-2)=W\), while the \(1\)-gadget
starts two new lines, each filled by one width-\(W\) word. Both gadgets
therefore end with a full line, contributing one and two line breaks
respectively, so the final word has line index
\[
    1+N+\sum_{i=0}^{N-1}x_i.
\]

The width sequence is an \(\mathsf{AC}^0\) encoding of the input bits, and an
\(\mathsf{AC}^0\) comparison of the returned line index with a hardwired
threshold decides \(\mathrm{MAJORITY}\). Since \(\mathrm{MAJORITY}\) generates
\(\mathsf{TC}^0\) under \(\mathsf{AC}^0\) Turing
reductions~\citep{chandra1984constant}, the wrapping function is
\(\mathsf{TC}^0\)-hard.

We now transfer this classification to the rendered image. To compute
\(\operatorname{Render}_W\) with an oracle for \(\operatorname{Wrap}_W\), first
obtain each \(a_i=|s_i|\) by a constant-size lookup and query the word positions
\((\ell_i,\zeta_i)\). Each output pixel is then a fixed glyph lookup on the
characters of \(s_i\), followed by an OR over the words, all in
\(\mathsf{AC}^0\). Since
\(\operatorname{Wrap}_W\in\mathsf{TC}^0\), this reduction places
\(\operatorname{Render}_W\) in \(\mathsf{TC}^0\).

For the reverse reduction, nontriviality provides two characters
\(u,v\in\Sigma\) whose glyph rasters differ at a fixed pixel. For each word index
\(i\in\{1,\ldots,m\}\), form two rendering inputs with the same paragraph
indicators. Both inputs represent word \(j\) by \(u^{a_j}\) for every
\(j\in\{1,\ldots,m\}\). In the second input, replace the first \(u\) of the
\(i\)-th word by \(v\).

The two inputs have identical layouts because the font is monospaced.
Comparing their bitmaps at that pixel offset in every character cell
gives a one-hot encoding of the starting cell \((\ell_i,\zeta_i)\). An
\(\mathsf{AC}^0\) circuit converts this encoding to binary. All \(O(m)\) pairs
are queried in parallel with polynomially many oracle gates. This gives an
\(\mathsf{AC}^0\) Turing reduction from \(\operatorname{Wrap}_W\) to
\(\operatorname{Render}_W\). Since \(\operatorname{Wrap}_W\) is
\(\mathsf{TC}^0\)-hard, \(\operatorname{Render}_W\) is
\(\mathsf{TC}^0\)-hard as well.
\end{proof}

\clearpage

\section{Experimental Details}
\label{app:exp_details}

\subsection{Overall Setup}
\label{app:architecture}

\paragraph{Backbones.}
For line tracing, we train a multimodal diffusion transformer (MMDiT) with
9M parameters from random initialization, following the \texttt{FLUX.2}
architecture~\citep{blackforestlabs2026flux2klein}. It has three joint-attention blocks and
three single-stream blocks, each with four attention heads and head dimension
\(64\).
Other runs fine-tune pretrained backbones with LoRA adapters (rank and
alpha \(32\), no dropout). Long multiplication, long text rendering, and the
controlled comparison of \Sref{sec:diffthinker_tasks} use
\texttt{FLUX.2-klein-base-4B}~\citep{blackforestlabs2026flux2klein}. The large-scale
comparison in \Sref{sec:diffthinker_tasks} instead uses \texttt{Qwen-Image-Edit-2511}~\citep{qwenteam2025qwenimageedit2511}.

\paragraph{Implementation of self-conditioning.}
All backbones encode images into VAE latents and pack each \(2\times2\) latent
patch into one image token. In the second training pass and at each sampling
step after the first, \(\SelfCondFM\) concatenates the tokens of the endpoint
prediction with the tokens of the noisy target and of any conditioning image
before the transformer blocks. The \texttt{FLUX.2} backbones append the tokens of
the endpoint prediction with a distinct positional encoding offset, and
\texttt{Qwen-Image-Edit} places them after the tokens of the target image. No
attention mask separates these blocks of tokens. Both
placements follow the backbones' own design for image-conditioned generation.
Thus self-conditioning changes the input sequence but not the transformer
architecture.

\paragraph{Training FLOPs.} We match training compute by cumulative FLOPs, the total number of
floating-point operations a run performs. We estimate it as the
training FLOPs per micro-batch, averaged over 100 micro-batches, multiplied by the number of
micro-batches processed across all devices. Each measurement covers the
complete training computation, including VAE encoding, the second
\(\SelfCondFM\) training pass, and the longer self-conditioned token sequence,
so the extra cost of self-conditioning is fully counted. 
The plotted budgets in Figure~\ref{fig:theory_aligned_tasks}
are \(4.50\times10^{18}\) FLOPs for line tracing, \(2.32\times10^{19}\) for
long multiplication, and \(4.70\times10^{19}\) for long text rendering.

\subsection{Setup and Evaluation for the Tasks in \Sref{sec:theory_aligned_tasks}}
\label{app:tasks_setup}

\paragraph{Shared configuration.}
All training runs are conducted on four NVIDIA A100 80GB GPUs, using AdamW, bfloat16 precision, and a constant learning rate schedule with \(1{,}000\) warmup steps.
Within each task, the two methods share the data, optimizer, noise schedule,
and evaluation procedure. The \(\SelfCondFM\) runs use the two-pass training
scheme of \Sref{sec:scmechanism}.

\paragraph{Line tracing.}
The training set contains $10^6$ random instances with \(m=50\). We train with a
global batch size of \(32\) and a learning rate of \(2\times10^{-3}\). After
FLOP alignment, the plotted curves end at step 60k for \(\VanillaFM\)
and step 53k for \(\SelfCondFM\).

We evaluate checkpoints on a held-out set of 20 instances using 100 NFE. The evaluator
samples interior locations on the five expected segments of each stage and
requires their local average RGB values to match the ground-truth rendering
within a fixed tolerance. Our evaluation metric is the \emph{longest correct prefix} (LCP), defined for each instance as the length of the longest prefix of stages for which all five segments match.

\paragraph{Long multiplication.}
The training set contains $2\times 10^5$ problems with operands of five to
seven digits, where the number of digits of each operand is drawn independently and
uniformly from $\{5,6,7\}$. Each operand is then sampled uniformly at random
from the integers with the chosen number of digits, so it contains no leading
zeros.
We use a global batch size of \(8\) and a learning rate of \(10^{-4}\),
training \(\SelfCondFM\) for five epochs and \(\VanillaFM\) for 15 epochs to
cover a comparable FLOP range.

We evaluate checkpoints on a test split of 100 instances drawn from the same
distribution as the training set, using 50 NFE. We use \texttt{Qwen3.8-Flash}~\citep{qwen38flashnext}
as the judge, prompting it to compare the generated and reference worksheets. A sample counts as correct only when the two worksheets agree throughout, from the final product to every intermediate row.
The full judge prompt is given in Figure~\ref{fig:lm_judge_prompt}.

\begin{figure}[ht]
\centering
\setlength{\fboxsep}{9pt}
\hbox{{\color{black!60}\vrule width 2.5pt}\colorbox{gray!10}{\begin{minipage}{\dimexpr\linewidth-2\fboxsep-2.5pt\relax}
\small
\setlength{\parindent}{0pt}
\setlength{\parskip}{6pt}
You are evaluating a visual reasoning model on a column-multiplication task. You will see a problem setup image (the two numbers to multiply, written in column form), the correct solution image (the fully worked-out column multiplication with the correct final product), and the model's predicted solution image. 

Decide whether the predicted solution is correct: it must correspond to the same problem setup as the correct solution, the final product it shows must equal the correct product, and the intermediate steps, including the carries, must match the correct solution as well. If the predicted solution is blank, truncated, or unreadable, answer `[NO]`. Provide a brief explanation, then answer `[YES]` if the predicted solution is correct and `[NO]` if it is not.
\end{minipage}}}
\caption{Judge prompt for evaluating long multiplication.}
\label{fig:lm_judge_prompt}
\end{figure}

\paragraph{Long text rendering.}
The training set contains $3\times 10^5$ text samples as conditions,
with their \(384\times384\) monospaced renderings as targets. The corpus is
drawn from the TinyStories dataset~\citep{eldan2023tinystories}. During
preprocessing, we truncate each text to the longest prefix that fits on the canvas.
We use a global
batch size of \(16\) and a learning rate of \(10^{-4}\), training
\(\VanillaFM\) for 12 epochs and \(\SelfCondFM\) for four epochs.

We evaluate 100 held-out instances using 50 NFE, parsing the text from the
reference and generated images with \texttt{Qwen3.8-Flash}~\citep{qwen38flashnext}, prompted to
preserve all line breaks and whitespace. The rate of perfect rendering is the
fraction of instances whose two parsed texts match exactly.

\begin{figure}[htb]
    \centering
    \includegraphics[width=0.68\linewidth]{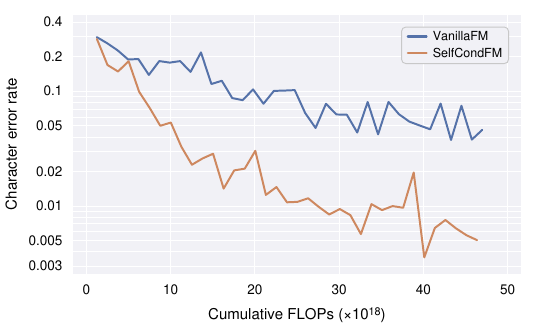}
    \caption{Character error rate (CER) of parsed text from the
    long text rendering evaluation, against cumulative training FLOPs. \(\SelfCondFM\) keeps
    improving throughout training, whereas \(\VanillaFM\) plateaus nearly an
    order of magnitude above the final \(\SelfCondFM\) value.}
    \label{fig:long_text_ocr}
\end{figure}

Beyond exact matching, we also compute the character error rate (CER) between
the two parsed texts (Figure~\ref{fig:long_text_ocr}), defined as
$\mathrm{CER} = d_{\mathrm{Lev}}(r,\hat{r}) / |r|$,
where $r$ and $\hat{r}$ are the parsed text of the
reference and generated images, respectively.
The Levenshtein distance $d_{\mathrm{Lev}}(r,\hat{r})$ is the
minimum number of character edits required to transform $r$
into $\hat{r}$, where each edit inserts, deletes, or substitutes
one character and has unit cost. \(\SelfCondFM\) ends at
CER \(0.005\), while \(\VanillaFM\) plateaus
between \(0.03\) and \(0.1\) and stands at \(0.046\) at the matched budget of
\(4.70\times10^{19}\) FLOPs.

\subsubsection{Multi-Seed Evaluation}
\label{app:multi-seed-diagnostic}

We further train \(\VanillaFM\) and \(\SelfCondFM\) with $3$ different seeds (i.e., $2$ seeds in addition to the one used in Figure~\ref{fig:theory_aligned_tasks}) on line tracing and long multiplication. We evaluate line tracing on the same held-out set of $20$ instances as the single-seed evaluation at $100$ NFE, and long multiplication on the same held-out set of $100$ instances as its single-seed evaluation at $50$ NFE. Due to limited computational resources, we train long multiplication with a smaller compute budget and do not run multi-seed experiments for long text rendering.

Figures~\ref{fig:multi_seed_line_tracing} and~\ref{fig:multi_seed_long_multiplication} show the mean evaluation curve of each method against training FLOPs, with a band of one standard deviation around the mean.  For line tracing, at the checkpoints with matched training FLOPs (\(\SelfCondFM\)@53k vs.\ \(\VanillaFM\)@60k), \(\SelfCondFM\) saturates at a mean LCP of $50.0 \pm 0.0$, while \(\VanillaFM\) reaches only $5.6 \pm 2.7$. For long multiplication, at its smaller matched budget (\(\SelfCondFM\)@50k vs.\ \(\VanillaFM\)@125k), \(\SelfCondFM\) attains $(90.0 \pm 5.3)\%$ accuracy while \(\VanillaFM\) reaches $(35.0 \pm 8.9)\%$. These multi-seed results thus confirm that \(\SelfCondFM\) outperforms \(\VanillaFM\) on line tracing and long multiplication, consistent with the single-seed evaluations in Figure~\ref{fig:theory_aligned_tasks}.

\begin{figure}[htb]
    \centering
    \includegraphics[width=0.7\linewidth]{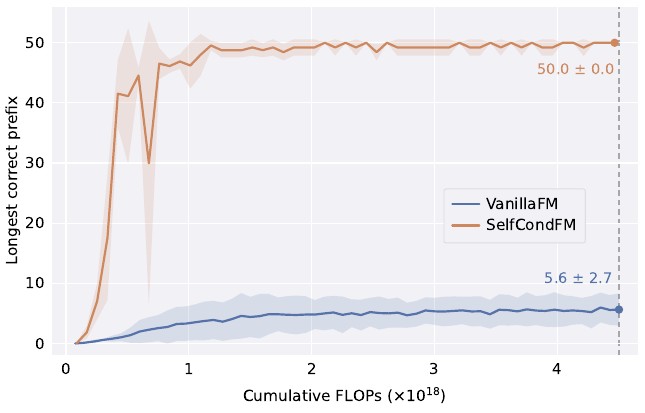}
    \caption{Training curves of line tracing with $3$ different seeds, plotting mean LCP on the held-out set of 20 instances at $100$ NFE against training FLOPs. For each method, the curve shows the mean over the $3$ runs, and the shaded band shows the mean $\pm$ one standard deviation. \(\SelfCondFM\) saturates at $50.0\pm 0.0$, while \(\VanillaFM\) reaches $5.6\pm 2.7$ at the matched budget.}
    \label{fig:multi_seed_line_tracing}
\end{figure}

\begin{figure}[htb]
    \centering
    \includegraphics[width=0.7\linewidth]{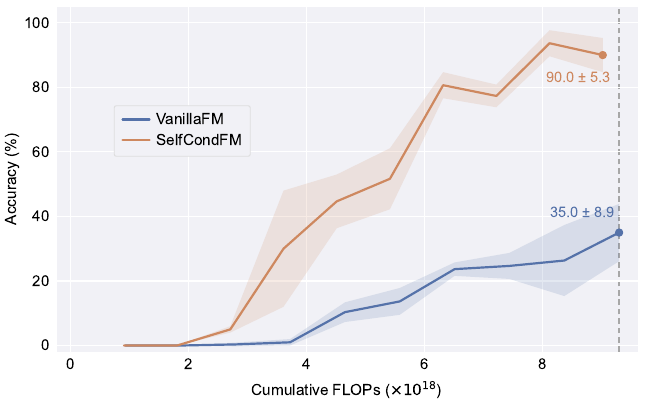}
    \caption{Training curves of long multiplication with $3$ different seeds, plotting the mean accuracy on the held-out set under $50$ NFE against training FLOPs.  As in Figure~\ref{fig:multi_seed_line_tracing}, each curve and its shaded band show the mean and the mean $\pm$ one standard deviation over the $3$ runs. \(\VanillaFM\) reaches $(35.0\pm 8.9)\%$ while \(\SelfCondFM\) is higher at $(90.0\pm 5.3)\%$ under the fixed budget.}
    \label{fig:multi_seed_long_multiplication}
\end{figure}

\subsubsection{Generated Examples}
\label{app:diagnostic_examples}
Figures~\ref{fig:qual_line_tracing} to~\ref{fig:qual_text_rendering} present selected held-out examples on which \(\VanillaFM\) fails while \(\SelfCondFM\) succeeds, with annotations highlighting the errors made by \(\VanillaFM\).
They are generated by the checkpoints at the final evaluation point of Figure~\ref{fig:theory_aligned_tasks}, namely \(\VanillaFM\)@60k and \(\SelfCondFM\)@53k for line tracing, \(\VanillaFM\)@312.5k and \(\SelfCondFM\)@125k for long multiplication, and \(\VanillaFM\)@190k and \(\SelfCondFM\)@74k for long text rendering.

\begin{figure}[p]
    \centering
    \includegraphics[width=\linewidth]{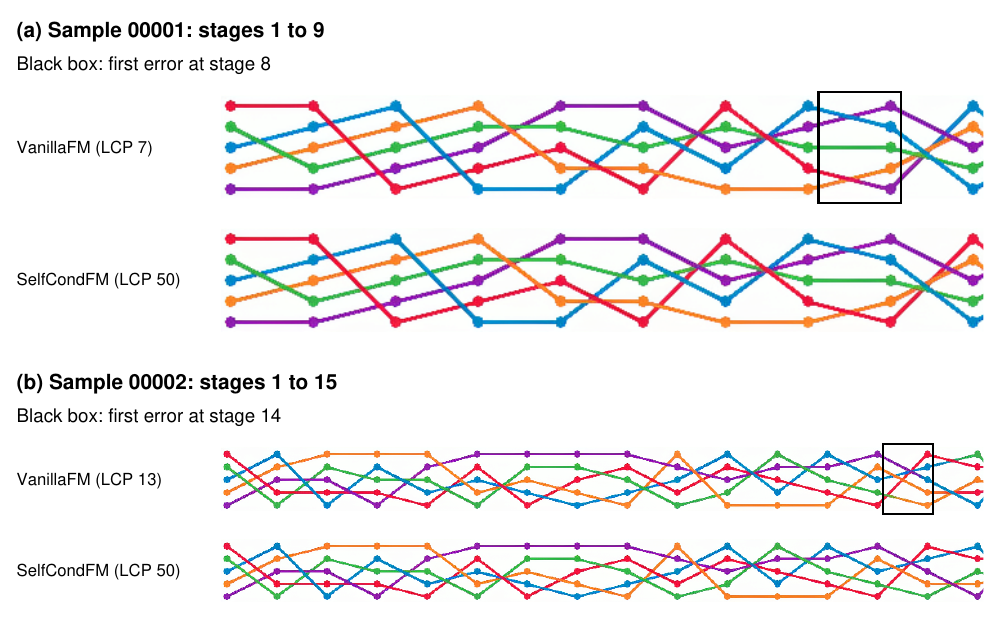}
    \caption{Examples on line tracing. For visualization, each trajectory is truncated shortly after the first error of \(\VanillaFM\); the full trajectory contains 50 stages. \(\VanillaFM\) first fails at stages 8 and 14 in (a) and (b), respectively, by incorrectly switching colors at line crossings. Black boxes mark the errors. In contrast, \(\SelfCondFM\) preserves each line's color throughout all 50 stages in both examples (LCP \(=50\)). }
    \label{fig:qual_line_tracing}
\end{figure}

\begin{figure}[p]
    \centering
    \includegraphics[width=0.90\linewidth]{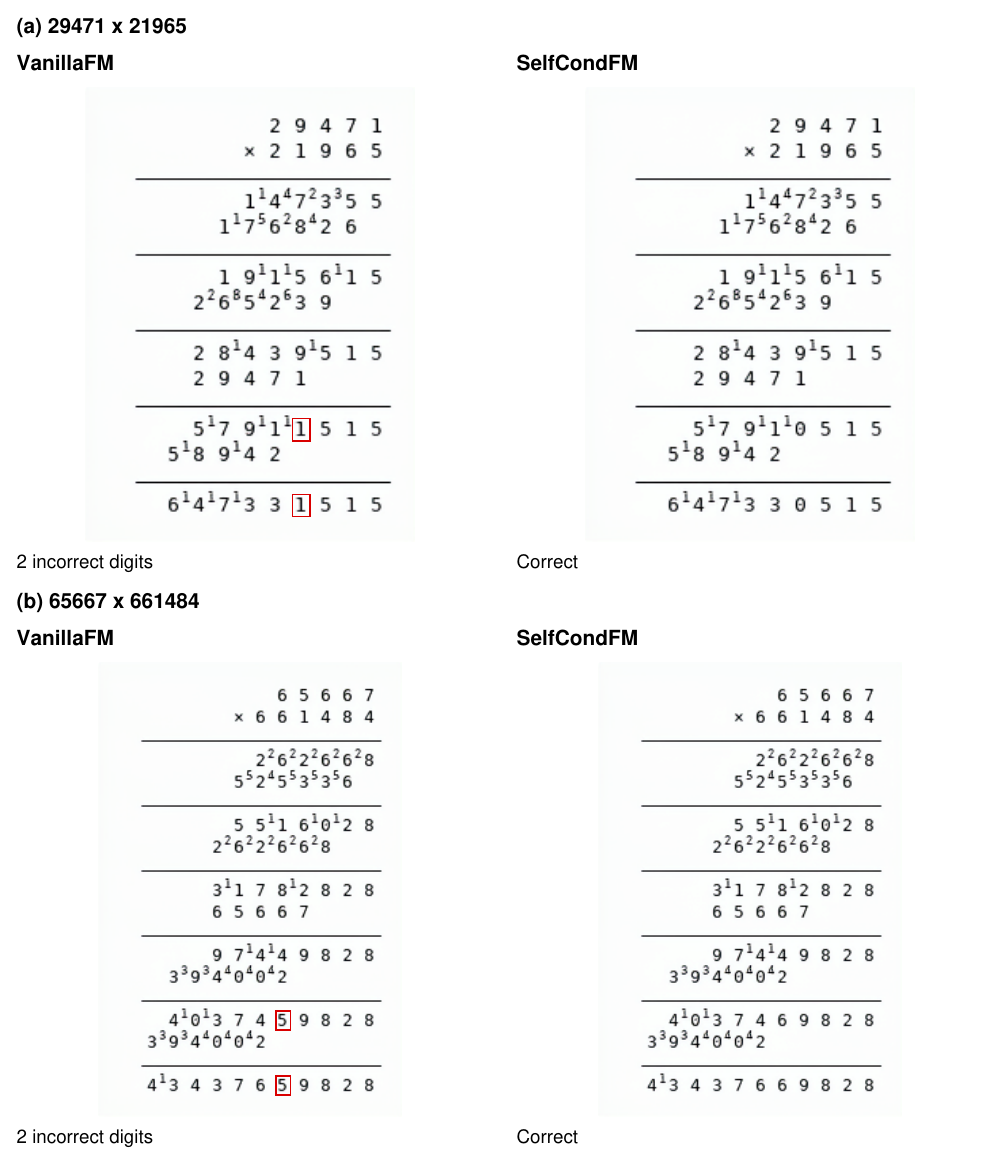}
    \caption{Examples on long multiplication. \(\VanillaFM\) makes accumulation errors in both cases, resulting in incorrect final answers, with the erroneous digits highlighted in red boxes. In contrast, \(\SelfCondFM\) produces the correct product for both examples.}
    \label{fig:qual_multiplication}
\end{figure}

\begin{figure}[p]
    \centering
    \includegraphics[width=\linewidth]{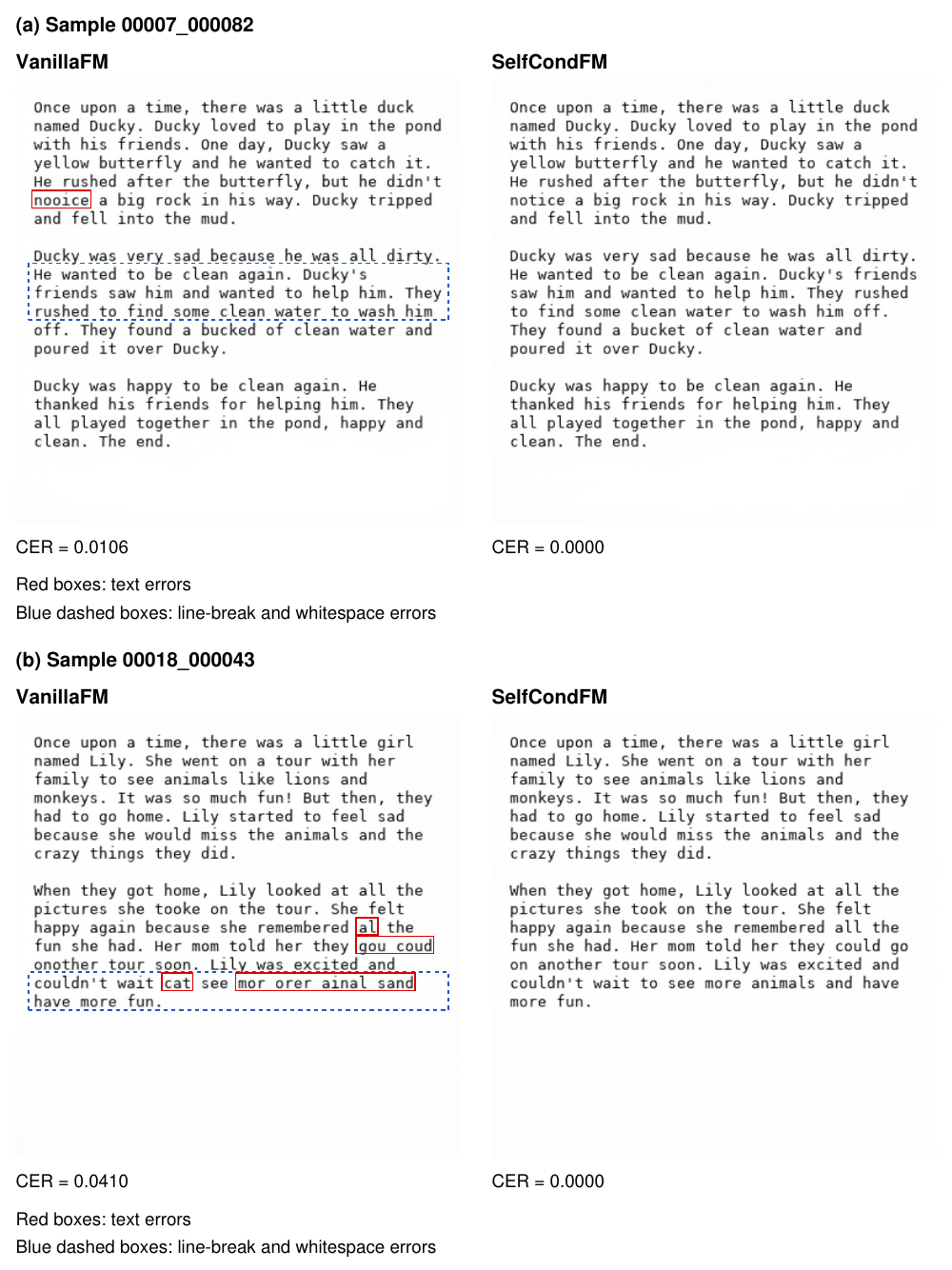}
    \caption{Examples on long text rendering. \(\VanillaFM\) produces text errors as well as line-break and whitespace errors, yielding CERs of 0.0106 and 0.0410 in (a) and (b), respectively. Red boxes highlight text errors, and blue dashed boxes indicate line-break and whitespace errors. In contrast, \(\SelfCondFM\) renders both passages correctly with zero CER. }
    \label{fig:qual_text_rendering}
\end{figure}

\clearpage

\subsection{NFE Scaling}
\label{app:nfe_scaling}

For Figure~\ref{fig:nfe_scaling}, we evaluate the checkpoints of Appendix~\ref{app:tasks_setup} for line tracing at matched training FLOPs, \(\VanillaFM\)@60k and \(\SelfCondFM\)@53k, on \(200\) held-out instances with \(m=50\). The NFE grid covers every integer from \(1\) to \(50\), and the two methods share the same initial noise for each instance at every NFE. The LCP is computed with the evaluator described in Appendix~\ref{app:tasks_setup}.

Beyond the comparison at matched NFE in Figure~\ref{fig:nfe_scaling}, we also plot the mean LCP against cumulative inference FLOPs in Figure~\ref{fig:inference_flops}.

\begin{figure}[htb]
    \centering
    \includegraphics[width=0.68\linewidth]{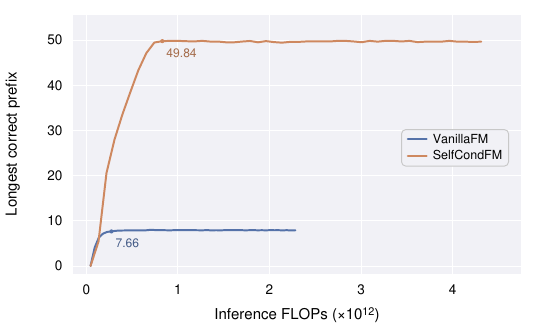}
    \caption{Mean LCP in the NFE scaling comparison of Figure~\ref{fig:nfe_scaling}, plotted against cumulative inference FLOPs. Both methods have zero mean LCP at \(\text{NFE}=1\). \(\SelfCondFM\) reaches 49.84 at \(8.29\times10^{11}\) FLOPs. \(\VanillaFM\) reaches 7.66 at \(2.74\times10^{11}\) FLOPs, ending below 8.0 up to \(2.28 \times 10^{12}\) FLOPs (which corresponds to \(\text{NFE}=50\)).}
    \label{fig:inference_flops}
\end{figure}

\subsection{Setup and Results for Structured Visual-Reasoning Tasks}
\label{app:diffthinker_setup}

\paragraph{Datasets and evaluators.} We prepare the training data with the
pipeline of \citet{dai2026endocot} for VSP-Super, Maze, TSP, and Sudoku, and
with that of \citet{he2025diffthinker} for Jigsaw. The evaluation data and
protocol for all five tasks follow \citet{he2025diffthinker}.

\paragraph{Controlled experiments.}
In the controlled experiments of \Sref{sec:diffthinker_tasks}, for each of VSP-Super, Maze, TSP, Sudoku, and Jigsaw, we train separate
\texttt{FLUX.2-klein-base-4B} LoRA adapters under the shared configuration of
the tasks in \Sref{sec:theory_aligned_tasks}, with a global batch size of \(16\) and a learning
rate of \(10^{-4}\). Checkpoints are compared at matched cumulative training
FLOPs, at which \(\SelfCondFM\) has been trained for 16k steps, and all results
use 50 NFE.

\paragraph{Large-scale comparisons.}
Following the setup of \citet{dai2026endocot}, we train two kinds of
$\SelfCondFM$ models. For task-specific training, we train one
\texttt{Qwen-Image-Edit-2511} LoRA adapter per task. For unified training, a
single adapter is trained on the union of the VSP-Super, Maze, TSP, and Sudoku
data. All adapters are trained for five epochs to ensure convergence.

\subsubsection{Jigsaw Results}
\label{app:jigsaw_benchmark}

Since EndoCoT~\citep{dai2026endocot} does not evaluate Jigsaw, we compare
our large-scale result on Jigsaw with the baselines reported by
\citet{he2025diffthinker}. Our \(\SelfCondFM\) adapter attains the best
average accuracy, \(96.0\%\) versus \(92.3\%\) for DiffThinker, with the
largest gain at the hardest level; see Table~\ref{tab:jigsaw_benchmark}.

\begin{table}[H]
    \centering
    \renewcommand{\arraystretch}{1.06}
    \caption{Jigsaw accuracy (\%) at difficulty levels 2 to 4. ``Avg.'' is the
    unweighted mean over the three levels. The best results are in bold and the
    second best are underlined.}
    \label{tab:jigsaw_benchmark}
    \begin{tabular}{lcccc}
        \toprule
        Method & 2 & 3 & 4 & Avg. \\
        \midrule
        Gemini-3-Flash & 71 & 16 & 0 & 29.0 \\
        GPT-5 & 30 & 0 & 0 & 10.0 \\
        Qwen3-VL-8B & 7 & 0 & 0 & 2.3 \\
        Qwen3-VL-8B (SFT) & 95 & 56 & 9 & 53.3 \\
        Qwen3-VL-8B (GRPO) & 6 & 0 & 0 & 2.0 \\
        DiffThinker & \textbf{99} & \tightuline{98} & \tightuline{80} & \tightuline{92.3} \\
        \textbf{Ours} & \tightuline{98} & \textbf{100} & \textbf{90} & \textbf{96.0} \\
        \bottomrule
    \end{tabular}
\end{table}

\subsubsection{Generated Examples}
\label{app:structured_examples}

To illustrate the qualitative differences between \(\VanillaFM\) and \(\SelfCondFM\) on the five structured visual-reasoning tasks, we select two held-out instances per task from the controlled comparison in Table~\ref{tab:diffthinker} on which \(\VanillaFM\) fails while \(\SelfCondFM\) succeeds.
The examples are shown in Figures~\ref{fig:qual_vsp} to~\ref{fig:qual_jigsaw}, with annotations highlighting the errors made by \(\VanillaFM\).

\begin{figure}[p]
    \centering
    \includegraphics[width=0.90\linewidth]{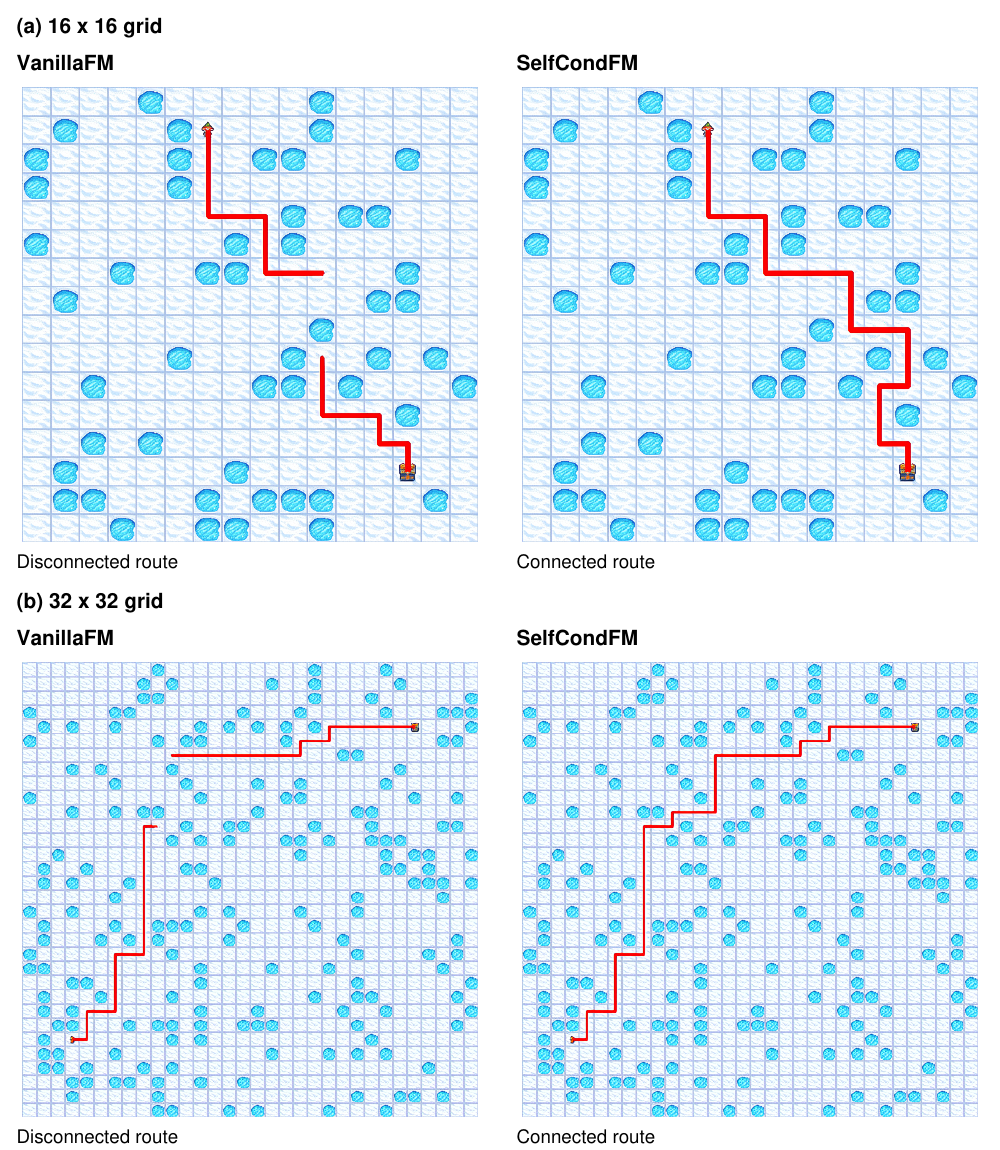}
    \caption{VSP-Super examples on \(16\times16\) and \(32\times32\) grids. The \(\VanillaFM\) 
    routes contain disconnected segments. In contrast, the \(\SelfCondFM\) routes correctly connect the
    endpoints.}
    \label{fig:qual_vsp}
\end{figure}

\begin{figure}[p]
    \centering
    \includegraphics[width=0.90\linewidth]{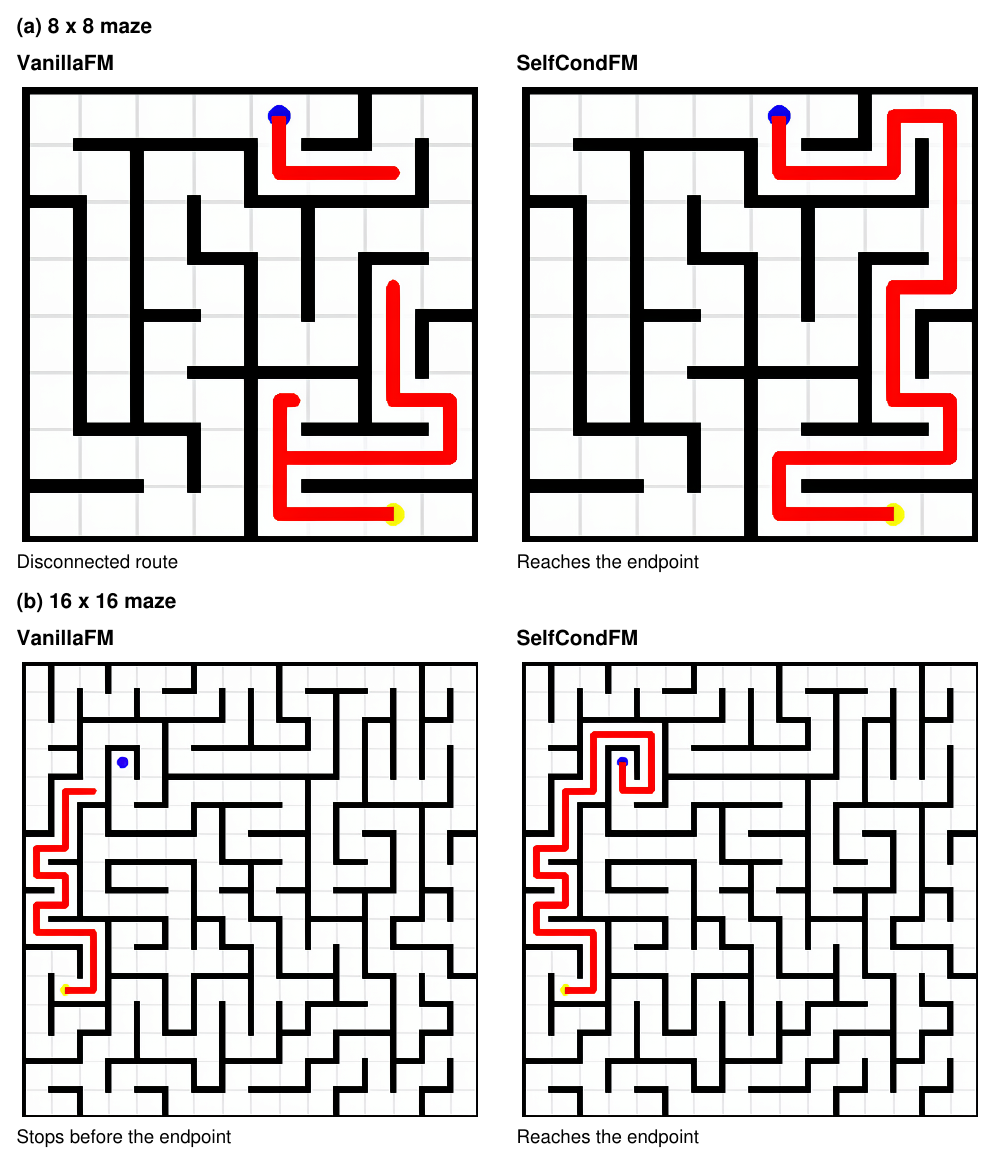}
    \caption{Maze examples on \(8\times8\) and \(16\times16\) grids. In (a), \(\VanillaFM\)
    leaves separate path fragments; in (b), its route stops before reaching
    the blue endpoint. In contrast, \(\SelfCondFM\) completes both routes.}
    \label{fig:qual_maze}
\end{figure}

\begin{figure}[p]
    \centering
    \includegraphics[width=0.90\linewidth]{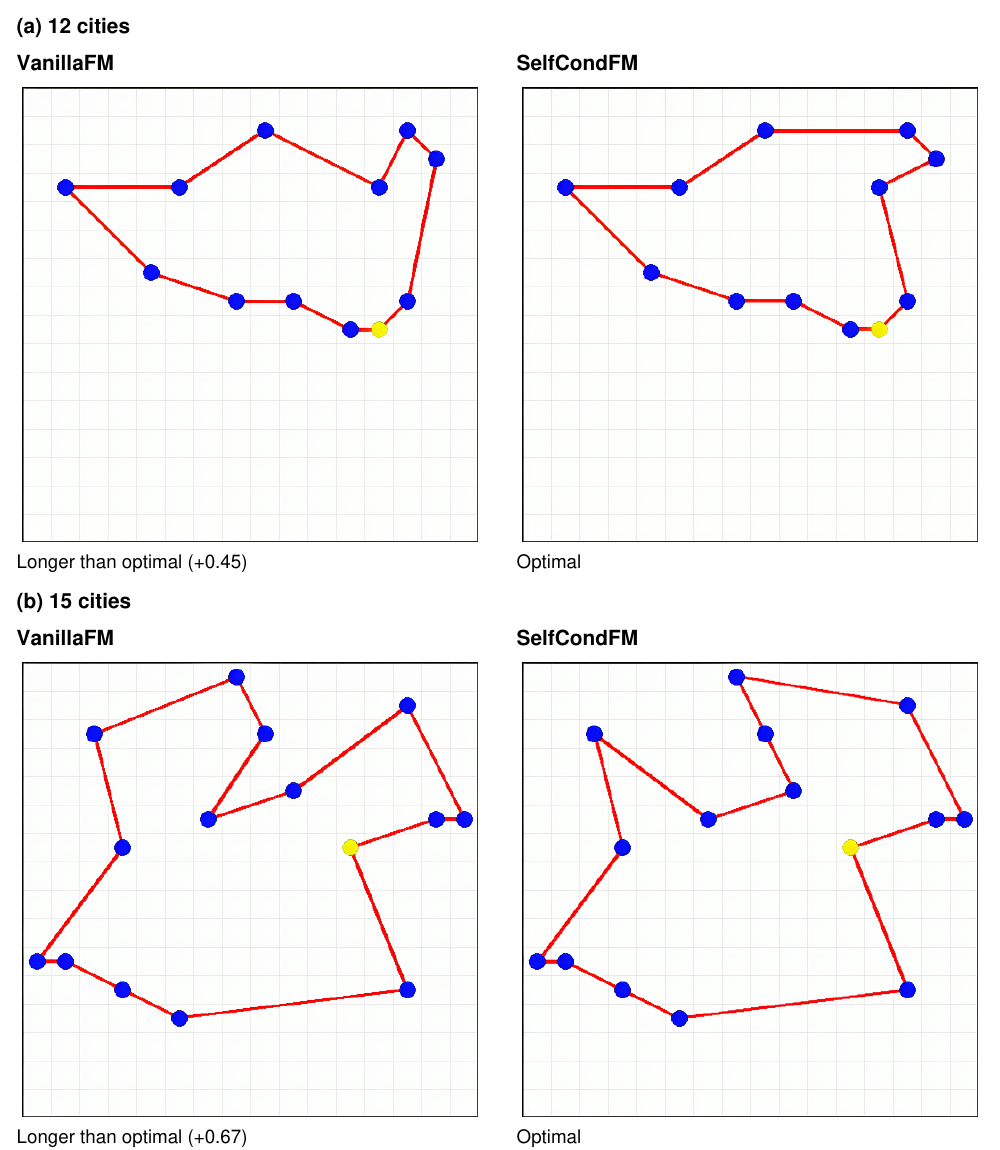}
    \caption{TSP examples with 12 and 15 cities. \(\VanillaFM\) fails to find the optimal tour in both cases, yielding excess tour lengths of +0.45 and +0.67, respectively. In contrast, \(\SelfCondFM\) produces the optimal solution for both instances.}
    \label{fig:qual_tsp}
\end{figure}

\begin{figure}[p]
    \centering
    \includegraphics[width=0.90\linewidth]{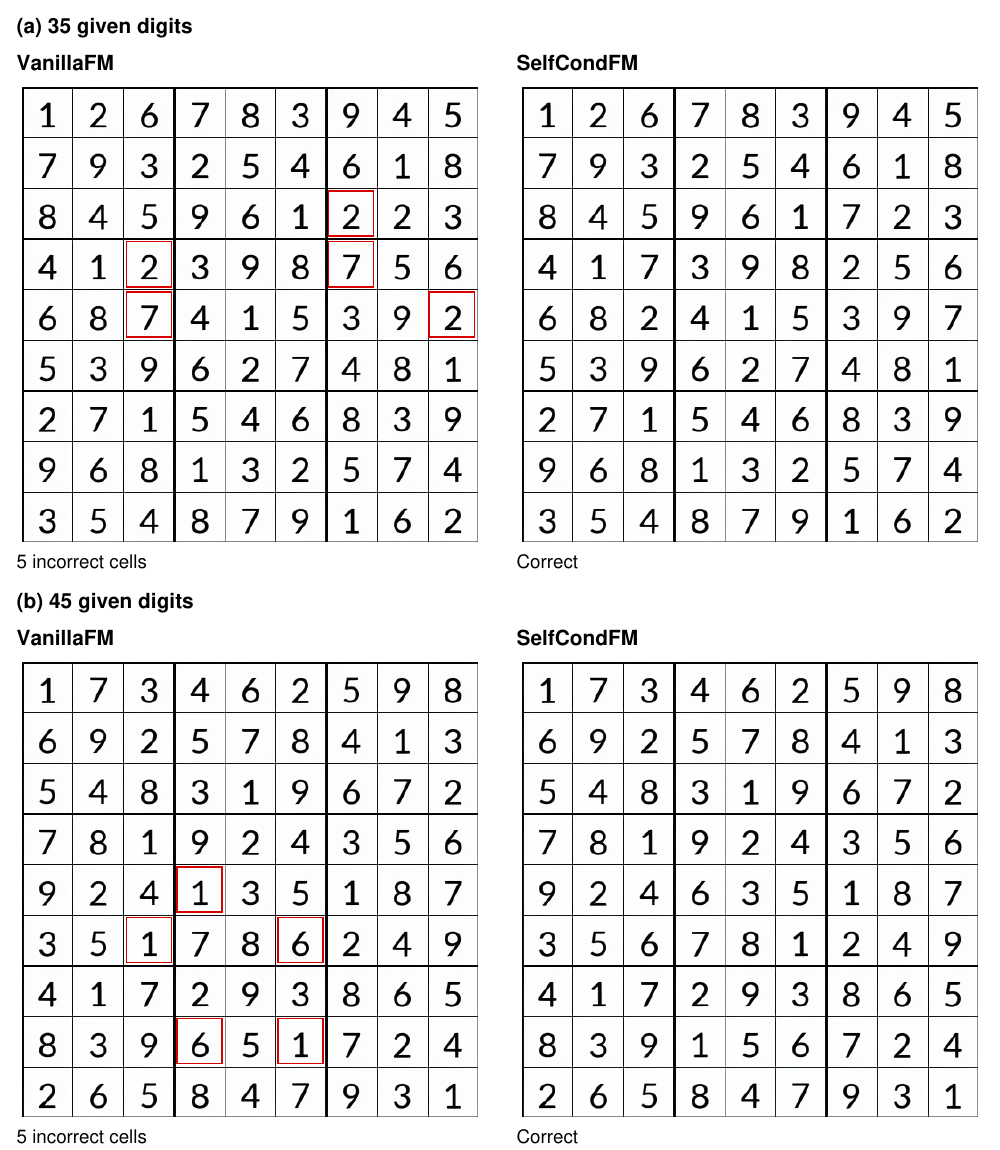}
    \caption{Sudoku examples with 35 and 45 given digits. \(\VanillaFM\) produces incorrect solutions with explicit Sudoku constraint violations: row 3 in (a) contains two 2s, while row 5 in (b) contains two 1s. The red boxes highlight these errors. In contrast, \(\SelfCondFM\) correctly solves both puzzles.}
    \label{fig:qual_sudoku}
\end{figure}

\begin{figure}[p]
    \centering
    \includegraphics[width=0.90\linewidth]{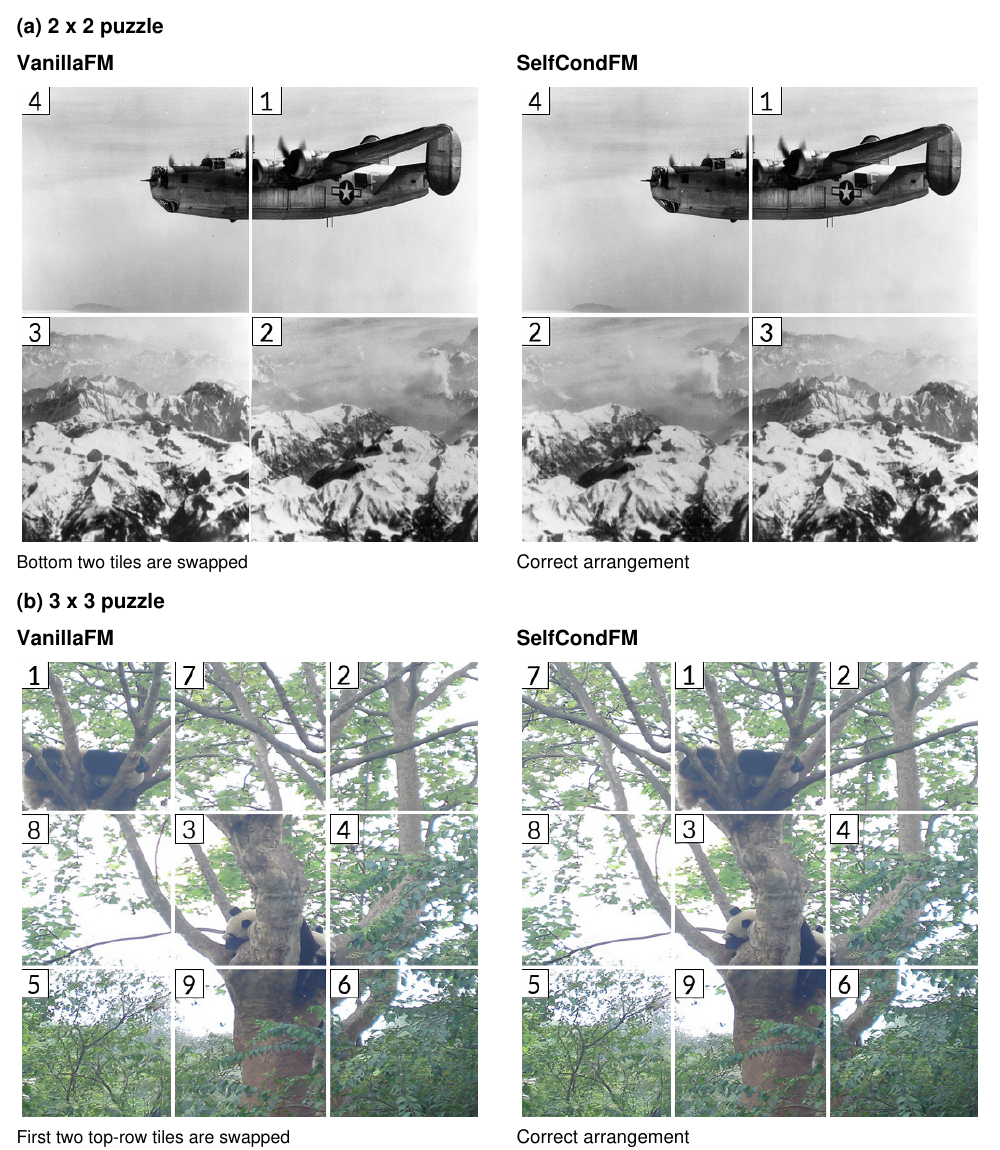}
    \caption{Jigsaw examples on \(2\times2\) and \(3\times3\) tile grids. \(\VanillaFM\) fails in both examples: it swaps the two bottom tiles of (a), giving the order 3, 2 instead of 2, 3, and swaps the first two tiles in the top row of (b), giving 1, 7 instead of 7, 1. \(\SelfCondFM\) matches the reference permutation in both examples. Tile identifiers are part of the original images.}
    \label{fig:qual_jigsaw}
\end{figure}

\clearpage

\subsection{Additional Empirical Analyses}
\label{app:secondary_analysis}

\subsubsection{Empirical Lipschitz Estimation}
\label{app:lipschitzness}

Theorem~\ref{thm:vanillaupperbound} assumes that the model $u$ has a bounded Lipschitz constant. 
To empirically support this assumption, we estimate the Lipschitz constant, in
the joint \(\ell_\infty\) norm on \((x,\ctx)\) used in \Sref{sec:fmsetup}, of a \(\VanillaFM\) model for line tracing. The
checkpoint is separately trained on approximately $3\times 10^5$ examples of mixed lengths, using
about \(2\times10^{17}\) training FLOPs. 

For a state \(x_i\) visited at time
\(t_i\), we use projected gradient ascent to maximize
\[
    \frac{\|u(x_i+\Delta x,t_i;\ctx+\Delta\ctx)-u(x_i,t_i;\ctx)\|_\infty}
    {\max\{\|\Delta x\|_\infty,\|\Delta\ctx\|_\infty\}}.
\]
We jointly perturb the noisy state and the condition latent with
radii \(0.003\), \(0.01\), \(0.03\), \(0.1\), and \(0.3\), and cover every
state of an inference trajectory with 50 steps. For each problem length
\(m=1,\ldots,50\), we estimate $L$ on 30 held-out trajectories and report the maximum
estimate over all tested states, radii, and trajectories.

\begin{figure}[htb]
    \centering
    \includegraphics[width=0.68\linewidth]{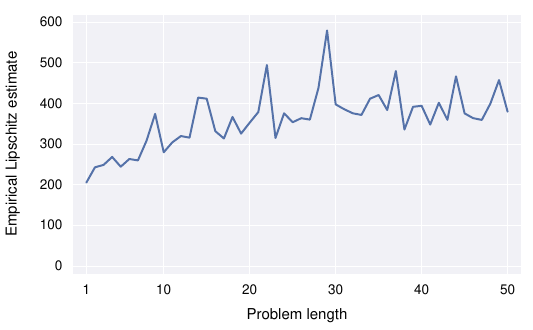}
    \caption{Maximum joint \(\ell_\infty\) empirical Lipschitz estimate among
    30 held-out trajectories at each problem length.}
    \label{fig:empirical_lipschitz}
\end{figure}

As shown in Figure~\ref{fig:empirical_lipschitz}, the maximum Lipschitz estimate at each problem length rises
for small \(m\) and levels off around \(400\) for \(m\ge30\), with an overall
maximum of \(580\). The largest observed velocity norm is below \(10\). The
estimates thus show no continued growth with \(m\) over the tested range.

\textbf{Remark.} These estimates are not certified upper bounds.
Nevertheless, estimating Lipschitz constants via adversarial search is
common practice~\citep{yang2020closer,achour2022existence}, and the absence of
continued growth in the estimated \(L\) with $m$ still provides empirical support for the
regime of bounded Lipschitz constants assumed in \Sref{sec:vanillabarrier}.

\subsubsection{Attention Allocation}
\label{app:attention_allocation}
We measure how the first joint-attention layer of the \(\SelfCondFM\)
checkpoint for line tracing allocates attention from queries at the \(x_i\) token positions to the 
keys from three input sources at sampling step \(i\): the noisy state \(x_i\), the 
conditioning image \(w\), and the endpoint prediction \(z_i\). The
experiment uses 50 test instances with \(m=50\) and 50 NFE. At every step after
the first, we normalize the attention mass over the three sources and average
it over heads and instances.

\begin{figure}[htb]
    \centering
    \includegraphics[width=0.68\linewidth]{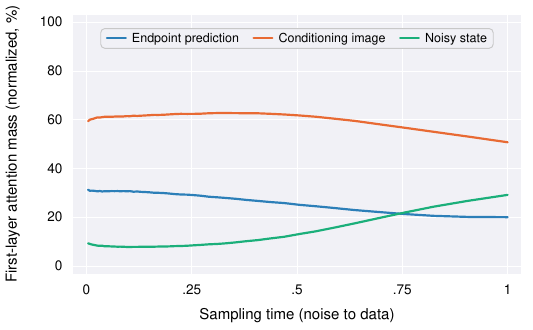}
    \caption{Normalized attention mass of the first layer to
    each input source. Endpoint prediction maintains a nontrivial share throughout the trajectory, declining nearly monotonically from $31.3\%$ to $20.0\%$, thus peaking at early steps, where noise dominates.}\label{fig:attention_allocation}
\end{figure}

As shown in Figure~\ref{fig:attention_allocation}, the attention share of the endpoint prediction stays between $20.0\%$ and $31.3\%$ and declines nearly monotonically along the sampling trajectory. Thus the model relies on $z_i$ throughout sampling and uses it most heavily at early steps, where the noisy state is less informative.

\subsubsection{Parallel and Revisable Reasoning}
\label{app:trajectory_analysis}

By inspecting the sampling trajectories of the checkpoints for long multiplication, we observe two properties of visual generative reasoning with flow matching that $\SelfCondFM$ preserves: parallel local computation and in-place error correction, which are usually not possible in autoregressive reasoning. The following paragraphs provide empirical evidence for each.

\paragraph{Parallel local computation.}
For each generated worksheet, we score every shifted partial product
$AB_i10^i$ against the reference, repeating the evaluation over every integer NFE
from $1$ to $50$. As shown in Figure~\ref{fig:parallel_reasoning}, the
accuracy of partial products for both methods stays above $98\%$ at every NFE,
even at $\text{NFE}=1$.
This shows that even a single model evaluation can produce multiple correct local results simultaneously within the same worksheet.
By contrast, common autoregressive reasoning methods, such as CoT, generate reasoning steps sequentially and do not produce multiple local results in parallel.

\begin{figure}[htb]
    \centering
    \includegraphics[width=0.68\linewidth]{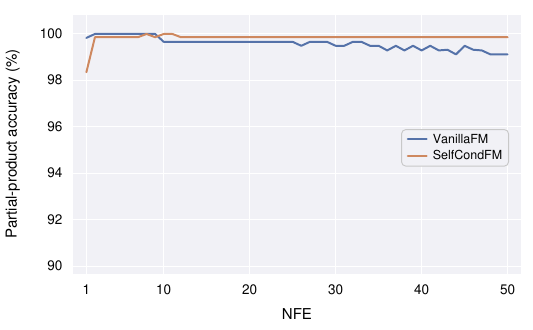}
    \caption{Parallel local computation. Accuracy of partial products on long
    multiplication at every NFE from $1$ to $50$: both methods keep nearly
    all partial products correct within a single generated worksheet.}
    \label{fig:parallel_reasoning}
\end{figure}

\paragraph{In-place error correction.}
For pairs of problems $(w,w')$ with different answers, we sample with 50 NFE, conditioning on $w$ for the first $k$ steps and swapping to $w'$ for
the rest.
We repeat this for every $k$ from $1$ to $50$, and judge each output against
the reference worksheet of $w'$.

As shown in Figure~\ref{fig:error_correction}, \(\SelfCondFM\) keeps the
accuracy of matching $w'$ around $90\%$ whenever the swap falls in the first half
of the trajectory, and the accuracy falls to zero only once $k$ exceeds
$35$. \(\VanillaFM\) also corrects early swaps in place, but its accuracy
is only $65\%$ at $k=1$ and decays to zero by $k=32$.

Both methods therefore revise a wrong generation in place in the middle of the trajectory,
with \(\SelfCondFM\) performing particularly well in the first half of the
trajectory. Autoregressive reasoning, in contrast, retains the reasoning
steps once they are generated and cannot correct them in place.

\begin{figure}[htb]
    \centering
    \includegraphics[width=0.68\linewidth]{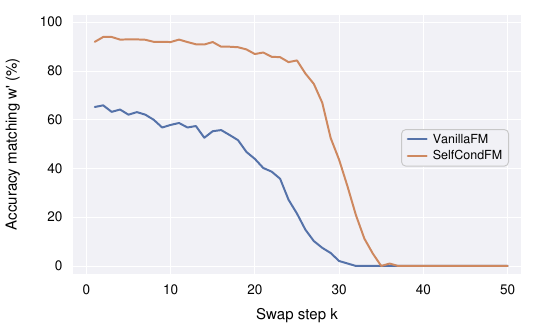}
    \caption{In-place error correction on long multiplication. The condition
    changes from $w$ to $w'$ after step $k$, and the curves show the accuracy of
    matching $w'$ as $k$ varies: the output of \(\SelfCondFM\) is redirected in
    place to the new problem when the swap falls in the first half of the
    trajectory, whereas \(\VanillaFM\) follows the new problem only
    partially.}
    \label{fig:error_correction}
\end{figure}

\end{document}